\documentclass{article}
\usepackage[accepted]{styles/icml2021/icml2021}

\usepackage{times}
\usepackage{booktabs}       
\usepackage{amsfonts}       
\usepackage{nicefrac}       
\usepackage{enumitem}
\usepackage{nicefrac}

\usepackage{amsmath,amssymb,mathrsfs}

\usepackage{footnote}
\usepackage{algorithm}
\usepackage[algo2e, linesnumbered, vlined,ruled]{algorithm2e}
\usepackage{algorithmic}

\usepackage{amsthm}

\allowdisplaybreaks

\makeatletter
\def\set@curr@file#1{\def\@curr@file{#1}} 
\makeatother

\usepackage{graphicx}
\usepackage{mathtools}
\usepackage{footnote}
\usepackage{float}
\usepackage{xspace}
\usepackage{multirow}
\usepackage{wrapfig}
\usepackage{framed}
\usepackage{bbm}
\usepackage{footnote}
\usepackage{nicefrac}
\usepackage{makecell}
\usepackage[T1]{fontenc}
\makesavenoteenv{tabular}
\makesavenoteenv{table}

\definecolor{Green}{rgb}{0.13, 0.65, 0.3}
\usepackage[]{color-edits}

\addauthor{HL}{red}
\addauthor{LC}{cyan}

\newcommand{\calA}{{\mathcal{A}}}
\newcommand{\calB}{{\mathcal{B}}}

\newcommand{\calX}{{\mathcal{X}}}
\newcommand{\calS}{{\mathcal{S}}}
\newcommand{\calF}{{\mathcal{F}}}
\newcommand{\calI}{{\mathcal{I}}}

\newcommand{\calP}{{\mathcal{P}}}

\newcommand{\V}{\text{\rm Var}}

\DeclareMathOperator*{\argmin}{argmin}
\DeclareMathOperator*{\argmax}{argmax}
\DeclarePairedDelimiter\abs{\lvert}{\rvert}

\newcommand{\eat}[1]{}

\newcommand{\inner}[2]{\left\langle #1, #2 \right\rangle}
\newcommand{\inners}[2]{\langle #1, #2\rangle}

\newcommand{\rbr}[1]{\left(#1\right)}
\newcommand{\sbr}[1]{\left[#1\right]}
\newcommand{\cbr}[1]{\left\{#1\right\}}

\newcommand{\abr}[1]{\left|#1\right|}
\newcommand{\bigO}[1]{\order\left( #1 \right)}
\newcommand{\tilO}[1]{\otil\left( #1 \right)}
\newcommand{\lowO}[1]{\lorder\left( #1 \right)}

\newcommand{\tilo}[1]{\otil( #1 )}
\newcommand{\lowo}[1]{\lorder( #1 )}
\DeclarePairedDelimiter\ceil{\lceil}{\rceil}

\newcommand{\Tmax}{\ensuremath{T_{\max}}}

\newcommand{\T}{\ensuremath{T_\star}}
\newcommand{\cmin}{\ensuremath{c_{\min}}}
\newcommand{\propers}{\ensuremath{\Pi_{\rm proper}}}

\newcommand{\reg}{\textsc{Reg}}
\newcommand{\bias}{\textsc{Bias}}

\newcommand{\err}{\textsc{Err}}

\newcommand{\aux}{\chi}

\newcommand{\PSD}{\propers}
\newcommand{\SA}{\Gamma}
\newcommand{\qfeat}{\phi} 

\newcommand{\regz}{\psi} 

\newcommand{\sums}{\sum_{s\in\calS}}
\newcommand{\suma}{\sum_{a\in\calA_s}}

\newcommand{\sumtilsaf}[1][s, a]{\sum_{(#1)\in\tilSA}}
\newcommand{\sumsah}{\sum_{(s, a), h}}
\newcommand{\sumsa}[1][s, a]{\sum_{(#1)}}

\newcommand{\sumh}{\sum_{h=1}^H}

\newcommand{\sumk}{\sum_{k=1}^K}

\newcommand{\hatb}{\widehat{b}}
\newcommand{\hatc}{\widehat{c}}

\newcommand{\hatq}{\widehat{q}}
\newcommand{\optpi}{\pi^\star}
\newcommand{\optq}{q_{\optpi}}
\newcommand{\tiloptq}{q_{\tiloptpi}}

\newcommand{\tilN}{\widetilde{N}}
\newcommand{\tilR}{\widetilde{R}}

\newcommand{\hatP}{\widehat{P}}
\newcommand{\tils}{\widetilde{s}}
\newcommand{\tila}{\widetilde{a}}
\newcommand{\tilc}{\widetilde{c}}
\newcommand{\tilf}{\widetilde{f}}

\newcommand{\tilM}{\widetilde{M}}
\newcommand{\tilP}{\widetilde{P}}
\newcommand{\tilA}{\widetilde{\calA}}
\newcommand{\tilS}{\widetilde{\calS}}
\newcommand{\tilSA}{\widetilde{\SA}}
\newcommand{\tilD}{\widetilde{D}}
\newcommand{\tilpi}{{\widetilde{\pi}}}
\newcommand{\tiloptpi}{{\widetilde{\pi}^\star}}
\newcommand{\tilDelta}{{\widetilde{\Delta}}}
\newcommand{\tilPi}{\widetilde{\Pi}}
\newcommand{\N}{\mathbf{N}} 
\newcommand{\M}{\mathbf{M}}
\newcommand{\Nk}{\N^+_k}

\newcommand{\Nc}{\N^c_k}
\newcommand{\Mk}{\M_k}

\newcommand{\h}[1]{\vec{h}\circ #1}
\newcommand{\unk}{U} 
\newcommand{\A}{A}

\newcommand{\Ak}{\A_k}
\newcommand{\lfs}{\calX} 
\newcommand{\tcs}{\Lambda_P} 

\newcommand{\RUN}{\text{RUN}}
\newcommand{\TransEst}{\text{TransEst}}

\newcommand{\Qdtd}{H^3S^2A} 
\newcommand{\Qumq}{H^3S^3A^2} 

\newcommand{\field}[1]{\mathbb{#1}}

\newcommand{\fR}{\field{R}}

\newcommand{\fN}{\field{N}}
\newcommand{\E}{\field{E}}
\newcommand{\fV}{\field{V}}

\newcommand{\Ind}{\field{I}}

\newcommand{\defeq}{\stackrel{\rm def}{=}}

\newtheorem{lemma}{Lemma}
\newtheorem{theorem}{Theorem}

\newtheorem{definition}{Definition}

\newcommand{\order}{\ensuremath{\mathcal{O}}}
\newcommand{\lorder}{\ensuremath{\Omega}}
\newcommand{\otil}{\ensuremath{\tilde{\mathcal{O}}}}

\usepackage{prettyref}
\newcommand{\pref}[1]{\prettyref{#1}}
\newcommand{\pfref}[1]{Proof of \prettyref{#1}}
\newcommand{\savehyperref}[2]{\texorpdfstring{\hyperref[#1]{#2}}{#2}}
\newrefformat{eq}{\savehyperref{#1}{Eq.~\textup{(\ref*{#1})}}}
\newrefformat{eqn}{\savehyperref{#1}{Equation~\ref*{#1}}}
\newrefformat{lem}{\savehyperref{#1}{Lemma~\ref*{#1}}}
\newrefformat{def}{\savehyperref{#1}{Definition~\ref*{#1}}}
\newrefformat{line}{\savehyperref{#1}{Line~\ref*{#1}}}
\newrefformat{thm}{\savehyperref{#1}{Theorem~\ref*{#1}}}
\newrefformat{corr}{\savehyperref{#1}{Corollary~\ref*{#1}}}
\newrefformat{cor}{\savehyperref{#1}{Corollary~\ref*{#1}}}
\newrefformat{sec}{\savehyperref{#1}{Section~\ref*{#1}}}
\newrefformat{subsec}{\savehyperref{#1}{Section~\ref*{#1}}}
\newrefformat{app}{\savehyperref{#1}{Appendix~\ref*{#1}}}
\newrefformat{assum}{\savehyperref{#1}{Assumption~\ref*{#1}}}
\newrefformat{ex}{\savehyperref{#1}{Example~\ref*{#1}}}
\newrefformat{fig}{\savehyperref{#1}{Figure~\ref*{#1}}}
\newrefformat{alg}{\savehyperref{#1}{Algorithm~\ref*{#1}}}
\newrefformat{rem}{\savehyperref{#1}{Remark~\ref*{#1}}}
\newrefformat{conj}{\savehyperref{#1}{Conjecture~\ref*{#1}}}
\newrefformat{prop}{\savehyperref{#1}{Proposition~\ref*{#1}}}
\newrefformat{proto}{\savehyperref{#1}{Protocol~\ref*{#1}}}
\newrefformat{prob}{\savehyperref{#1}{Problem~\ref*{#1}}}
\newrefformat{claim}{\savehyperref{#1}{Claim~\ref*{#1}}}
\newrefformat{que}{\savehyperref{#1}{Question~\ref*{#1}}}
\newrefformat{op}{\savehyperref{#1}{Open Problem~\ref*{#1}}}
\newrefformat{fn}{\savehyperref{#1}{Footnote~\ref*{#1}}}
\newrefformat{tab}{\savehyperref{#1}{Table~\ref*{#1}}}
\newrefformat{fig}{\savehyperref{#1}{Figure~\ref*{#1}}}

\DeclareOldFontCommand{\rm}{\normalfont\rmfamily}{\mathrm}
\DeclareOldFontCommand{\it}{\normalfont\itshape}{\mathit}
\icmltitlerunning{Finding the Stochastic Shortest Path with Low Regret}
\usepackage{times}

\usepackage{microtype}
\usepackage{graphicx}
\usepackage{subfigure}
\usepackage{booktabs} 

\usepackage[colorlinks=true, linkcolor=blue, citecolor=blue, breaklinks=true]{hyperref}

\begin{document}

\twocolumn[
\icmltitle{Finding the Stochastic Shortest Path with Low Regret: \\ The Adversarial Cost and Unknown Transition Case}



\icmlsetsymbol{equal}{*}

\begin{icmlauthorlist}
\icmlauthor{Liyu Chen}{usc}
\icmlauthor{Haipeng Luo}{usc}
\end{icmlauthorlist}

\icmlaffiliation{usc}{University of Southern California}

\icmlcorrespondingauthor{Liyu Chen}{liyuc@usc.edu}

\icmlkeywords{Machine Learning, ICML}

\vskip 0.3in
]



\printAffiliationsAndNotice{}  

\begin{abstract}
We make significant progress toward the stochastic shortest path problem with adversarial costs and unknown transition. 
Specifically, we develop algorithms that achieve $\tilo{\sqrt{S^2AD\T K}}$ regret for the full-information setting and $\tilo{\sqrt{S^3A^2D\T K}}$ regret for the bandit feedback setting, where $D$ is the diameter, $\T$ is the expected hitting time of the optimal policy, $S$ is the number of states, $A$ is the number of actions, and $K$ is the number of episodes.
Our work strictly improves~\citep{rosenberg2020adversarial} in the full information setting, 
extends~\citep{chen2020minimax} from known transition to unknown transition, 
and is also the first to consider the most challenging combination: bandit feedback with adversarial costs and unknown transition.
To remedy the gap between our upper bounds and the current best lower bounds constructed via a stochastically oblivious adversary,
we also propose algorithms with near-optimal regret for this special case.
\end{abstract}

\section{Introduction}\label{sec:intro}

\renewcommand{\arraystretch}{1.3}
\begin{table*}[t]
	\centering
	\caption{Summary of our results. Here, $D, S, A$ are the diameter, number of states, and number of actions of the MDP, $\T$ is the expected hitting time of the optimal policy, and $K$ is the number of episodes.
	All algorithms can be implemented efficiently.
	Our results strictly improve that of \citep{rosenberg2020adversarial} in the full information setting, and are the first to consider the bandit setting with unknown transition. 
	Lower bounds here are a direct combination of lower bounds for stochastic costs and known transition~\citep{chen2020minimax} 
	and the lower bound for fixed costs and unknown transition~\citep{cohen2020near}. 
	}
	\label{tab:unknown}
	\vspace{5pt}
	\begin{tabular}{| c | c | c | c | }
		\hline
		 & Adversarial costs & Stochastic costs (\pref{thm:iid}) & Lower bounds \\
		 \hline
		 Full information & $\tilo{\sqrt{S^2AD\T K}}$ (\pref{thm:full-unknown}) & $\tilo{\sqrt{D\T K}+DS\sqrt{AK}}$ & $\lowo{ \sqrt{D\T K} + D\sqrt{SAK} }$ \\
		 \hline
		 Bandit feedback & $\tilo{\sqrt{S^3A^2D\T K}}$  (\pref{thm:bandit-unknown}) & $\tilo{\sqrt{SAD\T K}+DS\sqrt{AK}}$ & $\lowo{ \sqrt{SAD\T K} + D\sqrt{SAK} }$ \\
		 \hline
	\end{tabular}
\end{table*}

We study the stochastic shortest path (SSP) problem, where a learner aims to find the goal state with minimum total cost.
The environment dynamics are modeled as a Markov Decision Process (MDP) with $S$ states, $A$ actions, and a fixed and unknown transition function.
The learning proceeds in $K$ episodes,
where in each episode, starting from a fixed initial state, the learner sequentially selects an action, incurs a cost, and transits to the next state sampled from the transition function.
The episode ends when the learner reaches a fixed goal state.
We focus on regret minimization in SSP and measure the performance of the learner by the difference between her total cost over the $K$ episodes and that of the best fixed policy in hindsight.

The special case of SSP where an episode is guaranteed to end within a fixed number of steps is extensively studied in recent years (often known as episodic finite-horizon reinforcement learning or loop-free SSP).
The general (and also the more practical) case, on the other hand, has only been recently studied:
\citet{tarbouriech2019no} and~\citet{cohen2020near} develop algorithms with sub-linear regret for the case with fixed or i.i.d. costs.
Adversarial costs is later studied by
\citet{rosenberg2020adversarial}
in the full-information setting (where the cost is revealed at the end of each episode). 
The minimax regret for adversarial costs and known transition is then fully characterized in a recent work by \citet{chen2020minimax}, in both the full-information setting and the bandit feedback setting (where only the cost of visited state-action pairs is revealed).


In this work, we further extend our understanding of general SSP with adversarial costs and unknown transition, for both the full-information setting and the bandit setting.
More specifically, our results are (see also \pref{tab:unknown}):
\begin{itemize}
\item
(\pref{sec:full-unknown}) In the full-information setting, we develop an algorithm that achieves $\tilo{\sqrt{S^2AD\T K}}$ regret with high probability, where $D$ is the diameter of the MDP and $\T$ is the expected time for the optimal policy to reach the goal state.
This improves over the best existing bound $\tilo{\frac{1}{\cmin}\sqrt{S^2AD^2K}}$ or $\tilo{\sqrt{S^2A\T^2}K^{3/4}+D^2\sqrt{K}}$ from~\citep{rosenberg2020adversarial},
where $\cmin\in[0,1]$ is a global lower bound of the cost for any state-action pair (it can be shown that $\T \leq D/\cmin$).

\item
(\pref{sec:bandit-unknown})
In the bandit setting, we develop another algorithm that achieves $\tilo{\sqrt{S^3A^2D\T K}}$ regret with high probability, 
which, as far as we know, is the first result for this most challenging setting (bandit feedback, adversarial costs, and unknown transition).

\item
(\pref{sec:iid})
By combining previous results, it can be shown that the lower bound for the full-information and the bandit setting are $\lowo{ \sqrt{D\T K} + D\sqrt{SAK}}$ and $\lowo{ \sqrt{SAD\T K} + D\sqrt{SAK}}$ respectively, 
establishing a gap from our upper bounds.
Noting that these lower bounds are constructed with a stochastically oblivious adversary,
we propose another algorithm for this special case with near-optimal regret bounds that are only $\sqrt{S}$ factor larger than the lower bounds, a gap that is still open even for loop-free SSP~\citep{rosenberg2019online,jin2019learning}.
Note that this setting is slightly different from and harder than existing i.i.d. cost settings; see discussions in ``Related work'' below.

\end{itemize}

\paragraph{Technical contributions}
Our algorithms are largely based on those from the recent work of~\citep{chen2020minimax} for the known transition setting.
However, learning with unknown transition and carefully controlling the transition estimation error requires several new ideas.
First, we extend the loop-free reduction of \citep{chen2020minimax} to the unknown transition setting (\pref{sec:reduction}).
Then, combining a Bellman type law of total variance~\citep{azar2017minimax} and a linear form of the variance of actual costs,  we show that, importantly, the bias introduced by transition estimation is well controlled via the so-called skewed occupancy measure proposed by~\citet{chen2020minimax}.
This leads to our algorithm for the full information setting.
For the bandit setting, apart from the techniques above and those from~\citep{chen2020minimax}, we further propose and utilize two optimistic cost estimators inspired by the idea of upper occupancy bounds from~\citet{jin2019learning} for loop-free SSP.

Finally, for the weaker stochastically oblivious adversaries, 
we further augment the loop-free reduction to allow the learner to switch to a fast policy at any time step if necessary, which is crucial to ensure the near-optimal regret for our simple optimism-based algorithm.

\paragraph{Related work}
The SSP problem was studied earlier mostly from the control aspect where the goal is to find the optimal policy efficiently with all parameters known~\citep{bertsekas1991analysis,bertsekas2013stochastic}.
Regret minimization in SSP was first studied in~\citep{tarbouriech2019no, cohen2020near}, with fixed and known costs and unknown transition.
Although their results can be generalized to i.i.d. costs as discussed in~\citep[Appendix I.1]{tarbouriech2019no},
this is in fact different from our stochastic cost setting.
Indeed, in their setting, the cost of each state-action pair is drawn (independently of other pairs and other episodes) every time it is visited, and is revealed to the learner immediately.
On the other hand, in our stochastic setting, the costs of all state-action pairs in each episode are jointly drawn from a fixed distribution (independently of other episodes; but costs of different pairs could be correlated) and fixed throughout the episode,
and any information about the costs is only revealed after the episode ends.
As argued in~\citep[Section 3.1]{chen2020minimax}, our setting is information-theoretically harder as an extra dependence on $\T$ is unavoidable here,
and thus our bounds for stochastic costs are incomparable to these two works.
To distinguish these two different settings, we sometime refer to ours as a setting with a stochastically oblivious adversary.

\citep{rosenberg2020adversarial} is the first work that studies SSP with adversarial costs with either known or unknown transition, but only in the full-information setting.
Later, \citep{chen2020minimax} develops efficient and minimax optimal algorithms for both the full-information setting and the bandit feedback setting, but only with known transition.
As mentioned, our results significantly improve and extend these two works.
One of the key technical contributions of~\citep{chen2020minimax} is the loop-free reduction, which, as discussed by the authors, is readily applied to the unknown transition case, but leads to suboptimal bounds with unnecessary dependence on other parameters if applied directly.
Our algorithms are built on top of an extension of this loop-free reduction, and we overcome the technical difficulty they run into via a more careful analysis showing that the transition estimation error can in fact be well controlled using their idea of skewed occupancy measure.


As mentioned, the special case of loop-free SSP has been extensively studied in recent years, for both fixed or i.i.d. costs (see e.g., \citep{azar2017minimax,jin2018q,zanette2019tighter,efroni2020optimistic}) and adversarial costs (see e.g.,  \citep{neu2012adversarial,zimin2013online,rosenberg2019online,jin2019learning,shani2020optimistic,cai2020provably}).
In particular, the idea of upper occupancy bound from~\citep{jin2019learning}, used to construct an optimistic cost estimator with a confidence set of the transition, is also one key technique we adopt in the bandit setting.

\section{Preliminaries}\label{sec:prelim}

We largely follow the notations of~\citep{chen2020minimax}.
An SSP instance consists of an MDP $M=\left( \calS, s_0, g, \calA, P \right)$ and a sequence of $K$ cost functions $\{c_k\}_{k=1}^K$.
Here, $\calS$ is a finite state space, $s_0 \in \calS$ is the initial state,
$g \notin \calS$ is the goal state,
$\calA=\{\calA_s\}_{s\in\calS}$ is a finite action space where $\calA_s$ is the available action set at state $s$.
Let $\SA = \{(s,a): s\in \calS, a \in \calA_s\}$ be the set of all valid state-action pairs.
The transition function $P: \SA\times(\calS\cup\{g\})\rightarrow [0, 1]$ is such that $P(s'| s, a)$ specifies the probability of transiting to the next state $s'$ after taking action $a \in \calA_s$ at state $s$, and we have $\sum_{s'\in\calS\cup\{g\}}P(s'|s, a)=1$ for each $(s,a) \in \SA$. 
Finally, $c_k: \SA \rightarrow [0,1]$ is the cost function that specifies the cost for each state-action pair during episode $k$.
We denote by $S=\lvert\calS\rvert$ and $A=(\sums |\calA_s|)/S$ the total number of states and the average number of available actions respectively.

The learner interacts with the MDP through $K$ episodes, not knowing the transition function $P$ nor the cost functions $\{c_k\}_{k=1}^K$ ahead of time.
In each episode $k = 1, \ldots, K$,
the adversary first decides the cost function $c_k$, which,
for the majority of this work, can depend on the learner's algorithm and the randomness before episode $k$ in an arbitrary way (known as an adaptive adversary).
Only in \pref{sec:iid}, we switch to a weaker stochastically oblivious adversary who draws $c_k$ independently from a fixed but unknown distribution.
In any case, without knowing $c_k$,  the learner decides which action to take in each step of the episode, starting from the initial state $s_0$ and ending at the goal state $g$.
More precisely, in each step $i$ of episode $k$, the learner observes its current state $s_k^i$ (with $s_k^1 = s_0$).
If $s_k^i \neq g$, the learner selects an action $a_k^i \in \calA_{s_k^i}$ and transits to the next state $s_k^{i+1}$ sampled from $P(\cdot| s_k^i, a_k^i)$;
otherwise, the episode ends, and we let $I_k$ be the number of steps in this episode such that $s_k^{I_k+1}= g$.

After each episode $k$ ends, the learner receives some feedback on the cost function $c_k$.
In the {\it full-information} setting, the learner observes the entire $c_k$,
while in the more challenging {\it bandit feedback} setting, the learner only observes the costs of the visited state-action pairs, that is, $c_k(s_k^i, a_k^i)$ for $i = 1, \ldots, I_k$.

\paragraph{Important concepts}
We introduce several necessary concepts before discussing the goal of the learner.
A stationary policy is a mapping $\pi$ such that $\pi(a|s)$ specifies the probability of taking action $a \in \calA_s$ in state $s$.
It is deterministic if for all $s$, $\pi(a|s)=1$ holds for some action $a$ (in which case we write $\pi(s)=a$). 
A policy is {\it proper} if executing it in the MDP starting from any state ensures that the goal state is reached within a finite number of steps with probability $1$ (and improper otherwise).
We denote by $\propers$ the set of all deterministic and proper policies, and make the basic assumption $\propers\neq \emptyset$ following~\citep{rosenberg2020adversarial,chen2020minimax}.

We denote by $T^{\pi}(s)$ the expected hitting time it takes for a stationary policy $\pi$ to reach $g$ starting from state $s$.
The \textit{fast policy} $\pi^f$ is a deterministic policy that achieves the minimum expected hitting time starting from any state (among all stationary policies).
The diameter of the MDP is defined as $D=\max_{s\in\calS}\min_{\pi\in\PSD}T^{\pi}(s)=
\max_{s\in\calS} T^{\pi^f}(s)$, which is the ``largest shortest distance'' between any state and the goal state.

Given a transition function $P$, a cost function $c$, and a proper policy $\pi$, we define the cost-to-go function $J^{P, \pi, c}:\calS\rightarrow [0,\infty)$ such that 
$J^{P, \pi, c}(s)=\E\left[\left.\sum_{i=1}^{I}c(s^i, a^i)\right| P, \pi, s^1=s\right]$,
where the expectation is over the randomness of the action $a^i$ drawn from $\pi(\cdot|s^i)$, the state $s^{i+1}$ drawn from $P(\cdot|s^i, a^i)$, and the number of steps $I$ before reaching $g$.
Similarly, we also define the state-action value function $Q^{P,\pi,c}: \SA\rightarrow [0,\infty)$ such that $Q^{P, \pi, c}(s, a)=\E\left[\left.\sum_{i=1}^{I}c(s^i, a^i)\right| P, \pi, s^1=s, a^1=a\right]$.
We use $J_k^{P, \pi}$ and $Q_k^{P, \pi}$ to denote the cost-to-go and state-action function with respect to the cost $c_k$.
When there is no confusion, we also ignore the dependency on the transition function (especially when $P$ is the true transition function of the MDP) and write $J^{P, \pi, c}$ as $J^{\pi, c}$, $J_k^{P, \pi}$ as $J_k^{\pi}$, $Q^{P, \pi, c}$ as $Q^{\pi, c}$, and $Q_k^{P, \pi}$ as $Q_k^{\pi}$.

\paragraph{Learning objective}
The learner's goal is to minimize her {\it regret}, defined as the difference between her total cost and the total expected cost of the best deterministic proper policy in hindsight:
\begin{align*}
	R_K = \sumk\sum_{i=1}^{I_k}c_k(s_k^i, a_k^i) - \sumk J^{\optpi}_k(s_0),
\end{align*}
where $\optpi\in\argmin_{\pi\in\PSD}\sumk J^{\pi}_k(s_0)$ is the optimal stationary and proper policy, which is referred to as optimal policy in the rest of the paper. 
By the Markov property, $\optpi$ is in fact also the optimal policy starting from any other state, that is, $\optpi\in\argmin_{\pi\in\PSD}\sumk J^{\pi}_k(s)$ for any $s \in \calS$.
As in~\citep{chen2020minimax},
the following two quantities related to $\optpi$ play an important role:
its expected hitting time starting from the initial state $\T = T^{\optpi}(s_0)$
and its largest expected hitting time starting from any state $\Tmax = \max_s T^{\optpi}(s)$.
\citet{chen2020minimax} show that $\Tmax \leq \frac{D}{\cmin}$ where 
$\cmin = \min_{k}\min_{(s,a)} c_k(s,a)$ is the minimum possible cost. 
For ease of presentation, we assume $D\leq \T$ to simplify our bounds.

\paragraph{Knowledge on key parameters}
Our algorithms require the knowledge of $\T$ and $\Tmax$, similarly to most algorithms of~\citep{chen2020minimax}.
This requirement is seemingly restrictive, especially when against an adaptive adversary, in which case $\T$ and $\Tmax$ depend on the behavior of both the algorithm itself and the adversary.
However, we argue that our results are still meaningful:
First, for an oblivious adversary, $\T$ and $\Tmax$ are fixed unknown quantity independent of the learner's behavior.
Many works in online learning indeed start with assuming knowledge on such quantities to get a better understanding of the problem (and to tune these
hyperparameters empirically), before one can eventually develop a fully parameter-free algorithm. Thus, as the first step, we believe that our work is still valuable.
Second, in the lower bound construction~\cite{chen2020minimax}, $\T$ is also known to the learner, meaning that knowing $\T$ does not make the problem any easier information-theoretically.
Finally, to emphasize the difficulty of removing this requirement, we note that this is still open even with known transition when considering high-probability bounds.
\citet{chen2020minimax} were able to resolve this for expected regret bounds, but extending their techniques to high-probability bounds is related to deriving
a high-probability bound for the so-called multi-scale expert problem, which is also still open~\citep[Appendix A]{chen2021impossible}.

On the other hand, we also emphasize that our main improvement compared to \citep{rosenberg2020adversarial} is not due to the knowledge of these parameters.
Indeed, under the same setup where these parameters are unknown, we can still run our algorithms by replacing $\T$ with its upper bound $D/\cmin$ and $\Tmax$ with some lower order term $o(K)$, and this still leads to better results compared to~\citep{rosenberg2020adversarial}.
Details are deferred to \pref{app:tune T}.

Finally, for simplicity, we also assume that $D$ is known, but our results can be extended even if $D$ is unknown; see \pref{app:tune-d}.

\paragraph{Occupancy measure}
Occupancy measure plays a key role in solving SSP with adversarial costs, in both the loop-free case~\citep{neu2012adversarial,zimin2013online,rosenberg2019online,jin2019learning} and the general case~\citep{rosenberg2020adversarial, chen2020minimax}.
A proper stationary policy $\pi$ and a transition function $P$ induce an occupancy measure $q_{P, \pi} \in \fR_{\geq 0}^{\SA\times(\calS\cup \{g\})}$ such that $q_{P, \pi}(s, a, s')$ is the expected number of visits to state-action-afterstate triplet $(s, a, s')$ when executing $\pi$ in an MDP with transition $P$, that is:
$
	q_{P, \pi}(s, a, s') = \E\left[\left. \sum_{i=1}^I \Ind\{s^i=s, a^i=a, s^{i+1}=s' \}\right| P, \pi, s^1=s_0 \right].
$
When $P$ is clear from the context (which is usually the case if it is the true transition), we omit the $P$ dependence and only write $q_\pi$.
We also let $q_{\pi}(s, a)=\sum_{s'}q_{\pi}(s, a, s')$ be the expected number of visits to  state-action pair $(s, a)$ and $q_{\pi}(s) = \suma q_{\pi}(s, a)$ be the expected number of visits to state $s$ when executing $\pi$.
Note that, given a function $q: \SA\times(\calS\cup \{g\}) \rightarrow [0, \infty)$, if it corresponds to an occupancy measure, then the corresponding policy $\pi_q$ can be obtained via $\pi_q(a| s) \propto q(s,a)$, and the corresponding transition function can be obtained via $P_q(s'|s, a)\propto q(s, a, s')$.
Also note that $T^{\pi}(s_0) = \sumsa q_\pi(s,a) = \sums q_\pi(s)$.

Occupancy measures allow one to turn the problem into a form of online linear optimization where Online Mirror Descent is a standard tool.
Indeed, we have $J^{\pi}_k(s_0) = \sum_{(s,a)\in\SA} q_{\pi}(s, a)c_k(s, a) = \inner{q_{\pi}}{c_k}$,
and if the learner executes a stationary proper policy $\pi_k$ in episode $k$, then the expected regret can be written as 
$\E[R_K] = \E\left[\sumk J^{\pi_k}_k(s_0) - J^{\optpi}_k(s_0) \right]
= \E\left[ \sumk\inner{q_{\pi_k} - q_{\optpi}}{c_k} \right]$,
exactly in the form of online linear optimization.



\paragraph{Other notations}

We let $N_k(s, a)$ denote the (random) number of visits of the learner to $(s,a)$ during episode $k$, so that the regret can be re-written as $R_K = \sumk \inner{N_k - q_{\optpi}}{c_k}$.
Denote by $\Ind_k(s, a)$ the indicator of whether $c_k(s, a)$ is revealed to the learner in episode $k$, so that in the full information setting $\Ind_k(s, a)=1$ always holds, 
and in the bandit feedback setting $\Ind_k(s, a)$ is also the indicator of whether $(s, a)$ is ever visited by the learner.
Throughout the paper, we use the notation $\inner{f}{g}$ as a shorthand for $\sums f(s)g(s)$, $\sumsa f(s,a)g(s,a)$, $\sumh\sumsa f(s,a,h)g(s,a,h)$, or $\sumsa\sum_{s'}\sumh f(s, a, s', h)g(s, a, s', h)$ when $f$ and $g$ are functions in $\fR^{\calS}$, $\fR^{\SA}$, $\fR^{\SA\times[H]}$ or $\fR^{\SA\times(\calS\cup\{g\})\times[H]}$ (for some $H$) respectively.
Denote $\odot$ as the Hadamard product of tensors, so that $(u\odot v)_i=u_i \cdot v_i$ (e.g. the feedback on cost for both settings is thus $c_k \odot \Ind_k$).
Let $\calF_k$ denote the $\sigma$-algebra of events up to the beginning of episode $k$,
and $\E _k$ be a shorthand of $\E[\cdot|\calF_k]$.
To be specific, $c_k$ and the learner's policy in episode $k$ is already determined at the beginning of episode $k$, and the randomness in $\E[\cdot|\calF_k]$ is w.r.t the learner's actual trajectory in episode $k$.
For a convex function $\psi$, the Bregman divergence between $u$ and $v$ is defined as: $D_{\psi}(u, v)=\psi(u)-\psi(v)-\inner{\nabla\psi(v)}{u-v}$.
For an integer $n$, $[n]$ denotes the set $\{1, \ldots, n\}$.

\section{Loop-free Reduction with Unknown Transition}\label{sec:reduction}

When the transition is known, \cite{chen2020minimax} show that it is possible to approximate a general SSP by a loop-free SSP in a way such that any policy in the loop-free instance can be transformed to a policy in the original instance with only $\tilo{1}$ additional overhead in the final regret.
More importantly, this loop-free reduction provides simpler forms for some variance-related quantities, which is the key in achieving high probability bounds and dealing with bandit feedback.
As the first step, we extend this loop-free reduction to the unknown transition setting, and show that the additional regret is also very small.

\paragraph{Loop-free instance}
The construction of the converted loop-free SSP instance is essentially the same as that in \cite{chen2020minimax}: for the first $H_1$ steps, we duplicate each state by attaching it with a time step $h$, then we connect all states to some virtual {\it fast} state that lasts for another $H_2$ steps.
We show the definition below for completeness (with slight modifications for our purposes),
and then discuss what the necessary changes are to complete the reduction using this loop-free SSP when the transition is unknown.
\begin{definition}{\citep[Definition 5]{chen2020minimax}}\label{def:reduction}
	For an SSP instance $M=(\calS, s_0, g, \calA, P)$ with cost functions $c_{1:K}$,
	we define, for horizon parameters $H_1, H_2 \in \fN$, another loop-free SSP instance $\tilM=(\tilS, \tils_0, g, \tilA, \tilP)$ with cost function $\tilc_{1:K}$ as follows: 
	\begin{itemize}
	\item $\tilS=\lfs\times [H]$ where $\lfs=\calS\cup\{s_f\}$, $s_f$ is an artificially added ``fast'' state, and $H = H_1 + H_2$.
	\item $\tils_0 = (s_0, 1)$, and the goal state $g$ remains the same.
	\item $\tilA = \calA\cup\{a_f\}$, where $a_f$ is an artificially added action. The available action set at $(s,h)$ is $\calA_s$ for all $s\neq s_f$ and $h\in [H]$, and the only available action at $(s_f, h)$ for $h\in[H]$ is $a_f$.
	\item Transition from $(s,h)$ to $(s', h')$ is only possible when $h'=h+1$:
	for the first $H_1$ layers, the transition follows the original MDP in the sense that $\tilP((s', h+1)|(s,h),a) = P(s'|s,a)$ and $\tilP(g|(s,h),a) = P(g|s,a)$ for all $h < H_1$ and $(s,a)\in\SA$;
	from layer $H_1$ to layer $H$, all states transit to the fast state: $\tilP((s_f, h+1)|(s,h),a) = 1$ for all $H_1 \leq h < H$ and $(s,a)\in \tilSA \triangleq \SA \cup \{(s_f, a_f)\}$;
	finally, the last layer transits to the goal state always: $\tilP(g|(s,H),a)=1$ for all $(s,a) \in \tilSA$. For notational convenience, we also write $\tilP((s', h+1)|(s, h), a)$ as $P(s'|s, a, h)$, and $\tilP(g|(s, h), a)$ as $P(g| s, a, h)$.
	\item 
	Cost function is such that $\tilc_k((s,h),a) = c_k(s,a)$ and $\tilc_k((s_f,h),a_f) = 1$ for all $(s,a)\in\SA$ and $h\in[H]$. For notational convenience, we also write $\tilc_k((s,h),a)$ as $c_k(s,a,h)$.
	\end{itemize}
\end{definition}

For notations related to the loop-free version, we often use a tilde symbol to distinguish them from the original counterparts (such as $\tilM$ and $\tilS$),
and for a function $\tilf((s,h),a)$ or $\tilf((s, h), a, (s', h+1))$ that takes a state-action pair or a state-action-afterstate triplet in $\tilM$ as input,
we often simplify it as $f(s, a, h)$ (such as $c_k$) or $f(s, a, s', h)$ (such as $q$ and $P$).
For such a function, we will also use the notation $\h{f} \in \fR^{\tilSA\times[H]}$ (or $\h{f} \in \fR^{\tilSA\times\lfs\times[H]}$) such that $(\h{f})(s, a, h)=h\cdot f(s, a, h)$ (or $(\h{f})(s, a, s', h)=h\cdot f(s, a, s', h)$).
Similarly, for a function $f \in \fR^{\tilSA}$, we use the same notation $\h{f} \in \fR^{\tilSA\times[H]}$ such that $(\h{f})(s, a, h)=h\cdot f(s, a)$.
Finally, for a occupancy measure $q \in [0,1]^{\tilSA\times \lfs \times [H]}$ of $\tilM$,
we write $q(s,a,h) = \sum_{s' \in \lfs} q(s,a,s',h)$ and $q(s,a) = \sum_{h=1}^H q(s,a,h)$.

\paragraph{The reduction}
Now, we are ready to describe the reduction, that is, how one can convert an algorithm for $\tilM$ to an algorithm for $M$.
Specifically, given policies $\tilpi_1, \ldots, \tilpi_K$ for $\tilM$,
we define a sequence of {\it non-stationary} policies $\sigma(\tilpi_1), \ldots, \sigma(\tilpi_K)$ for $M$ as follows. 
For each episode $k$, during the first $h \leq H_1$ steps, we follow $\tilpi(\cdot|(s,h))$ when at state $s$.
After the first $H_1$ steps (if not reaching $g$ yet), 
\citet{chen2020minimax} simply execute the fast policy $\pi^f$, available since the transition is known, to reach the goal state as soon as possible.
In our case with unknown transition, we propose to approximate the fast policy's behavior by running the Bernstein-base algorithm of~\citep{cohen2020near} designed for the fixed cost setting and pretending that all costs are $1$.
More precisely, we initialize a copy of their algorithm (that we call Bernstein-SSP) for $M$ (not $\tilM$) ahead of time, and whenever the learner does not reach the goal within $H_1$ steps in some episode, we invoke Bernstein-SSP as if this is a new episode for this algorithm, follow its decisions until reaching $g$, and always feed it a cost of $1$ for all state-action pairs.\footnote{%
This means that Bernstein-SSP is dealing with different initial states for different episodes, which is not exactly the same setting as the original work of~\citep{cohen2020near} but makes no real difference in their regret guarantee as pointed out in~\citep[Appendix C]{tarbouriech2020provably}.
}
We describe this converted policy in the procedure RUN (\pref{alg:unknown-run}).

\begin{algorithm}[t]
\caption{RUN$(\tilpi, \calB)$}
\label{alg:unknown-run}
	\textbf{Input:} a policy $\tilpi$ for $\tilM$ and a Bernstein-SSP instance $\calB$.
	
	\textbf{Initialize:} $s^1 = s_0$ and $h=1$.
	
	\While{$s^h \neq g$ and $h \leq H_1$}{
		Draw action $a^h \sim \tilpi(\cdot |(s^h,h))$.
		If $a^h = a_f$, break.\footnotemark 
		
		Play $a^h$, observe $s^{h+1}$, increment $h \leftarrow h+1$.
	}
	
    \If{$s^h\neq g$}{	
	Invoke $\calB$ with a new episode starting with state $s^h$, follow its decision until reaching $g$, and always feed it cost $1$ for all state-action pairs. 
	}
	\textbf{Return:} trajectory $\{s^1, a^1, s^2, a^2, \ldots, a^{h-1}, s^h\}$.
\end{algorithm}
\footnotetext{This if statement is only necessary for \pref{sec:iid}. \label{fn:break}}

The rationale of using Bernstein-SSP in this way is simply because when the costs are all $1$, the fast policy is exactly the optimal policy, and since Bernstein-SSP guarantees low regret against the optimal policy in the fixed cost setting, it behaves similarly to the fast policy in the long run in our reduction.


This allows us to mostly preserve the properties of the reduction of~\citep{chen2020minimax}.
To state these properties, we need the following notations.
When executing $\sigma(\tilpi_k)$ in $M$ for episode $k$, we adopt the notation $\tilN_{k}$ and let $\tilN_{k}(s, a, h)$ be $1$ if $(s,a)$ is visited at time step $h \leq H_1$, or $0$ otherwise; and $\tilN_{k}(s_f, a_f, h)$ be $1$ if $H_1 < h \leq H$ and the goal state $g$ is not reached within $H_1$ steps, or 0 otherwise.
Clearly, $\tilN_{k}$ for $\tilM$ is the analogue of $N_k$ for $M$, and $\tilN_{k}(s, a, h)$ follows the same distribution as the number of visits to state-action pair $((s, h),a)$ when executing $\tilpi$ in $\tilM$. 
In addition, define a deterministic policy $\tiloptpi$ for $\tilM$ that mimics the behavior of $\optpi$ in the sense that $\tiloptpi(s,h) = \optpi(s)$ for $s\in\calS$ and $h\leq H_1$ (for larger $h$, $s$ has to be $s_f$ and the only available action is $a_f$).
With these notations, the next lemma shows that the reduction introduces little regret overhead when the horizon parameters $H_1$ and $H_2$ are set appropriately.

\begin{lemma}\label{lem:loop-free}
Suppose $H_1 \geq 8\Tmax\ln K$, $H_2 = \ceil{2D}$, $K\geq D$,
and $\tilpi_1, \ldots, \tilpi_K$ are policies for $\tilM$.
Then with probability at least $1-\delta$, the regret of executing $\sigma(\tilpi_1), \ldots, \sigma(\tilpi_K)$ in $M$ satisfies:
\[
	R_K \leq \sumk\inners{\tilN_{k}-\tiloptq}{c_k} + \tilO{D^{3/2}S^2A\rbr{\ln\tfrac{1}{\delta}}^2}.
\]
\end{lemma}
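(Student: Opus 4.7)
The plan is to decompose $R_K$ into the loop-free regret term $\sum_k \inners{\tilN_k - \tiloptq}{c_k}$ plus two ``overhead'' terms, one accounting for the mismatch between the learner's true cost and the loop-free cost, and one accounting for the mismatch between $J_k^{\optpi}(s_0)$ and $\inners{\tiloptq}{c_k}$. Let $E = \{k : I_k > H_1\}$ be the episodes where the learner fails to reach $g$ within $H_1$ steps and therefore triggers \textsc{Bernstein-SSP}, and let $Z_k$ be the cost it incurs on those episodes (so $Z_k$ equals the number of Bernstein-SSP steps in episode $k$, since the learner feeds the algorithm unit costs). By construction of $\tilN_k$, on each $k \notin E$ the learner's cost exactly equals $\inners{\tilN_k}{c_k}$, and on each $k \in E$ it equals $\inners{\tilN_k}{c_k} - H_2 + Z_k$. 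Similarly, by unrolling $\inners{\tiloptq}{c_k}$ as the expected cost of $\tiloptpi$ in $\tilM$, the second overhead equals $H_2 \cdot \Pr[\tau^\star > H_1] - \E[\sum_{i=H_1+1}^{\tau^\star} c_k(s^i,a^i)\mathbb{I}\{\tau^\star > H_1\}]$ where $\tau^\star$ is the hitting time of $\optpi$ starting from $s_0$.

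For the optimal overhead, a standard tail bound for hitting times of stationary proper policies gives $\Pr[\tau^\star > H_1] \le 2^{-H_1/(2\Tmax)} \le K^{-4}$ using $H_1 \ge 8\Tmax \ln K$; summing over $k$ contributes only a $\tilO{D/K^3}$ additive term, which is absorbed into the claimed overhead.

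For the learner overhead, I would invoke the high-probability regret guarantee of \textsc{Bernstein-SSP} (noting that in our reduction it is invoked with different initial states across its episodes, which, as mentioned in the footnote to Algorithm~\ref{alg:unknown-run}, does not affect its bound by the observation in \cite{tarbouriech2020provably}). Since all costs fed to \textsc{Bernstein-SSP} are $1$, its optimal policy is $\pi^f$ with total expected cost per episode at most $D$, so with probability at least $1-\delta$,
\[
\sum_{k\in E} Z_k \;\le\; \abr{E}\cdot D + \tilO{D\sqrt{S^2 A \abr{E}} + D^{3/2} S^2 A \ln\tfrac{1}{\delta}}.
\]
Combining with $H_2 \ge 2D$, the overhead $\sum_{k\in E}(Z_k - H_2)$ is bounded by $-\abr{E}D + \tilO{D\sqrt{S^2 A\abr{E}} + D^{3/2} S^2 A \ln\tfrac{1}{\delta}}$, and by AM--GM the negative $-\abr{E}D$ term absorbs the $D\sqrt{S^2A\abr{E}}$ term, leaving the desired $\tilO{D^{3/2}S^2A (\ln \tfrac{1}{\delta})^2}$ remainder.

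The main technical obstacle I expect is being careful with the Bernstein-SSP invocation: its episodes start from states $s_k^{H_1+1}$ determined by the adaptive adversary and the learner's own randomness up to that point, and its high-probability regret guarantee must hold uniformly over this behavior. The key insight that makes the argument succeed is the deliberate choice $H_2 = \lceil 2D\rceil$, which ensures that the ``virtual'' cost $\abr{E}H_2$ paid by the loop-free benchmark is strictly larger (by a factor of $2$) than the per-episode hitting-time cost of $\pi^f$, creating the slack needed to absorb the $\sqrt{\abr{E}}$ term in Bernstein-SSP's regret.
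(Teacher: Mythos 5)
Your proposal is correct and matches the paper's own argument in all essential respects: the same decomposition into the loop-free term plus a learner overhead (bounded by Bernstein-SSP's uniform-cost regret, with the slack from $H_2=\ceil{2D}$ absorbing the $DS\sqrt{A\abr{E}}$ term via the quadratic/AM--GM step) and a benchmark overhead (bounded by the hitting-time tail bound for $\optpi$ under $H_1\geq 8\Tmax\ln K$). The only nitpicks are cosmetic: the cost incurred during the Bernstein-SSP phase is at most (not exactly) $Z_k$ since $c_k\in[0,1]$, and your tail-bound exponent differs slightly in constants from the paper's $2e^{-H_1/(4\Tmax)}\leq 2/K^2$, neither of which affects the conclusion.
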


\paragraph{Reduction alone is not enough}
While all of our algorithms make use of this reduction, it is worth emphasizing that the reduction alone is not enough.
Put differently, applying existing loop-free algorithms to $\tilM$ directly only leads to sub-optimal bounds with dependence on $H=\tilo{\Tmax}$.
This is true already in the known transition case~\citep{chen2020minimax}, and is even more so in our unknown transition case where one needs to estimate the transition.
On the other hand, what the reduction accomplishes is to make sure that some 
important variance-related quantities take a simple form that is linear in both the occupancy measure and the cost function.
For example, we will make use of the following important lemma, which is essentially taken from~\citep{chen2020minimax} but includes an extra intermediate result (the first inequality) important for \pref{sec:iid}.
In \pref{sec:full-unknown}, we will see another important property of the reduction.

\begin{lemma}\label{lem:deviation_loop_free}
Consider executing a policy $\sigma(\tilpi)$ in episode $k$.
Then 
	$\E_k[\inners{\tilN_{k}}{c_k}^2] \leq 2\inners{q_{\tilpi}}{c_k\odot Q^{\tilpi}_k} \leq 2\inners{q_{\tilpi}}{J^{\tilpi}_k} = 2\inners{q_{\tilpi}}{\h{c_k}}$.
\end{lemma}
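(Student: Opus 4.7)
The plan is to prove the three assertions in sequence, working with the trajectory $\{(s^h, a^h)\}_h$ that executing $\sigma(\tilpi)$ in $M$ induces in the loop-free picture $\tilM$. For brevity let $X_h = c_k(s^h, a^h, h)$, so that $\inners{\tilN_k}{c_k} = \sum_h X_h$.

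For the first inequality I would use the purely algebraic identity
\[
\Bigl(\sum_h X_h\Bigr)^2 \;=\; 2 \sum_h X_h \sum_{h'\geq h} X_{h'} \;-\; \sum_h X_h^2 \;\leq\; 2\sum_h X_h \sum_{h'\geq h} X_{h'},
\]
and then take $\E_k[\cdot]$. By the tower property applied with the $\sigma$-algebra generated by $(s^h,a^h)$, and because the distribution of the suffix of the trajectory in $\tilM$ under $\tilpi$ is exactly what defines $Q^{\tilpi}_k$, one gets $\E_k[\sum_{h'\geq h} X_{h'}\,|\,s^h,a^h] = Q^{\tilpi}_k(s^h,a^h,h)$. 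Substituting and unfolding the outer expectation into a sum weighted by the occupancy measure $q_{\tilpi}$ gives $\E_k[(\sum_h X_h)^2] \leq 2\inners{q_{\tilpi}}{c_k \odot Q^{\tilpi}_k}$.

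For the middle inequality I would simply observe that $c_k(s,a,h)\in[0,1]$, so $c_k\odot Q^{\tilpi}_k \leq Q^{\tilpi}_k$ pointwise, and then collapse the sum over actions using $q_{\tilpi}(s,a,h) = q_{\tilpi}(s,h)\,\tilpi(a|(s,h))$ and $\sum_a \tilpi(a|(s,h)) Q^{\tilpi}_k(s,a,h) = J^{\tilpi}_k(s,h)$, which yields $\inners{q_{\tilpi}}{Q^{\tilpi}_k} = \inners{q_{\tilpi}}{J^{\tilpi}_k}$.

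The last equality $\inners{q_{\tilpi}}{J^{\tilpi}_k} = \inners{q_{\tilpi}}{\h{c_k}}$ is where the loop-free structure really pays off: by the tower property, $\sum_{s,h} q_{\tilpi}(s,h) J^{\tilpi}_k(s,h) = \E\bigl[\sum_h \sum_{h'\geq h} X_{h'}\bigr] = \E\bigl[\sum_{h'} h'\, X_{h'}\bigr]$, and the right side equals $\inners{q_{\tilpi}}{\h{c_k}}$ by definition of $\h{c_k}$. I expect the main (minor) obstacle to be keeping the indexing between $\tilM$ and $M$ straight — in particular verifying that on the event that $g$ is not reached within $H_1$ steps the fast-state contributions $X_h = \tilc_k((s_f,h),a_f) = 1$ are still dominated by the $h$-weighted bound, which they are since for $h > H_1$ we have $X_h \leq 1 \leq h$ and the same tower argument applies inside the layered representation.
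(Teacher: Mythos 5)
Your proposal is correct and follows essentially the same route as the paper's proof: the same algebraic identity $(\sum_h X_h)^2 \le 2\sum_h X_h\sum_{h'\ge h}X_{h'}$ combined with the tower property gives the first inequality, the same pointwise bound $c_k \le 1$ and action-marginalization give the second, and your pathwise interchange $\sum_h\sum_{h'\ge h}X_{h'} = \sum_{h'} h'\,X_{h'}$ is just the trajectory-level form of the occupancy-measure identity $\sum_{s} q_{\tilpi}(s,h)\,q_{\tilpi,(s,h)}(s',a',h') = q_{\tilpi}(s',a',h')$ that the paper uses for the final equality. The closing worry about the fast state is unnecessary: $(s_f,a_f)$ is part of $\tilSA$ and its unit costs are treated identically by the exact (in)equalities, with no separate domination step needed.
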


\section{Adversarial Costs with Full Information}
\label{sec:full-unknown}


In the full-information setting, the algorithm of~\citep{chen2020minimax} maintains a sequence of occupancy measures $q_1, \ldots, q_K$ for $\tilM$, obtained via Online Mirror Descent (OMD) over a sophisticated {\it skewed occupancy measure} space.
In their analysis, the regret for $\tilM$ from \pref{lem:loop-free} is decomposed as $\sumk\inners{\tilN_k - \tiloptq}{c_k} = \sumk\inners{\tilN_k-q_k}{c_k} + \sumk\inners{q_k - \tiloptq}{c_k}$,
where the first term is the sum of a martingale difference sequence whose variance can be bounded using \pref{lem:deviation_loop_free},
and the second term is controlled by the standard OMD analysis.
Importantly, due to the skewed occupancy measure, the bound for the second term contains a negative bias in terms of $-\inners{q_{k}}{\h{c_k}}$, which can then cancel the variance from the first term in light of \pref{lem:deviation_loop_free}.

When the transition is unknown, we follow the ideas of the SSP-O-REPS algorithm~\citep{rosenberg2020adversarial} and maintain a confidence set of plausible transition functions, which contains the true transition $P$ with high probability.
This step is conducted via the procedure $\TransEst$ (\pref{alg:update_transition}), which takes a trajectory returned by $\RUN$ (along with other statistics) and outputs an updated confidence set  based on standard concentration inequalities.
We defer the details to \pref{sec:trans-est}.

With a confidence set $\calP$ at hand, we define the set of plausible occupancy measures $\tilDelta(T, \calP)$ as follows, which is parameterized by $\calP$ and a size parameter $T$
(recall the shorthand $q(s,a,h) = \sum_{s'} q(s,a,s',h)$):
\begin{align}
	&\Bigg\{ q \in [0,1]^{\tilSA\times \lfs \times [H]}: \sumh\sumtilsaf q(s, a, h)\leq T; \notag\\
	&\sum_{a\in\tilA_{(s,h)}} q(s, a, h) = \sumtilsaf[s', a'] q(s', a', s, h-1),\; \forall h > 1;\notag\\ 
	&\sum_{a\in \tilA_{(s,1)}} q(s, a, 1) = \Ind\{s=s_0\}, \;\forall s\in\calX;  \; P_q \in \calP  \Bigg\}. \label{eq:constraint-set}
\end{align}
When $\calP = \{P\}$, this is equivalent to the set used by~\citep{chen2020minimax}, where the first inequality constraint makes sure that the induced policy reaches the goal within $T$ steps in expectation, the equality constraints make sure that $q$ is a valid occupancy measure, and the last constraint $P_q = P$ makes sure that the induced transition is consistent with the true one.
We naturally generalize the set to the unknown transition case by enforcing the induced transition $P_q$ to be within a given confidence set.

Then, in each episode $k$, with $\calP_k$ being the current confidence set, 
we define the skew occupancy measure space for some parameter $\lambda$ as 
\begin{align}
	\Omega_k = \cbr{\qfeat=q + \lambda \h{q} : q\in \tilDelta(T, \calP_{k})}. \label{eq:skewed_occupancy_measure}
\end{align}
which is again a direct generalization of~\citep{chen2020minimax} from $\{P\}$ to $\calP_k$.
Our algorithm then maintains a sequence of skewed occupancy measures $\qfeat_1, \ldots, \qfeat_K$ based on the standard OMD framework:
\[
\qfeat_{k+1} = \argmin_{\qfeat\in\Omega_{k+1}}\inner{\qfeat}{c_k} + D_{\regz}(\qfeat, \qfeat_k)
\]
where $\regz$ is the negative entropy regularizer.
In each episode, extracting $\hatq_k$ from $\qfeat_{k} = \hatq_k + \lambda \h{\hatq_k}$, we obtain a policy $\tilpi_{\hatq_k}$ for $\tilM$, and then execute it via the $\RUN$ procedure (\pref{alg:unknown-run}).
The complete pseudocode of our algorithm is presented in \pref{alg:full-unknown},
which can be efficiently implemented (see related discussion in~\citep{rosenberg2020adversarial}).

\begin{algorithm}[t]
\caption{SSP-O-REPS with Loop-free Reduction and Skewed Occupancy Measure}
\label{alg:full-unknown}
	\textbf{Input:} Upper bound on expected hitting time $T$, horizon parameter $H_1$, confidence level $\delta$
	
	\textbf{Parameters:} $\eta=\min\cbr{\frac{1}{8}, \sqrt{\frac{T}{DK}}}, \lambda=4\sqrt{\frac{S^2A}{DTK}}, H_2=\ceil{2D}, H=H_1+H_2$
	
	\textbf{Define:} regularizer $$\regz(\qfeat) = \frac{1}{\eta}\sum_{h=1}^{H}\sum_{(s,a)\in\tilSA}\sum_{s'\in\calX\cup\{g\}} \qfeat(s, a, s', h)\ln \qfeat(s, a, s', h)$$
	
	
	\textbf{Initialize:} $\N_1(s, a)=\M_1(s, a, s')=0$ for all $(s, a, s')\in\SA\times(\calS\cup\{g\})$, a Bernstein-SSP instance $\calB$, $\calP_1$ is the
set of all possible transition functions, $\qfeat_1 = \argmin_{\qfeat\in\Omega_1} \regz(\qfeat)$ (where $\Omega_k$ is defined in \pref{eq:skewed_occupancy_measure}).
	
	\For{$k=1,\ldots,K$}{
		Extract $\hatq_k$ from $\qfeat_k = \hatq_k + \lambda \h{\hatq_k}$ and let $\tilpi_k = \tilpi_{\hatq_k}$.
	
		Execute policy $\tilpi_k$: $\tau_k=\RUN(\tilpi_k, \calB)$, receive $c_k$.
	
		Update $\calP_{k+1}=\TransEst(\N, \M, \delta, H_1, H_2, \tau_k)$.
		
		Update $\qfeat_{k+1} = \argmin_{\qfeat\in\Omega_{k+1}}\inner{\qfeat}{c_k} + D_{\regz}(\qfeat, \qfeat_k)$.
	}
\end{algorithm}

\paragraph{Analysis}
Let $q_k$ be the occupancy measure with respect to the policy $\tilpi_{k}$ and the true transition $P$.
We can then decompose the regret from \pref{lem:loop-free} as
$\sumk\inners{\tilN_k - \tiloptq}{c_k} = \sumk\inners{\tilN_k-q_k}{c_k}  + \sumk\inners{\hatq_k - \tiloptq}{c_k}+ \sumk\inners{q_k-\hatq_k}{c_k}$,
where the last term measures the difference between $q_k$ and $\hatq_k$ due to the transition estimation error and is the only extra term compared to the known transition case discussed at the beginning of this section.
One of our key technical contributions is to prove that, thanks to the structure of the loop-free instance $\tilM$, this term is in fact also bounded by the variance term seen earlier in \pref{lem:deviation_loop_free}:
\begin{equation}\label{eq:q_k_qhat_k}
	\sumk\inner{q_k-\hatq_k}{c_k}=\tilO{\sqrt{S^2A\sumk\E_k[\inners{\tilN_{k}}{c_k}^2]}}.
\end{equation}
See \pref{lem:transition-bias} for the complete statement, whose proof makes use of a Bellman type law of total variance for Bernstein-based confidence sets (\pref{lem:var}).

With this result and \pref{lem:deviation_loop_free}, one can see that just like the first term $\sumk\inners{\tilN_k-q_k}{c_k}$, the extra transition error term can also be handled by the negative bias introduced by the skewed occupancy measure space as discussed earlier.
This leads to our final regret guarantee of \pref{alg:full-unknown}.

\begin{theorem}
	\label{thm:full-unknown}
	If $T\geq\T+1, H_1\geq 8\Tmax\ln K$, and $K\geq 16S^2AH^2$, then with probability at least $1-6\delta$, \pref{alg:full-unknown} ensures $R_K=\tilo{\sqrt{S^2ADTK} + H^3S^2A}$.
\end{theorem}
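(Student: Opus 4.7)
The plan is to start from \pref{lem:loop-free}, which on a high-probability event bounds $R_K$ by $\sumk\inners{\tilN_k-\tiloptq}{c_k}$ up to an $\tilO{D^{3/2}S^2A}$ additive overhead. Letting $q_k$ denote the occupancy measure of $\tilpi_k$ under the true transition $P$, I decompose the loop-free regret as
\[
\sumk\inners{\tilN_k-\tiloptq}{c_k} = \underbrace{\sumk\inners{\tilN_k-q_k}{c_k}}_{(\mathrm{A})} + \underbrace{\sumk\inners{q_k-\hatq_k}{c_k}}_{(\mathrm{B})} + \underbrace{\sumk\inners{\hatq_k-\tiloptq}{c_k}}_{(\mathrm{C})},
\]
so that (A) is a pure deviation term, (B) is the transition-estimation bias, and (C) is the OMD comparator regret in the skewed space.

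Term (A) is a sum of bounded martingale differences whose conditional second moment is $\E_k[\inners{\tilN_k}{c_k}^2] \leq 2\inners{q_k}{\h{c_k}}$ by \pref{lem:deviation_loop_free}, so Freedman's inequality gives $(\mathrm{A}) = \tilO{\sqrt{\sumk\inners{q_k}{\h{c_k}}}}$ up to lower-order terms. For (C), on the good event $\{P\in\calP_k\ \forall k\}$ supplied by $\TransEst$, the skewed image of the comparator $\tiloptq + \lambda\h{\tiloptq}$ lies in $\Omega_k$, and the standard entropic-OMD analysis over the skewed space yields a bound of the form $\tilO{\tfrac{S^2AH}{\eta}+\eta\sumk\inners{\qfeat_k}{c_k^2}}$. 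Unpacking $\qfeat_k = \hatq_k + \lambda\h{\hatq_k}$ converts this into a bound on (C) that carries the crucial negative bias $-\lambda\sumk\inners{\hatq_k}{\h{c_k}}$, which will absorb the variance terms produced by (A) and (B).

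The main obstacle is term (B), whose control is precisely \pref{eq:q_k_qhat_k}. I would prove it via the loop-free performance-difference identity
\[
\inners{q_k-\hatq_k}{c_k} = \sum_{h=1}^H\sum_{(s,a)\in\tilSA}\hatq_k(s,a,h)\sum_{s'\in\lfs}\rbr{P(s'|s,a,h)-\hatP_k(s'|s,a,h)}\,J_k^{\tilpi_k}(s',h+1),
\]
combined with the Bernstein-type width $\abr{P-\hatP_k}(s'|s,a,h) = \tilO{\sqrt{P(s'|s,a,h)/N_k(s,a)}}$ (plus a $1/N_k(s,a)$ lower-order tail) delivered by $\TransEst$. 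Cauchy--Schwarz in $s'$ collapses the inner sum into $\sqrt{\V_{P(\cdot|s,a,h)}[J_k^{\tilpi_k}(\cdot,h+1)]/N_k(s,a)}$. The crucial step is a Bellman-type law of total variance in the spirit of~\citep{azar2017minimax}: thanks to the loop-free structure of $\tilM$, the aggregate $\sum_{h,s,a}\hatq_k(s,a,h)\V_{P(\cdot|s,a,h)}[J_k^{\tilpi_k}(\cdot,h+1)]$ collapses to the \emph{linear} quantity $\inners{q_k}{\h{c_k}}$, matching the same functional appearing in (A); this linearization is exactly what fails in a general SSP and motivates the reduction. A final Cauchy--Schwarz across $(s,a)$ using $\sumk 1/\sqrt{N_k(s,a)} = \tilO{\sqrt{\sumk q_k(s,a)}}$ then delivers $(\mathrm{B}) = \tilO{\sqrt{S^2A\sumk\inners{q_k}{\h{c_k}}}} + \tilO{H^3S^2A}$.

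Putting the three pieces together and writing $\Sigma = \sumk\inners{q_k}{\h{c_k}}$, the overall bound takes the form $\tilO{\tfrac{S^2AH}{\eta}+\eta\,\text{(OMD range)}+\sqrt{\Sigma}+\sqrt{S^2A\,\Sigma}-\lambda\Sigma}$. AM--GM on the two square-root terms produces $\lambda\Sigma + \tilO{(1+S^2A)/\lambda}$, and the $+\lambda\Sigma$ contribution is cancelled by the skew-induced negative bias (modulo a $\sumk\inners{q_k-\hatq_k}{\h{c_k}}$ correction that is itself of lower order by another application of the (B)-type argument). Plugging in $\eta = \sqrt{T/(DK)}$ and $\lambda = 4\sqrt{S^2A/(DTK)}$ from \pref{alg:full-unknown}, and using the assumption $K\geq 16S^2AH^2$ to keep the OMD term subdominant, the remaining pieces balance at $\tilo{\sqrt{S^2ADTK}+H^3S^2A}$, as claimed.
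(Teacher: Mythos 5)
Your overall architecture matches the paper's proof: the same three-way decomposition after \pref{lem:loop-free}, Freedman plus \pref{lem:deviation_loop_free} for the martingale term, and, for the transition-error term, exactly the paper's route (value-difference identity, Bernstein widths, centering, Cauchy--Schwarz, the Bellman law of total variance collapsing the weighted variance to the linear quantity $\inner{q_k}{\h{c_k}}$, and cancellation against the skew-induced negative bias). The gap is in your treatment of term (C). First, the entropic-OMD penalty is not $\tilO{S^2AH/\eta}$: for negative entropy over this domain it scales with the comparator's total mass, i.e.\ $\tilO{T\ln(S^2AHT)/\eta}$, and with the algorithm's $\eta=\sqrt{T/(DK)}$ your claimed $S^2AH/\eta = S^2AH\sqrt{DK/T}$ is \emph{not} dominated by $\sqrt{S^2ADTK}$ unless $S^2AH^2\le T^2$; the hypothesis $K\ge 16S^2AH^2$ does not give this (it is used to ensure $\lambda\le 1/(2H)$ so the skewing is bounded and Freedman's parameter is admissible, not to kill the penalty term). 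Second, the stability term $\eta\sumk\inner{\hatq_k}{c_k}$ cannot be left generic: bounding it by $\eta TK$ gives $\sqrt{T^3K/D}$, which exceeds the target when $T\gg DS\sqrt{A}$. The paper's proof rearranges it to the left-hand side and replaces it by $2\eta\sumk\inner{\tiloptq}{c_k}\le 2\eta DK$ using $\sumk\inner{\tiloptq}{c_k}\le DK$ (the optimal policy never beats the fast policy on unit costs); this step, explicitly advertised as one of the improvements over \citet{rosenberg2020adversarial}, is absent from your sketch. Third, unskewing the comparator produces the positive term $\lambda\sumk\inner{\tiloptq}{\h{c_k}}\le 2\lambda DTK$, which is one of the two dominant $\sqrt{S^2ADTK}$ contributions; it does not appear in your final tally, which happens to land on the right order only because the $S^2A/\lambda$ term from your AM--GM is of the same magnitude.

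A secondary caveat on term (B): you wrote the identity with $\hatq_k$ outside and the true-$P$ value $J^{\tilpi_k}_k$ inside. That symmetric form is valid, but the two tools you then invoke --- the law of total variance (\pref{lem:var}) and the pigeonhole bound $\sumk q_k(s,a)/\N^+_k(s,a)=\tilO{H}$ --- are stated for the \emph{true} occupancy $q_k$ (the counters $\N_k$ track actual visits, whose conditional means are $q_k$, not $\hatq_k$). You must either adopt the paper's orientation ($q_k$ outside, $J^{P_k,\tilpi_k}_k$ inside, plus the second-order \pref{lem:double transition difference} to swap back to $J^{\tilpi_k}_k$ at cost $\tilO{H^3S^2A}$) or separately control the $\hatq_k-q_k$ mismatch. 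Also, Cauchy--Schwarz in $s'$ only yields the \emph{variance} of $J^{\tilpi_k}_k(\cdot,h+1)$ after centering by $\mu_k(s,a,h)=\sum_{s'}P(s'|s,a,h)J^{\tilpi_k}_k(s',h+1)$, which is legitimate because $\sum_{s'}(P-P_k)(s'|s,a,h)=0$; stated without this step one only gets the second moment, which is too large.
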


We emphasize that our way to handle the transition estimation error $\sumk\inner{q_k-\hatq_k}{c_k}$ is novel.
Specifically, all previous works directly upper bound this error using the definition of confidence interval, which in our case introduces an undesirable $\Tmax$ dependency.
Instead, we derive a specific upper bound (\pref{eq:q_k_qhat_k}) of the transition estimation error that can be cancelled out by the negative term introduced by the skewed occupancy measure.
This technique is especially useful in obtaining data-dependent bound in the unknown transition case, since it replaces the error by a term related to the optimal policy, which is hard to achieve if we directly upper bound the error.

Besides this new way to handle the transition estimation error,
another source of improvement compared to the analysis of~\citep{rosenberg2020adversarial} is to make use of the fact $\sumk\inner{q_{\optpi}}{c_k}\leq DK$ in the OMD analysis.
Again, we emphasize that even without the knowledge of $\T$ or $\Tmax$,
our analysis leads to better bounds compared to theirs; see \pref{app:tune T}.


Since \citet{chen2020minimax} show a lower bound of $\lowo{\sqrt{D\T K}}$ for stochastic costs and known transition, and \citet{cohen2020near} show a lower bound of $\lowo{D\sqrt{SAK}}$ for fixed costs and unknown transition, we know that in our setting, $\lowo{ \sqrt{D\T K} + D\sqrt{SAK}}$ is a lower bound, which shows a gap of $\sqrt{S\T / D}$ from our upper bound.
Closing the $\sqrt{S}$ gap is still open even for the loop-free case~\citep{rosenberg2019online,jin2019learning}.
On the other hand, closing the $\sqrt{\T/D}$ gap also seems rather challenging for adversarial costs, but is indeed possible for stochastic costs as we show in \pref{sec:iid} (note that the lower bound is indeed constructed with stochastic costs).

\section{Adversarial Costs with Bandit Feedback}
\label{sec:bandit-unknown}

We now consider the bandit feedback setting which, even when the transition is known, is quite challenging already and requires several new techniques as shown by~\citet{chen2020minimax}.
Our algorithm is built on top of their Log-barrier Policy Search algorithm with the transition estimation component integrated in a similar way as in \pref{sec:full-unknown}.
We defer most details to \pref{app:bandit-unknown} but only highlight two important new ingredients below.

A standard technique to deal with adversarial costs and bandit feedback in online learning  is to feed the OMD algorithm with importance-weighted cost estimators (since $c_k$ is now only partially observed).
Specifically, the Log-barrier Policy Search algorithm of~\citet{chen2020minimax} feeds OMD with cost $\hatc_k-\gamma \hatb_k$ (for some parameter $\gamma$),
where $\hatc_k(s, a) = \frac{\tilN_k(s, a)}{q_k(s, a)}c_k(s, a)$ and $\hatb_k(s, a)=\frac{\sum_hh\cdot q_k(s, a, h)\hatc_k(s, a)}{q_k(s, a)}$ are two importance-weighted estimators.
Here, $q_k(s, a)$ is defined as $\sumh q_k(s,a,h)$ and $\tilN_k$ is defined above \pref{lem:loop-free} with mean $q_k(s, a)$, so that $\hatc_k$ is an unbiased estimator of $c_k$.
The reason of having $\hatb_k$, on the other hand, is relatively technical, but it eventually serves as a way of reducing variance by introducing a negative bias.
The immediate challenge to generalize these estimators to the unknown transition setting is that $q_k$, the occupancy measure with respect to the policy $\tilpi_k$ for episode $k$ and the true transition $P$, is now unknown.

To address this issue for $\hatc_k$, we follow the idea of~\citep{jin2019learning} and construct the following \emph{optimistic} biased estimator: $\hatc_k(s, a) = \frac{\tilN_k(s, a)}{u_k(s, a)}c_k(s, a)$ where
$
	u_k(s, a) = \max_{\hatP\in\calP_{k}}q_{\hatP, \tilpi_k}(s, a)
$,
called the \emph{upper occupancy bound},
is the largest possible expected number of visits to $(s,a)$ of policy $\tilpi_k$ under a plausible transition from the confidence set $\calP_k$.
Clearly, $q_k(s,a) \leq u_k(s,a)$ holds (with high probability), making $\hatc_k(s, a)$ an optimistic underestimator which is important in reducing variance as shown in~\citet{jin2019learning}.
Note that $u_k$ can be efficiently computed since it boils down to solving a linear program.\footnote{%
To see this, note that $u_k(s, a)$ is equivalent to $\max_q q(s,a)$ where the maximization is over the set $\{q \in \tilDelta(\infty, \calP_{k}): \pi_q =\tilpi_k \}$, which consists of polynomially many linear constraints. \label{fn:LP}
}


On the other hand, $\hatb_k$ does not appear before in the loop-free setting of~\citet{jin2019learning} and requires some more careful thinking.
Other than replacing $q_k$ in the denominator with $u_k$, we also need to deal with $q_k(s, a, h)$ in the numerator.
It turns out that the right generalization is to let 
\[
	\hatb_k(s, a) = \frac{\max_{\hatP\in\calP_{k}}\sum_h h \cdot q_{\hatP, \tilpi_k}(s, a, h)\hatc_k(s, a)}{u_k(s, a)},
\]
so that $\sum_h h \cdot q_k(s,a,h) \hatc_k(s, a) \leq u_k(s,a) \hatb_k(s, a)$ holds (with high probability), which in turn makes sure that the bias introduced by $\hatb_k$ is large enough to cancel some important variance term, as shown in \pref{lem:simpler_form_N_times_c}.
Similarly, $\hatb_k$ can also be computed efficiently (\emph{c.f.} \pref{fn:LP}).



Our final algorithm is summarized in \pref{alg:bandit-unknown} of \pref{app:bandit-unknown}.
Noting that the bias introduced by the upper occupancy bounds is eventually also related to the transition estimation error that has been analyzed in \pref{lem:transition-bias},
we are able to prove the following regret guarantee.

\begin{theorem}
	\label{thm:bandit-unknown}
	If $T\geq\T+1, H_1\geq 8\Tmax\ln K$, and $K$ is large enough ($K \gtrsim S^3A^2H^2$), then with probability at least $1-30\delta$, \pref{alg:bandit-unknown} ensures $R_K = \tilO{\sqrt{S^3A^2DTK}+H^3S^3A^2}$.
\end{theorem}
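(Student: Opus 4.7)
The plan is to mirror the log-barrier policy search analysis of~\citet{chen2020minimax} with the three new ingredients of \pref{sec:bandit-unknown}: the skewed occupancy measure over the transition confidence set $\calP_k$, the optimistic estimator $\hatc_k$ built from the upper occupancy bound $u_k$, and the generalized negative-bias term $\hatb_k$. By \pref{lem:loop-free} it suffices to control $\sumk\inners{\tilN_k-\tiloptq}{c_k}$ in the loop-free instance $\tilM$. Denoting by $q_k$ the true-transition occupancy measure of $\tilpi_k$ and by $\hatq_k$ the iterate extracted from $\qfeat_k=\hatq_k+\lambda\h{\hatq_k}$, I split this sum as
\begin{equation*}
\sumk\inners{\tilN_k-q_k}{c_k}+\sumk\inners{q_k-\hatq_k}{c_k}+\sumk\inners{\hatq_k-\tiloptq}{c_k}.
\end{equation*}
Freedman's inequality combined with \pref{lem:deviation_loop_free} bounds the first (martingale) piece by the variance proxy $\tilO{\sqrt{\sumk\inners{q_{\tilpi_k}}{\h{c_k}}}}$, while \pref{lem:transition-bias} gives a bound of the same flavor (up to a $\sqrt{S^2A}$ factor) for the second (transition-estimation) piece.

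\textbf{Handling the policy-regret term via the biased surrogate.} Adding and subtracting the OMD input $\hatc_k-\gamma\hatb_k$ decomposes $\sumk\inners{\hatq_k-\tiloptq}{c_k}$ into the OMD regret on the surrogate, the optimistic underestimation bias $\sumk\inners{\tiloptq-\hatq_k}{c_k-\hatc_k}$, and the auxiliary $\gamma\sumk\inners{\hatq_k-\tiloptq}{\hatb_k}$. The log-barrier regularizer yields a standard local-norm stability bound of the form $\tilo{1/\eta}+\eta\sumk\inners{\hatq_k}{(\hatc_k-\gamma\hatb_k)^2}$; taking conditional expectations and using $u_k\geq q_k$ on the good event converts this into yet another multiple of $\sumk\inners{q_{\tilpi_k}}{\h{c_k}}$ via \pref{lem:simpler_form_N_times_c}. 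The optimistic bias $\inners{\tiloptq}{c_k-\hatc_k}$ is controlled by the pointwise gap $(u_k-q_k)/u_k$, and a Bellman law-of-total-variance argument parallel to \pref{lem:transition-bias} but applied to $\tiloptq$ converts this gap into a $\sqrt{S^3A^2\sumk\inners{q_{\tilpi_k}}{\h{c_k}}}$-type quantity, the extra $SA$ coming from the Bernstein confidence width. Finally, $\gamma\sumk\inners{\tiloptq}{\hatb_k}$ is handled using $\sumk\inners{\tiloptq}{c_k}\leq DK$ plus the structural $h$-factor, while the negative contribution $-\lambda\sumk\inners{\hatq_k}{\h{c_k}}$ baked into the skewed occupancy measure, together with $-\gamma\sumk\inners{\hatq_k}{\hatb_k}$, simultaneously cancels all three copies of the variance proxy above.

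\textbf{Parameter tuning and the main obstacle.} Choosing $\eta,\gamma,\lambda$ in analogy with the full-information setting of \pref{thm:full-unknown} (with extra $\sqrt{SA}$ factors in $\lambda$ and $\gamma$ to account for the bandit estimators) and using $\sumk\inners{q_{\optpi}}{c_k}\leq DK$ in every bound involving $\tiloptq$ balances all surviving terms against the target $\tilO{\sqrt{S^3A^2DTK}}$ rate; the $H^3S^3A^2$ additive term absorbs initialization cost, the Bernstein-SSP overhead from \pref{alg:unknown-run}, and the low-order contribution from \pref{lem:loop-free}. The main obstacle, in my view, is proving that the generalized
\begin{equation*}
\hatb_k(s,a)=\frac{\max_{\hatP\in\calP_k}\sum_h h\cdot q_{\hatP,\tilpi_k}(s,a,h)\,\hatc_k(s,a)}{u_k(s,a)}
\end{equation*}
still induces a large enough negative bias on the $\hatq_k$-side while remaining a valid upper bound on $\sum_h h\cdot q_k(s,a,h)\hatc_k(s,a)$ used in the $\tiloptq$-side analysis. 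Under known transition this was immediate from $q_k=q_{P,\tilpi_k}$, but with a nontrivial confidence set we need a uniform law-of-total-variance estimate that holds for every $\hatP\in\calP_k$---essentially the content of \pref{lem:simpler_form_N_times_c}. Once this pointwise inequality is established, the rest is bookkeeping: martingale deviation, transition-estimation error, and optimistic underestimation all reduce to the same variance quantity $\sumk\inners{q_{\tilpi_k}}{\h{c_k}}$, which a single negative bias of matching magnitude can absorb.
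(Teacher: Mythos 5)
Your overall architecture (loop-free reduction, a three-way decomposition, OMD on the skewed occupancy measure, optimistic estimators, cancellation by negative bias) matches the paper, and your treatment of the martingale and transition-estimation pieces is fine. The genuine gap is in the comparator-side estimator term, which is the heart of the bandit case. The term $\sumk\inners{\tiloptq}{\hatc_k-c_k}$ (the paper's $\bias_1$) cannot be handled by ``a Bellman law-of-total-variance argument applied to $\tiloptq$'' yielding $\sqrt{S^3A^2\sumk\inners{q_{\tilpi_k}}{\h{c_k}}}$: the conditional variance of $\hatc_k(s,a)$ scales like $1/u_k(s,a)$, and the comparator puts mass precisely where $u_k$ can be tiny, so this quantity is not reducible to learner-occupancy-weighted variance proxies. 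The paper instead bounds it per $(s,a)$ with a strengthened Freedman inequality (whose variance is controlled via \pref{lem:simpler_form_N_times_c}), producing terms of the form $\inner{\qfeat^\star}{\rho_K}/\eta'+\eta'\inner{\qfeat^\star}{\sumk b_k}$, and then cancels them with two negative terms your sketch never produces: $-\inner{\qfeat^\star}{\rho_K}/(140\eta\ln K)$, coming from the increasing-learning-rate log-barrier analysis (\pref{lem:OMD-increasing-eta}; the $\rho_k,\eta_k(s,a)$ updates in \pref{alg:bandit-unknown} exist exactly for this), and $-\gamma\sumk\inner{\qfeat^\star}{b_k}$, i.e., the negative bias of $\hatb_k$ weighted by the \emph{comparator}. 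Your accounting has this backwards: feeding $\hatc_k-\gamma\hatb_k$ to OMD makes the available negative term appear on the $\qfeat^\star$ side, while $+\gamma\sumk\inner{\qfeat_k}{\hatb_k}$ is a positive term that must itself be absorbed (via $\inner{\hatq_k}{b_k}\leq\inner{u'_k}{\h{c_k}}$ and the $\lambda$-skew), so the quantity ``$-\gamma\sumk\inners{\hatq_k}{\hatb_k}$'' you invoke for cancellation is not actually available.

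Relatedly, the extra $\sqrt{SA}$ in the bound does not come from the comparator bias or a Bernstein width there; it comes from \pref{lem:optimistic-transition-bound}, i.e., from bounding $\sumk\inner{u_k-q_k}{c_k}$ (needed for $\err_1$, $\err_2$, $\bias_2$, and for your term $\sumk\inners{\hatq_k}{c_k-\E_k[\hatc_k]}$) by applying \pref{lem:transition-bias} separately to the per-pair costs $c_k^{(s,a)}$ and combining with Cauchy-Schwarz, which inflates $S^2A$ to $S^3A^2$ under the square root. A further (minor) omission: the comparator must be slightly perturbed to $q^\star=(1-\frac{1}{TK})\tiloptq+\frac{1}{TK}q_0$ so that $\qfeat^\star$ lies in the clipped domain $\Omega_k$ with $q(s,a)\geq 1/(TK^4)$, otherwise the log-barrier terms are unbounded. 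In short, your decomposition could be completed, but as written the bandit-specific mechanism (per-pair Freedman at the comparator, canceled by the increasing-learning-rate negative term and by $\hatb_k$'s comparator-side negative bias) is missing, and the specific step reducing the comparator bias to $\sqrt{S^3A^2\sumk\inners{q_{\tilpi_k}}{\h{c_k}}}$ would fail.
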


Compared to the full-information setting, here we pay an extra $\sqrt{SA}$ factor in the regret bound, a price that does not exist in the loop-free setting~\citep{rosenberg2019online,jin2019learning}.
This comes from a technical lemma on bounding $\sumk\inner{u_k-q_k}{c_k}$ in terms of $\sumk\inners{q_k}{\h{c_k}}$ so that it can be canceled by the skew occupancy measure; see \pref{lem:optimistic-transition-bound}.
Removing this extra factor is an important future direction.
On the other hand, by combing the lower bounds of~\citep{chen2020minimax} and~\citep{cohen2020near} again, we have that $\lowo{ \sqrt{SAD\T K} + D\sqrt{SAK}}$ is the best existing lower bound for this setting. 


\section{Stochastically Oblivious Adversary}
\label{sec:iid}

Given the gap between our upper and lower bounds, in this section, we consider a weaker stochastically oblivious adversary and develop a simple algorithm with regret bounds only $\sqrt{S}$ times larger than the aforementioned lower bounds.
Specifically, in this setting the adversary generates ahead of the time the cost functions  $c_1, \ldots, c_K$ as i.i.d. samples from a fixed and unknown distribution with mean $c: \SA \to [0,1]$.
The regret measure is also changed to the more standard pseudo-regret 
$\tilR_K = \sumk\inner{N_k}{c_k} - \inner{\optq}{c}$ where $\optpi \in \argmin_\pi J^{\pi, c}(s_0)$.\footnote{We can get a bound for the standard regret with an extra cost of order $\tilo{\sqrt{D\T K}}$. Therefore, the standard regret and the pseudo regret are of the same order. We use the latter only for simplicity and convention.}
We remind the readers that the lower bound is indeed for the pseudo-regret and is constructed via this weaker adversary, and also that this is slightly different from the setting studied in~\citep{tarbouriech2019no, cohen2020near} as mentioned in \pref{sec:intro}.


Our algorithm is based on the well-known \emph{optimism in face of uncertainty} principle, which finds the best policy among all plausible MDPs subject to some additional constraints.
First, we compute an optimistic cost function $\hatc_k$ defined via $\hatc_k(s,a)$ being\footnote{This is not to be confused with the estimator used in \pref{sec:bandit-unknown} with the same notation overloaded.}
\begin{equation}\label{eq:optimistic_cost}
\max\left\{\bar{c}_k(s, a)-2\sqrt{\A^c_k(s, a)\bar{c}_k(s, a)}-7\A^c_k(s, a), 0\right\},
\end{equation}
where $\bar{c}_k(s, a)=\frac{\sum_{j=1}^{k-1}c_j(s, a)\Ind_j(s, a)}{\Nc(s, a)}$ is the empirical cost mean, $\Nc(s, a)=\max\left\{\sum_{j=1}^{k-1}\Ind_j(s, a), 1\right\}$ is the number of times the cost at $(s, a)$ was revealed (covering both the full-information and the bandit settings), and $\A^c_k(s, a)=\frac{\ln(2SAK/\delta)}{\Nc(s, a)}$.
Then, we find the best occupancy measure with respect to this optimistic cost, with the same constraint $\tilDelta(T, \calP_k)$ as in previous sections:
\begin{align}
	\hatq_k = \argmin_{q\in\tilDelta(T, \calP_k)}\inner{q}{\hatc_k}, \label{eq:evi}
\end{align}
and finally execute the induced policy $\tilpi_k = \tilpi_{\hatq_k}$ as before.

There is, however, one caveat in the approach above.
Our analysis relies on one crucial property of $\tilpi_k$: $J^{P_k, \tilpi_k, \hatc_k}(s, h) \leq D$, that is, its state value with respect to the optimistic transition/cost is always no more than the diameter $D$.
This holds automatically if we did not impose the hitting time constraint in \pref{eq:evi}, due to the existence of the fast policy $\pi^f$ whose state value is never worse than $D$.
With the hitting time constraint, however, this might not hold anymore.
To address this, we slightly modify the loop-free instance $\tilM$ and give every state $(s,h)$ (for $h \leq H_1$) a \emph{shortcut} to directly transit to $(s_f, H_1+1)$ by taking action $a_f$, which is equivalent to allowing the learner to switch to Bernstein-SSP (whose role is similar to the fast policy) at any state and any time (\emph{c.f.} \pref{fn:break}).
This ensures $J^{P_k,\tilpi_k, \hatc_k}(s,h) \lesssim D$ as desired; see \pref{lem:fast-action}.
This modification can be implemented by a small change to the definition of $\tilDelta$, and we defer the details to \pref{app:iid}.
With this in mind, our final algorithm is presented in \pref{alg:iid}.

\begin{algorithm}[t]
\caption{A near-optimal algorithm for stochastically oblivious adversary}
\label{alg:iid}
	\textbf{Input:} Upper bound on expected hitting time $T$, horizon parameter $H_1$ and confidence level $\delta$

	\textbf{Parameters:} $H_2=\ceil{2D}, H=H_1+H_2$.
	
	
	
	\textbf{Initialization:} $\N_1(s, a)=\M_1(s, a, s')=0$ for all $(s, a, s')\in\SA\times(\calS\cup\{g\})$, a Bernstein-SSP instance $\calB$, $\calP_1$ is the set of all possible transition functions.
	
	\For{$k=1,\ldots,K$}{
		Compute the optimistic cost $\hatc_k$ (\pref{eq:optimistic_cost}).
	
		Compute $\hatq_k=\argmin_{q\in\tilDelta(T, \calP_k)}\inner{q}{\hatc_k}$.
	
		Execute $\tilpi_k=\tilpi_{\hatq_k}$: $\tau_k = \RUN(\tilpi_k, \calB)$, receive $c_k\odot\Ind_k$.
	
		Update $\calP_{k+1}=\TransEst(\N, \M, \delta, H_1, H_2, \tau_k)$.
	}
\end{algorithm}

\paragraph{Analysis}
The key reason that we can improve our regret bounds in this stochastic setting is as follows. First, since the estimated cost converges to the true cost fast enough, the previous dominating term $\sumk\inner{q_k-\hatq_k}{c_k}$ can now be replaced by $\sumk\inner{q_k-\hatq_k}{\hatc_k}$.
Then, similar to \pref{eq:q_k_qhat_k}, the latter is in the order of $\sqrt{S^2A\sumk\E_k[\inners{\tilN_{k}}{\hatc_k}^2]}$, which is further bounded by $\sqrt{S^2A\sumk\inner{q_k}{\hatc_k \odot Q^{\tilpi_k, \hatc_k}} }$ according to the first inequality of \pref{lem:deviation_loop_free}.
Finally, we make use of the aforementioned property $J^{P_k, \tilpi_k, \hatc_k}(s, h) \leq D$ to show that $\inner{q_k}{\hatc_k \odot Q^{\tilpi_k, \hatc_k}}$ is roughly $D^2$, leading to a final bound of $\otil(\sqrt{ S^2AD^2K })$ and improving over the $\tilO{\sqrt{ S^2AD\T K }}$ bound in \pref{thm:full-unknown}.
We summarize our results in the following theorem.

\begin{theorem}
	\label{thm:iid}
	If $T\geq\T+1, H_1\geq 8\Tmax\ln K$, and $K\geq H^2$, then \pref{alg:iid} ensures with probability at least $1-30\delta$, $\tilR_K=\tilo{\sqrt{DTK}+DS\sqrt{AK}+H^3S^3A^2}$ in the full information setting and $\tilR_K=\tilo{\sqrt{DTSAK}+DS\sqrt{AK}+H^3S^3A^2}$ in the bandit feedback setting.
\end{theorem}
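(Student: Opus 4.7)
The plan follows the decomposition outlined in the discussion after the theorem statement. First, I would apply \pref{lem:loop-free} to upper bound $R_K$ by $\sumk\inners{\tilN_k-\tiloptq}{c_k}$ plus the overhead $\tilO{D^{3/2}S^2A}$. The conversion between $\tilR_K$ and $R_K$, as noted in the footnote, costs only $\tilo{\sqrt{DTK}}$, so the main task is to bound
\[
\sumk\inners{\tilN_k-\tiloptq}{c_k} = \underbrace{\sumk\inners{\tilN_k-q_k}{c_k}}_{(\textsc{Dev})} + \underbrace{\sumk\inners{q_k-\hatq_k}{c_k}}_{(\textsc{Bias})} + \underbrace{\sumk\inners{\hatq_k-\tiloptq}{c_k}}_{(\textsc{Reg})},
\]
where $q_k$ denotes the true $P$-occupancy of $\tilpi_k$.

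The dominant $(\textsc{Bias})$ term is handled exactly as sketched after the theorem. I would split $\inners{q_k-\hatq_k}{c_k}=\inners{q_k-\hatq_k}{\hatc_k}+\inners{q_k-\hatq_k}{c_k-\hatc_k}$, absorbing the cost-concentration residual into the $(\textsc{Reg})$ bound. Applying \pref{lem:transition-bias} (the machinery underlying \pref{eq:q_k_qhat_k}) with $\hatc_k$ in place of $c_k$ then gives $\sumk\inners{q_k-\hatq_k}{\hatc_k}=\tilO{\sqrt{S^2A\sumk\E_k[\inners{\tilN_k}{\hatc_k}^2]}}$ plus lower order. The first inequality of \pref{lem:deviation_loop_free} further bounds $\E_k[\inners{\tilN_k}{\hatc_k}^2]\le 2\inner{q_k}{\hatc_k\odot Q^{\tilpi_k,\hatc_k}}$. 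The crucial shortcut property of \pref{lem:fast-action} gives $J^{P_k,\tilpi_k,\hatc_k}\lesssim D$, which combined with a simulation-lemma step transferring from the optimistic $P_k$ to the true $P$ yields both $Q^{\tilpi_k,\hatc_k}\lesssim D$ pointwise and $\sumk\inner{q_k}{\hatc_k}\lesssim DK$. The inner sum is therefore $\tilO{D^2K}$ and $(\textsc{Bias})=\tilo{DS\sqrt{AK}}$.

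For $(\textsc{Reg})$ I would use optimism: since $P\in\calP_k$ and $\hatc_k\le c$ hold with high probability, $\tiloptq\in\tilDelta(T,\calP_k)$ and $\inners{\hatq_k}{\hatc_k}\le\inners{\tiloptq}{\hatc_k}\le\inners{\tiloptq}{c}$. Writing $\inners{\hatq_k-\tiloptq}{c_k}=\inners{\hatq_k}{c_k-\hatc_k}+\bigl(\inners{\hatq_k}{\hatc_k}-\inners{\tiloptq}{c}\bigr)+\inners{\tiloptq}{c-c_k}$, the middle summand is nonpositive, the third is a martingale handled by Azuma, and the first expands via the Bernstein bonus $c-\hatc_k\lesssim\sqrt{\A^c_k c}+\A^c_k$ and Cauchy--Schwarz. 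Using the optimism-based bound $\sumk\inner{\hatq_k}{c}\lesssim DK$ (obtained by comparing against the shortcut policy), this yields $\tilo{\sqrt{DTK}}$ in the full-information case (where $\A^c_k\asymp 1/k$) and $\tilo{\sqrt{SADTK}}$ in the bandit case after a pigeonhole step that sums $q_k(s,a)/\sqrt{\Nc_k(s,a)}$. For $(\textsc{Dev})$, which is a martingale difference sequence with $\E_k[\tilN_k]=q_k$, Freedman's inequality combined with $\E_k[\inners{\tilN_k}{c_k}^2]\le 2\inner{q_k}{c_k\odot Q^{\tilpi_k}_k}$ from \pref{lem:deviation_loop_free} and the same shortcut-based control of $Q$ yields $(\textsc{Dev})=\tilo{\sqrt{DTK}}$. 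The main obstacle I expect is exactly this shortcut transfer: \pref{lem:fast-action} bounds values only under the optimistic transition $P_k$, and converting to a uniform bound under the true $P$ without introducing a spurious $\sqrt{H/D}$ factor requires folding the transition-estimation error back into the same $S^2A$ variance term rather than treating it as a standalone error.
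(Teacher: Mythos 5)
Your high-level route is the same as the paper's: loop-free reduction with the shortcut action, optimism ($\inner{\hatq_k}{\hatc_k}\le\inner{\tiloptq}{\hatc_k}\le\inner{\tiloptq}{c}$), \pref{lem:transition-bias} applied with $\hatc_k$, the first inequality of \pref{lem:deviation_loop_free}, and \pref{lem:fast-action}; your three-way split is just a regrouping of the paper's decomposition. However, two of your key steps have genuine gaps. First, you claim that \pref{lem:fast-action} plus ``a simulation-lemma step'' yields $Q^{\tilpi_k,\hatc_k}\lesssim D$ \emph{pointwise} under the true transition $P$. That is not justified and is most likely false: the shortcut argument only controls values under the optimistic model $P_k$, and the simulation lemma converts $Q^{\tilpi_k,\hatc_k}-Q^{P_k,\tilpi_k,\hatc_k}$ into transition-error terms that are small only when weighted by the occupancy measure, not uniformly over states. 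The paper never proves a pointwise bound; it bounds the occupancy-weighted quantity $\sumk\inner{q_k}{\hatc_k\odot(Q^{\tilpi_k,\hatc_k}-Q^{P_k,\tilpi_k,\hatc_k})}$ via \pref{lem:om-diff}, \pref{lem:transition-bound} and the pigeonhole bound $\sumk\sumsa q_k(s,a)/\sqrt{\N^+_k(s,a)}=\tilO{\sqrt{SAHK}}$, obtaining $\tilO{H^5S^3A+K}$, which is enough only because it sits inside the $\sqrt{S^2A(\cdot)}$ term. You flag this transfer as ``the main obstacle'' but do not resolve it, so as written the bound $\sum_k\inner{q_k}{\hatc_k\odot Q^{\tilpi_k,\hatc_k}}=\tilO{D^2K}$ (and hence your $(\textsc{Bias})=\tilo{DS\sqrt{AK}}$ and your Freedman variance for $(\textsc{Dev})$) is unsupported.

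Second, your bandit-feedback bound does not follow from the argument you sketch. Pairing the aggregate facts $\sumsa\hatq_k(s,a)\le T$ and $\sumk\inner{\hatq_k}{c}\lesssim DK$ with ``a pigeonhole step that sums $q_k(s,a)/\sqrt{\N^c_k(s,a)}$'' only gives $\tilO{\sqrt{DHSAK}}$: conditional on a visit, a pair can be visited up to $H$ times in an episode while $\N^c_k(s,a)$ increments by only one, so $\sumk\sumsa q_k(s,a)/\N^c_k(s,a)$ can be of order $HSA$ (e.g.\ a pair visited with probability $T/H$ and $H$ times per visit, with cost $\approx D/T$, makes the direct bonus sum $\Theta(\sqrt{DHK})$ while respecting both aggregate constraints). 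Since $H\gtrsim\Tmax\ln K$ can vastly exceed $T$, this misses the claimed $\tilo{\sqrt{DTSAK}}$. The paper's proof avoids this by the first-visit decomposition $q_k(s,a)=\sum_h x_k(s,a,h)\,q_{k,(s,a,h)}(s,a)$, using the \emph{pointwise} optimistic bound $\hatq_{k,(s,a,h)}(s,a)\hatc_k(s,a)\le Q^{P_k,\tilpi_k,\hatc_k}(s,a)\lesssim D$ from \pref{lem:fast-action} (so the $1/\N^c_k$ factor is paired with the visit probability $x_k(s,a)\le 1$ rather than with expected visits), together with a separate control of the discrepancy between $q_{k,(s,a,h)}$ and $\hatq_{k,(s,a,h)}$ (the $\zeta_5$ term). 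These ingredients are absent from your proposal, so the bandit case is a genuine missing idea rather than a routine pigeonhole.
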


Comparing with the lower bounds, one sees that our bounds are only $\sqrt{S}$ factor larger, a gap that also appears in other settings such as~\citep{cohen2020near}.
Unfortunately, we are not able to obtain the same improvement in the general adversarial setting, and we in fact conjecture that the lower bound there can be improved to at least $\lowO{\sqrt{ SAD\T K }}$, which, if true, would require a lower bound construction that is actually adversarial, instead of being stochastic as in most existing lower bound proofs.

\section*{Acknowledgements}
We thank Aviv Rosenberg, Chen-Yu Wei, and the anonymous reviewers for many helpful discussions and feedback.
This work is supported by NSF Award IIS-1943607 and a Google Faculty Research Award.

\bibliographystyle{plainnat}
\bibliography{bib}

\newpage
\onecolumn
\appendix


\section{Loop-free reduction}
\label{app:loop-free}

In this section, we give full proofs of lemmas related to the proposed loop-free reduction.

\subsection{\pfref{lem:loop-free}}
\begin{proof}
Denote by $N'_k(s, a)$ the number of visits to $(s, a)$ during episode $k$ before switching to Bernstein-SSP,
by $N''_k(s, a)$ the number of visits to $(s, a)$ after switching to Bernstein-SSP,
and by $N_f$ the number of episodes where Bernstein-SSP is invoked. 
We have: $N_k(s, a)=N'_k(s, a)+N''_k(s, a)$ and $\sumk\inner{\tilN_k}{c_k}=\sumk\inner{N'_k}{c_k}+H_2N_f$.
Recall that the regret of running Bernstein-SSP~\citep{cohen2020near} for $K'$ episodes under uniform cost is of $\bigO{ DS\sqrt{AK'}\ln\frac{K'DSA}{\delta} + \sqrt{D^3S^4A^2}\ln^2\frac{K'DSA}{\delta} }$ with probability at least $1-\delta$.
Conditioned on the event above, we have:
\begin{align*}
	\sumk\inner{N''_k}{c_k} - H_2N_f &\leq \rbr{\sumk\inner{N''_k}{c_k} - DN_f} - DN_f \tag{$H_2\geq 2D$}\\
	&\leq \bigO{DS\sqrt{AN_f}\ln\frac{KDSA}{\delta} + \sqrt{D^3S^4A^2}\ln^2\frac{KDSA}{\delta}} - DN_f\\
	&= \bigO{D^{3/2}S^2A\ln^2\frac{KDSA}{\delta}},
\end{align*}
where in the last inequality we solve for the maximum of a quadratic function with variable $N_f$.
Therefore,
\begin{align*}
	\sumk\inner{N_k}{c_k} = \sumk\inner{N'_k}{c_k} + H_2N_f + \sumk\inner{N''_k}{c_k} - H_2N_f \leq \sumk\inner{\tilN_k}{c_k} + \tilO{D^{3/2}S^2A\ln^2\frac{1}{\delta}}.
\end{align*}
On the other hand, by \pref{lem:hitting}, the probability that the goal state is not reached within $H_1$ steps when executing $\optpi$ is at most $2e^{-\frac{H_1}{4\Tmax}} \leq \frac{2}{K^2}$.
Hence, the expected cost of $\optpi$ in $M$ and the expected cost of $\tiloptpi$ in $\tilM$ is very similar:
\begin{align*}
J^{\tiloptpi}_k(\tils_0)  \leq J^{\optpi}_k(s_0) + \frac{2H_2}{K^2} =  
J^{\optpi}_k(s_0) + \tilO{\frac{1}{K}}.
\end{align*}
	Putting everything together, and by $K\geq D$, we get:
	\begin{align*}
		R_K = \sum_{k=1}^K\inner{N_k}{c_k} - J^{\optpi}_k(s_0) \leq \sumk\inner{\tilN_k - \tiloptq}{c_k} +\tilO{D^{3/2}S^2A\ln^2\frac{1}{\delta}}.
	\end{align*}
\end{proof}

\subsection{\pfref{lem:deviation_loop_free}}
\begin{proof}
	With the inequality $(\sum_{i=1}^I a_i)^2\leq 2\sum_{i}a_i(\sum_{i'= i}^I a_{i'})$, we proceed as
	\begin{align*}
		&\E_k\left[\left( \sum_{(s, a)\in\tilSA, h} \tilN_{k}(s, a, h)c_k(s, a, h) \right)^2\right] \\
		&\leq 2\E_k\left[\sum_{h=1}^{H}\sumtilsaf \tilN_{k}(s, a, h)c_{k}(s, a, h)\left( \sum_{h'= h}^{H}\sumtilsaf[s', a'] \tilN_{k}(s', a', h')c_k(s', a', h') \right)\right] \\
		&= 2\E_k\left[\sum_{h=1}^{H}\sumtilsaf \tilN_{k}(s, a, h)c_{k}(s, a, h)\E\left[\left. \sum_{h'=h}^{H}\sumtilsaf[s', a'] \tilN_{k}(s', a', h') c_k(s', a', h') \right| \tils^h_k=(s, h), a^h_k=a\right]\right]\\
		&= 2\E_k\left[\sum_{h=1}^{H}\sumtilsaf \tilN_{k}(s, a, h)c_k(s, a, h)Q^{\tilpi}_k(s, a, h)\right] = 2\inner{q_{\tilpi}}{c_k\odot Q^{\tilpi}_k}\\
		&\leq 2\E_k\left[\sum_{h=1}^{H}\sumtilsaf \tilN_{k}(s, a, h)Q^{\tilpi}_k(s, a, h)\right] = 2 \sum_{h=1}^{H}\sumtilsaf q_{\tilpi}(s, a, h)Q^{\tilpi}_k(s, a, h)\\
		&= 2\sum_{h=1}^H\sum_{s\in\calS\cup\{s_f\}} q_{\tilpi}(s, h)\sum_a\tilpi(a|s)Q^{\tilpi}_k(s, a, h) = 2\sum_{h=1}^H\sum_{s\in\calS\cup\{s_f\}}q_{\tilpi}(s, h)J^{\tilpi}_k(s, h) = 2\inner{q_{\pi}}{J^{\pi}_k}.
	\end{align*}
	This proves the first two inequalities.
	Denote by $q_{\tilpi, (s, h)}$ the occupancy measure of policy $\tilpi$ with initial state $(s, h)$, so that 
	\[
	J^{\tilpi}_k(s, h) = \sum_{(s', a')\in\tilSA}\sum_{h'\geq h}q_{\tilpi,(s, h)}(s', a', h')c_k(s', a', h').
	\]
	Then, we continue with the following equalities:
	\begin{align*}
		&\sum_{h=1}^H\sum_{s\in\calS\cup\{s_f\}}q_{\tilpi}(s, h)J^{\tilpi}_k(s, h) \\
		&= \sum_{h=1}^{H}\sum_{s\in\calS\cup\{s_f\}} q_{\tilpi}(s, h)\sum_{(s', a')\in\tilSA}\sum_{h'\geq h}q_{\tilpi,(s, h)}(s', a', h')c_k(s', a', h')\\
		&= \sum_{h=1}^{H}\sum_{(s', a')\in\tilSA}\sum_{h'\geq h}\left(\sum_{s\in\calS\cup\{s_f\}} q_{\tilpi}(s, h)q_{\tilpi,(s, h)}(s', a', h')\right) c_k(s', a', h')\\
		&= \sum_{h=1}^{H}\sum_{(s', a')\in\tilSA}\sum_{h'\geq h}q_{\tilpi}(s', a', h')c_k(s', a', h') = \sum_{h'=1}^{H}\sum_{(s', a')\in\tilSA}\sum_{h\leq h'}q_{\tilpi}(s', a', h')c_k(s', a', h') \\
		&= \sum_{h'=1}^{H}\sum_{(s', a')\in\tilSA} h'\cdot q_{\tilpi}(s', a', h')c_k(s', a', h')
		= \inner{q_{\tilpi}}{\h{c_k}},
	\end{align*}
	where in the third line we use the equality $\sum_{s\in\calS\cup\{s_f\}} q_{\tilpi}(s, h)q_{\tilpi,(s, h)}(s', a', h') = q_{\tilpi}(s', a', h')$ by definition (since both sides are the probability of visiting $(s', a', h')$).
	This proves the last equality and completes the proof.
\end{proof}

\begin{lemma}{\citep[Lemma E.1]{rosenberg2020adversarial}}\label{lem:hitting}
	Let $\pi$ be a policy with expected hitting time at most $\tau$ starting from any state.
	Then, the probability that $\pi$ takes more than $m$ steps to reach the goal state is at most $2e^{-\frac{m}{4\tau}}$.
\end{lemma}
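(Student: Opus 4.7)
The target statement is a tail bound on the hitting time of a stationary policy $\pi$ whose expected hitting time from every starting state is at most $\tau$. The natural approach is the standard ``halving'' trick: combine Markov's inequality with the Markov property to get geometric decay, and then convert the geometric decay into the exponential bound $2e^{-m/(4\tau)}$.

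The plan is as follows. First, let $T_s$ denote the (random) number of steps $\pi$ takes to reach the goal when started from state $s$, so that by hypothesis $\E[T_s] \leq \tau$ for every $s \in \calS$. By Markov's inequality, $\Pr[T_s > 2\tau] \leq \E[T_s]/(2\tau) \leq 1/2$, uniformly in the starting state $s$. Next, I would iterate this via the (strong) Markov property: if the policy has not reached $g$ after $2\tau$ steps, it is at some (random) state $s'$, and conditional on this the remaining hitting time is distributed as $T_{s'}$; applying the same Markov bound gives $\Pr[T_s > 4\tau \mid T_s > 2\tau] \leq 1/2$. Chaining this $k$ times yields
\[
\Pr[T_s > 2k\tau] \;\leq\; 2^{-k}
\qquad\text{for every integer } k \geq 0.
\]

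To pass from this geometric statement about multiples of $2\tau$ to an arbitrary number of steps $m$, I would set $k = \lfloor m/(2\tau)\rfloor$, so that $2k\tau \leq m$ and $k \geq m/(2\tau) - 1$. Monotonicity of the tail probability then gives
\[
\Pr[T_s > m] \;\leq\; \Pr[T_s > 2k\tau] \;\leq\; 2^{-k} \;\leq\; 2 \cdot 2^{-m/(2\tau)} \;=\; 2 \exp\!\bigl(-\tfrac{m\ln 2}{2\tau}\bigr).
\]
Finally, since $\ln 2 > 1/2$, we have $m\ln 2/(2\tau) \geq m/(4\tau)$, which yields the claimed bound $\Pr[T_s > m] \leq 2 e^{-m/(4\tau)}$.

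None of the steps is really hard; the only point requiring a bit of care is the application of the Markov property in the iteration step, since I need that the post-$2\tau$-step conditional distribution of the remaining hitting time is again that of $T_{s'}$ for the state $s'$ occupied at that time. Because $\pi$ is a stationary policy and the MDP dynamics are time-homogeneous, this is immediate from the strong Markov property, and the uniform-in-state hypothesis $\E[T_{s'}] \leq \tau$ is exactly what allows Markov's inequality to be reapplied with the same constants. Everything else is elementary manipulation (using $\ln 2 \geq 1/2$ for the final exponential conversion).
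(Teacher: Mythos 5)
Your proof is correct, and it is the standard Markov-inequality-plus-iteration argument used in the cited source (\citealp[Lemma E.1]{rosenberg2020adversarial}); the present paper simply imports the lemma without reproving it. The only point worth a remark is the conditioning at time $2k\tau$ when $2\tau$ is not an integer, which is handled by conditioning at integer times (or noting the hitting time is integer-valued) and does not affect the constants.
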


\section{Omitted details for \pref{sec:full-unknown}}
\label{app:full-unknown}

In this section, we provide all omitted algorithms and proofs for \pref{sec:full-unknown}.
We first introduce a lemma of a Bellman type law of total variance~\citep{azar2017minimax}, which is the key in obtaining a regret bound without $\Tmax$ dependency in the dominating term.
Then in \pref{sec:trans-est}, we provide details on transition estimation and prove the main lemma (\pref{lem:transition-bias}) that gives a data dependent upper bound on the transition estimation error.
Finally we prove \pref{thm:full-unknown} in \pref{sec:thm-full-unknown}.

In the rest of this section, we use the shorthand $\V_k$ for $\V[\cdot|\calF_k]$.
\begin{lemma}\label{lem:var}
	Consider executing policy $\tilpi_k$ induced by occupancy measure $q_k$ in $\tilM$ in episode $k$ with an arbitrary  cost function $c_k:\SA\rightarrow[0, \infty)$, and define $\fV_k(s, a, h)=\V_{S'\sim P(\cdot|s, a, h)}[J^{\tilpi_k}_k(S', h+1)]$.
	Then, $\inner{q_k}{\fV_k} \leq \V_k\sbr{\inner{\tilN_k}{c_k}}$.
\end{lemma}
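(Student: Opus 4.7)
The plan is to prove this via a Bellman-type law of total variance applied iteratively along the trajectory generated by $\tilpi_k$ in $\tilM$, in the spirit of \citet{azar2017minimax}. Since $c_k$ and the policy $\tilpi_k$ are both $\calF_k$-measurable, under the conditional distribution $\V_k[\cdot] = \V[\cdot\mid \calF_k]$ the only randomness is the trajectory $(s^1, a^1), (s^2, a^2), \ldots$ with $s^1 = \tils_0$ fixed. Writing $C_{\geq h} = \sum_{h'=h}^H c_k(s^{h'}, a^{h'}, h')$ so that $\inner{\tilN_k}{c_k} = C_{\geq 1}$, I would first record the identities $J^{\tilpi_k}_k(s,h) = \E[C_{\geq h}\mid s^h=s, \calF_k]$ and $Q^{\tilpi_k}_k(s,a,h) = \E[C_{\geq h}\mid s^h=s, a^h=a, \calF_k]$.

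The core step is a two-stage variance decomposition at each layer $h$. Conditioning on $(s^h, a^h)$ and applying the law of total variance with respect to $s^{h+1}$, the deterministic term $c_k(s^h, a^h, h)$ drops out and the Markov property gives $\E[C_{\geq h+1}\mid s^{h+1}, \calF_k] = J^{\tilpi_k}_k(s^{h+1}, h+1)$, yielding
\begin{align*}
\V_k[C_{\geq h} \mid s^h, a^h]
&= \V_k[J^{\tilpi_k}_k(s^{h+1}, h+1) \mid s^h, a^h] + \E_k[\V_k[C_{\geq h+1} \mid s^{h+1}] \mid s^h, a^h] \\
&= \fV_k(s^h, a^h, h) + \E_k[\V_k[C_{\geq h+1} \mid s^{h+1}] \mid s^h, a^h].
\end{align*}
Conditioning on $s^h$ alone and discarding the non-negative variance contributed by the action randomization of $\tilpi_k$ then gives $\V_k[C_{\geq h}\mid s^h] \geq \E_k[\V_k[C_{\geq h}\mid s^h, a^h]\mid s^h]$, which is precisely where the inequality (rather than equality) in the lemma arises.

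Unrolling this two-stage recursion from $h = 1$ (where $s^1$ is deterministic, so $\V_k[C_{\geq 1}] = \V_k[C_{\geq 1}\mid s^1]$) up to $h = H$ telescopes to
\[
	\V_k\sbr{\inner{\tilN_k}{c_k}} \;\geq\; \E_k\sbr{\sum_{h=1}^H \fV_k(s^h, a^h, h)} \;=\; \sum_{h=1}^H \sumtilsaf q_k(s,a,h)\fV_k(s,a,h) \;=\; \inner{q_k}{\fV_k},
\]
where the middle equality is the definition of the occupancy measure $q_k$. The loop-free structure of $\tilM$ caps the horizon at $H$, so no convergence concern arises. The main obstacle is really just bookkeeping: identifying carefully which conditional variances appear at each stage, verifying via the Markov property that $\E[C_{\geq h+1}\mid\calF_k, s^h, a^h, s^{h+1}] = J^{\tilpi_k}_k(s^{h+1}, h+1)$, and tracking that the terms thrown away at each layer (the variance over $a^h$ given $s^h$) are non-negative, so the telescoping goes through as a one-sided inequality.
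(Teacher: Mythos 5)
Your proposal is correct and follows essentially the same route as the paper: a layer-by-layer Bellman-type law-of-total-variance decomposition that extracts $\fV_k(s^h,a^h,h)$ at each step and discards the non-negative action-randomization term (which in the paper appears as the dropped square of the Bellman residual $Q^{\tilpi_k}_k(s^h,a^h,h)-J^{\tilpi_k}_k(s^h,h)$), then telescopes to $\E_k[\sum_h \fV_k(s^h,a^h,h)] = \inner{q_k}{\fV_k}$. The paper writes the same argument by expanding squares and using martingale orthogonality rather than invoking the law of total variance explicitly, but the decomposition and the source of the one-sided inequality are identical.
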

\begin{proof}
	First, we have
	\begin{align*}
		&\V_k\sbr{\inner{\tilN_k}{c_k}} = \E_k\sbr{ \rbr{\sum_{h=1}^H c_k(s^h, a^h, h) - J^{\tilpi_k}_k(s^1, 1)}^2}\\
		&= \E_k\sbr{ \rbr{ \sum_{h=2}^H c_k(s^h, a^h, h) - J^{\tilpi_k}_k(s^2, 2) + c_k(s^1, a^1, 1) + J^{\tilpi_k}_k(s^2, 2) - J^{\tilpi_k}_k(s^1, 1) }^2 }.
	\end{align*}
	Note that $c_k(s^1, a^1, 1) + J^{\tilpi_k}_k(s^2, 2) - J^{\tilpi_k}_k(s^1, 1)\in\sigma(s^1, a^1, s^2)$ and $\E_k\sbr{\left. \sum_{h=2}^H c_k(s^h, a^h, h) - J^{\tilpi_k}_k(s^2, 2) \right| s^1, a^1, s^2}=0$.
	Thus,
	\begin{align*}
		&\E_k\sbr{ \rbr{ \sum_{h=2}^H c_k(s^h, a^h, h) - J^{\tilpi_k}_k(s^2, 2) + c_k(s^1, a^1, 1) + J^{\tilpi_k}_k(s^2, 2) - J^{\tilpi_k}_k(s^1, 1) }^2 }\\
		&= \E_k\sbr{ \rbr{\sum_{h=2}^H c_k(s^h, a^h, h) - J^{\tilpi_k}_k(s^2, 2)}^2 } + \E_k\sbr{\rbr{ c_k(s^1, a^1, 1) + J^{\tilpi_k}_k(s^2, 2) - J^{\tilpi_k}_k(s^1, 1) }^2 }.
	\end{align*}
	Moreover, $c_k(s^1, a^1, 1) + \sum_{s'}P(s'|s^1, a^1, 1)J^{\tilpi_k}_k(s', 2) - J^{\tilpi_k}_k(s^1, 1)\in\sigma(s^1, a^1)$ and 
	$$\E_k\sbr{\left. J^{\tilpi_k}_k(s^2, 2) - \sum_{s'}P(s'|s^1, a^1, 1)J^{\tilpi_k}_k(s', 2) \right| s^1, a^1}=0.$$
	Thus,
	\begin{align*}
		&\E_k\sbr{\rbr{ c_k(s^1, a^1, 1) + J^{\tilpi_k}_k(s^2, 2) - J^{\tilpi_k}_k(s^1, 1) }^2 } \\
		&= \E_k\sbr{\rbr{ c_k(s^1, a^1, 1) + \sum_{s'}P(s'|s^1, a^1, 1)J^{\tilpi_k}_k(s', 2) - J^{\tilpi_k}_k(s^1, 1) }^2}  + \E_k\sbr{\rbr{ J^{\tilpi_k}_k(s^2, 2) - \sum_{s'}P(s'|s^1, a^1, 1)J^{\tilpi_k}_k(s', 2) }^2} \\
		&\geq \E_k\sbr{\rbr{ J^{\tilpi_k}_k(s^2, 2) - \sum_{s'}P(s'|s^1, a^1, 1)J^{\tilpi_k}_k(s', 2) }^2} = \E_k[\fV_k(s^1, a^1, 1)].
	\end{align*}
	Plugging these back, we obtain:
	\begin{align*}
		\V_k\sbr{\inner{\tilN_k}{c_k}} &= \E_k\sbr{ \rbr{\sum_{h=1}^H c_k(s^h, a^h, h) - J^{\tilpi_k}_k(s^1, 1)}^2}\\
		&\geq \E_k\sbr{\rbr{\sum_{h=2}^H c_k(s^h, a^h, h) - J^{\tilpi_k}_k(s^2, 2)}^2} + \E_k[\fV_k(s^1, a^1, 1)].
	\end{align*}
	Doing this repeatedly, we get:
	\begin{align*}
		\V_k\sbr{\inner{\tilN_k}{c_k}} \geq \E_k\sbr{\sum_{h=1}^H \fV_k(s^h, a^h, h)} = \sumsah q_k(s, a, h)\fV_k(s, a, h) = \inner{q_k}{\fV_k},
	\end{align*}
	finishing the proof.
\end{proof}

\subsection{Omitted Details for Transition Estimation}
\label{sec:trans-est}
In this subsection, we introduce a sub-procedure $\TransEst$ (\pref{alg:update_transition}) used in all algorithms proposed in this paper for transition estimation. It takes the learner's trajectory as input and outputs a confidence set $\calP_{k+1}$ (and additionally the counter $\tilN_k$ for the bandit setting) at the end of episode $k$.
We first introduce some notations. 
Denote by $\tcs$ the set of valid transitions $\tilP$ for $\tilM$ based on its layer structure.
Also define the entries of unknown transition in $\tilM$ as $\unk=\Gamma\times(\calS\cup \{g\})\times[H_1-1]$, such that $\tilP(s'|s, a, h)=P(s'|s, a),\forall (s, a), s', h\in\unk$.

We implement the confidence sets of transition function by maintaining a separate (empirical) Bernstein confidence bound for each state, action, and next state in $M$ around its empirical estimate:
\begin{align*}
	\epsilon_k(s, a, s') &= 4\sqrt{\bar{P}_k(s'| s, a)\Ak(s, a)} + 28\Ak(s, a), 
\end{align*}
for any $(s, a)\in\SA, s'\in\calS\cup\{g\}$, where $\bar{P}_k(s'| s, a) = \frac{\Mk(s, a, s')}{\Nk(s, a)}$ is the empirical transition estimation, $\Nk(s, a)=\max\{1, \N_k(s, a)\}$, $\N_k(s, a)$ is the number of visits to $(s, a)$ before episode $k$, $\Mk(s, a. s')$ is the number of visits to $(s, a), s'$ before episode $k$, and $\Ak(s, a) = \frac{\ln(\frac{HKSA}{\delta})}{\Nk(s, a)}$.
We then define the confidence set in episode $k$ as follows:
\begin{align}
	\label{eq:conf-set}
	\calP_k &= \Big\{\hatP\in \tcs: \lvert \hatP(s'| s, a, h) - \bar{P}_k(s'| s,a) \rvert \leq \epsilon_k(s, a, s'), \forall(s, a), s', h\in \unk \Big\}.
\end{align}

\begin{algorithm}[t]
\caption{$\TransEst(\N, \M, \delta, H_1, H_2, \tau)$}
\label{alg:update_transition}
	\textbf{Input:} counters $\N, \M$, confidence parameter $\delta$, horizon parameters $H_1, H_2$, trajectory $\tau=\{s^1, a^1, s^2, a^2, \ldots, a^{h-1}, s^h\}$.
	
	\textbf{Initialization:} $\tilN(s, a)=0$, $\N_{k+1}(s, a)=\N_{k}(s, a)$, and $\M_{k+1}(s, a)=\M_{k}(s, a)$ for any $(s, a)\in\tilSA$.
	
	\For{$t=1,\ldots, h-1$}{
		Update counters: $\N_{k+1}(s^t, a^t)\leftarrow \N_k(s^t, a^t)+1$, $\M_{k+1}(s^t, a^t, s^{t+1})\leftarrow \M_k(s^t, a^t, s^{t+1}) + 1$, $\tilN(s^t, a^t)\leftarrow\tilN(s^t, a^t)+1$.
	}
		
	Compute confidence set $\calP_{k+1}$ defined in \pref{eq:conf-set}.
	
	\If{$s^h\neq g$}{
		$\tilN(s_f, a_f)\leftarrow H_2$.
	}
	
	\textbf{Return:} $\calP_{k+1}$ (and additionally $\tilN_k$ for the bandit setting).
\end{algorithm}

Next, we introduce several lemmas useful for bounding the bias of transition estimation.

\begin{lemma}{(Lemma 4.2 in \citep{cohen2020near})}\label{lem:transition-interval}
	With probability at least $1-\delta$, $P\in\calP_k$ for all $k$.
\end{lemma}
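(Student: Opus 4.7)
The plan is to obtain this high-probability bound through an empirical Bernstein concentration argument for each state-action-next-state triple, combined with an appropriate union bound. First, I would fix $(s,a)\in\SA$ and $s'\in\calS\cup\{g\}$, and observe that conditional on visiting $(s,a)$, the indicators of transitioning to $s'$ form an i.i.d.\ Bernoulli sequence with mean $P(s'|s,a)$. A standard empirical Bernstein (Maurer–Pontil) inequality then gives, for any fixed sample count $n$ and confidence parameter $\delta'$,
\[
\lvert \bar{P}(s'|s,a) - P(s'|s,a) \rvert \;\leq\; 2\sqrt{\frac{\bar{P}(s'|s,a)\ln(2/\delta')}{n}} \;+\; \frac{7\ln(2/\delta')}{3n},
\]
which matches the form of $\epsilon_k(s,a,s')$ up to absolute constants once the logarithmic factor inside $\Ak(s,a)$ is calibrated appropriately.

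Second, since the actual sample count $\N_k(s,a)$ is random and determined by the learner's past trajectories, I would extend the above to a time-uniform statement by a union bound over $n \in \{1, \ldots, KH_1\}$ (a deterministic upper bound on the number of visits to any state-action pair in the unknown-layer portion), while simultaneously taking a union bound over the at most $S^2A$ triples $(s,a,s')$. Choosing $\delta' = \delta/(S^2 A K H_1)$ absorbs all of these union bounds into a single logarithmic factor of order $\ln(HKSA/\delta)$, exactly matching the factor appearing in $\Ak(s,a)$.

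Third, on the good event (of probability at least $1-\delta$), the empirical Bernstein bound $|\bar{P}_k(s'|s,a) - P(s'|s,a)| \leq \epsilon_k(s,a,s')$ holds for every triple and every episode $k$ simultaneously. Since by construction of the loop-free MDP $\tilM$ we have $\tilP(s'|s,a,h) = P(s'|s,a)$ for every $(s,a,s',h) \in \unk$, the true loop-free transition satisfies every constraint in the definition of $\calP_k$ from \pref{eq:conf-set}, yielding $P \in \calP_k$ for all $k$ as desired.

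The main subtlety to manage is the random sample size: one cannot apply a fixed-$n$ Bernstein bound directly with $n = \N_k(s,a)$ since the counter is a stopping-time-like quantity adapted to the learner's filtration. Two standard remedies apply: (i) a naive union bound over all possible values $n \in [1, KH_1]$, which is polynomial and thus costs only an extra logarithmic factor, or (ii) a time-uniform (peeling or self-normalized) martingale argument. Since the former is simpler and suffices for matching the form of $\Ak$, this is the route I would take, effectively reproducing the argument of \citet{cohen2020near}.
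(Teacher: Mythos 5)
Your argument is correct and is essentially the proof the paper relies on: the paper does not prove this lemma itself but imports it from Lemma 4.2 of \citet{cohen2020near}, whose proof is exactly this per-triple empirical Bernstein bound (via an anytime/union-over-$n$ argument handling the random counter $\N_k(s,a)$) combined with a union bound over all $(s,a,s')$, plus the observation that $\tilP(s'|s,a,h)=P(s'|s,a)$ on $\unk$ so the true transition satisfies every constraint of \pref{eq:conf-set}. Your two remedies for the random sample size (union over $n\le KH_1$ or a time-uniform martingale bound) are both standard and match the cited reference up to constants and the calibration of the log factor in $\Ak(s,a)$.
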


\begin{lemma}
	\label{lem:transition-bound}
	Assume $\hatP\in\calP_k$.
	Then, under the event of \pref{lem:transition-interval} we have
	\begin{align*}
		|\hatP(s'|s, a, h) - P(s'|s, a, h)| \leq \Ind\{(s, a), s', h\in U\}\rbr{ 8\sqrt{P(s'|s, a, h)\Ak(s, a)} + 136\Ak(s, a)} \defeq \epsilon^\star_k(s, a, s').
	\end{align*}
\end{lemma}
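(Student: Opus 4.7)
}
The plan is to combine the confidence-set definition with the inclusion $P\in\calP_k$ (guaranteed by \pref{lem:transition-interval}) via the triangle inequality, and then to replace the empirical mean $\bar{P}_k(s'|s,a)$ inside the square root by the true mean $P(s'|s,a,h)$ using a standard empirical-Bernstein inversion trick. The indicator $\Ind\{(s,a),s',h\in U\}$ takes care of the trivial case: for $(s,a),s',h\notin U$, the transition in $\tilM$ is deterministic and known by construction (either the forced shift to $s_f$ in layers $h\geq H_1$ or the forced transition to $g$ in the last layer), so $\hatP(s'|s,a,h)=P(s'|s,a,h)$ and the inequality holds with zero on both sides.

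For $(s,a),s',h\in U$, the definition of $\calP_k$ in \pref{eq:conf-set} gives
\[
|\hatP(s'|s,a,h)-\bar{P}_k(s'|s,a)|\le \epsilon_k(s,a,s'),
\]
and by \pref{lem:transition-interval} the same inequality holds with $\hatP$ replaced by $P$. Triangle inequality then yields
\[
|\hatP(s'|s,a,h)-P(s'|s,a,h)|\le 2\epsilon_k(s,a,s')=8\sqrt{\bar{P}_k(s'|s,a)\Ak(s,a)}+56\Ak(s,a).
\]

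The remaining step, and the only nontrivial one, is to pass from $\bar{P}_k$ to $P$ under the square root. Applying $P\in\calP_k$ again gives $\bar{P}_k(s'|s,a)\le P(s'|s,a,h)+4\sqrt{\bar{P}_k(s'|s,a)\Ak(s,a)}+28\Ak(s,a)$. Viewing this as a quadratic inequality in $x=\sqrt{\bar{P}_k(s'|s,a)}$ of the form $x^2-4\sqrt{\Ak(s,a)}\,x-(P(s'|s,a,h)+28\Ak(s,a))\le 0$ and solving yields $\sqrt{\bar{P}_k(s'|s,a)}\le \sqrt{P(s'|s,a,h)}+c\sqrt{\Ak(s,a)}$ for an explicit constant $c$ (I would use the elementary bound $\sqrt{u+v}\le\sqrt{u}+\sqrt{v}$ to avoid unpacking the quadratic formula). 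Multiplying through by $\sqrt{\Ak(s,a)}$ and squaring gives $\sqrt{\bar{P}_k(s'|s,a)\Ak(s,a)}\le\sqrt{P(s'|s,a,h)\Ak(s,a)}+c\Ak(s,a)$.

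Substituting this back into the triangle-inequality bound above and collecting terms produces the claimed bound of the form $8\sqrt{P(s'|s,a,h)\Ak(s,a)}+C\,\Ak(s,a)$; the arithmetic of constants is routine and, chasing the values $8,56$ obtained from doubling $\epsilon_k$ together with the additive constant coming from the inversion step, one arrives at $C=136$. The only real subtlety is choosing the intermediate constants in the square-root inversion tightly enough to land on $136$; everything else is bookkeeping.
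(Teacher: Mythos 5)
Your proposal is correct and takes essentially the same route as the paper: the paper dispatches the entries outside $U$ in exactly the same way and then simply invokes \citep[Lemma~B.13]{cohen2020near} for the remaining step, which is precisely the triangle-inequality-plus-empirical-Bernstein-inversion argument you carry out explicitly. The constants also check out: solving your quadratic gives $\sqrt{\bar{P}_k(s'|s,a)}\leq\sqrt{P(s'|s,a,h)}+(2+4\sqrt{2})\sqrt{\Ak(s,a)}$, so the additive term is $\bigl(56+8(2+4\sqrt{2})\bigr)\Ak(s,a)\approx 117\,\Ak(s,a)\leq 136\,\Ak(s,a)$ as claimed.
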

\begin{proof}
	First note that $\hatP(s'|s, a, h)=P(s'|s, a, h)$ when $(s, a), s', h\notin \unk$.
	When $(s, a), s', h\in\unk$, the result follows from $P(s'|s, a, h)=P(s'|s, a)$, definition of $\epsilon_k(s, a, s')$ and  \citep[Lemma B.13]{cohen2020near}.
\end{proof}

\begin{lemma}\label{lem:om-diff}
	For any occupancy measure $q$ and $\hatq$ induced by the same policy $\pi$ but different transition functions $P$ and $\hatP$ respectively, we have:
	\begin{align*}
		\hatq(s, a, h) - q(s, a, h) = \sum_{(s', a'), s''}\sum_{m=1}^{h-1}q(s', a', m)(\hatP(s''|s', a', m) - P(s''|s', a', m))\hatq_{(s'', m+1)}(s, a, h),
	\end{align*}
	where $\hatq_{(s'', m+1)}$ is the occupancy measure with respect to $\pi$, $\hatP$, and initial state $(s'', m+1)$.
	As a result, under the event of \pref{lem:transition-interval},
	\begin{align*}
		\abr{\inner{\hatq-q}{c}} &= \abr{\sum_{(s, a), s', h}q(s, a, h)(\hatP(s'|s, a, h) - P(s'|s, a, h))J^{\hatP, \pi}(s', h+1)}\\
		&\leq \inner{|\hatq-q|}{c} \leq H\sum_{(s, a), s', h\in\unk}q(s, a, h)\epsilon^\star_k(s, a, s').
	\end{align*}
\end{lemma}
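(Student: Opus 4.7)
The plan is to prove the recursive identity for $\hatq(s,a,h)-q(s,a,h)$ by induction on $h$, and then deduce both inequalities from it. For the base case $h=1$, both $\hatq(s,a,1)$ and $q(s,a,1)$ equal $\pi(a|s,1)\Ind\{s=s_0\}$, so the identity holds trivially with an empty sum on the right. For the inductive step, I would use the Bellman-type recursion $\hatq(s,a,h) = \pi(a|s,h)\sum_{(s',a')} \hatq(s',a',h-1)\hatP(s|s',a',h-1)$ (and the analogue for $q$), subtract, and apply the standard decomposition $\hatP\hatq - Pq = q(\hatP-P) + \hatP(\hatq-q)$ to produce two summands.

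The first summand, $\pi(a|s,h)\sum_{(s',a')} q(s',a',h-1)[\hatP(s|s',a',h-1)-P(s|s',a',h-1)]$, is exactly the $m=h-1$ term of the target sum, since $\pi(a|s,h)\Ind\{s=s''\} = \hatq_{(s'',h)}(s,a,h)$ (starting from the same state-layer leaves only the action draw). For the second summand, I would invoke the inductive hypothesis on $\hatq(s',a',h-1)-q(s',a',h-1)$ and use the Chapman-Kolmogorov-like identity $\sum_{(s',a')} \pi(a|s,h)\hatP(s|s',a',h-1)\hatq_{(s'',m+1)}(s',a',h-1) = \hatq_{(s'',m+1)}(s,a,h)$, which unfolds one more layer of the occupancy starting at $(s'',m+1)$. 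Adding the two contributions yields the full sum over $m=1,\ldots,h-1$, completing the induction.

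With the identity in hand, I would multiply both sides by $c(s,a,h)$, sum over $(s,a,h)$, and exchange the order of summation; the inner sum $\sum_{(s,a,h): h > m} c(s,a,h)\hatq_{(s'',m+1)}(s,a,h)$ equals $J^{\hatP,\pi,c}(s'',m+1)$ by definition of the cost-to-go, yielding the first displayed equality. The inequality $\abr{\inner{\hatq-q}{c}} \leq \inner{|\hatq-q|}{c}$ then follows from the triangle inequality together with $c\geq 0$. For the final bound, I apply \pref{lem:transition-bound} to replace $|\hatP(s''|s',a',m)-P(s''|s',a',m)|$ by $\epsilon^\star_k(s',a',s'')$ (whose indicator $\Ind\{\cdot\in\unk\}$ automatically restricts the outer sum to $\unk$), and bound $J^{\hatP,\pi,c}(s'',m+1)\leq H$ using $c\in[0,1]$ and the fact that the loop-free instance terminates by step $H$.

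The main obstacle is correctly unfolding the inductive hypothesis and verifying the Chapman-Kolmogorov identity above, while keeping careful track of how the index $m$ is combined across the base-level $m=h-1$ contribution and the recursively generated $m=1,\ldots,h-2$ contributions. Beyond that, the argument is essentially bookkeeping.
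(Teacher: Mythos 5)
Your proposal is correct and matches the paper's own proof essentially step for step: the same induction with the decomposition $\hatP\hatq - Pq = q(\hatP-P) + \hatP(\hatq-q)$, the same identification of the $m=h-1$ term via $\pi(a|s,h)\Ind\{s=s''\}=\hatq_{(s'',h)}(s,a,h)$ and the Chapman--Kolmogorov unfolding for the recursive term, and the same order-of-summation argument turning the inner sum into $J^{\hatP,\pi}(s'',m+1)$ before applying \pref{lem:transition-bound} and $J^{\hatP,\pi}\leq H$. No gaps.
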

\begin{proof}
	We prove the first statement by induction on $h$.
	Denote by $q_{(s, h)}$ the occupancy measure of $\pi_q$ with initial state $(s, h)$.
	When $h=1$, the statement is true by $q(s, a, h)=\hatq(s, a, h)=\pi(a|s, 1)\Ind\{s=s_0\}$.
	For the induction step with $h>1$:
	\begin{align*}
		&\hatq(s, a, h) - q(s, a, h) = \pi(a|s, h)(\hatq(s, h) - q(s, h))\\
		&= \pi(a|s, h)\sumsa[s', a']\rbr{\hatP(s|s', a', h-1)\hatq(s', a', h-1) - P(s|s', a', h-1)q(s', a', h-1)} \tag{$q(s, h)=\sumsa[s', a']P(s|s', a', h-1)q(s', a', h-1)$}\\
		&= \underbrace{\pi(a|s, h)\sumsa[s', a']\hatP(s|s', a', h-1)\rbr{\hatq(s', a', h-1) - q(s', a', h-1)}}_{\chi_1}\\
		&\qquad\qquad\qquad + \underbrace{\pi(a|s, h)\sumsa[s', a']q(s', a', h-1)\rbr{\hatP(s|s', a', h-1)-P(s|s', a', h-1)}}_{\chi_2}\\
	\end{align*}
	For $\chi_1$, by the induction step, we have:
	\[
		\hatq(s', a', h-1) - q(s', a', h-1) = \sum_{(s'', a''), s'''}\sum_{m=1}^{h-2}q(s'', a'', m)(\hatP-P)(s'''|s'', a'', m)\hatq_{(s''', m+1)}(s', a', h-1).
	\]
	Thus, by $\sum_{(s', a')}\hatq_{(s''', m+1)}(s', a', h-1)\hatP(s|s', a', h-1)\pi(a|s, h)=\hatq_{(s''', m+1)}(s, a, h)$:
	\begin{align*}
		\chi_1 &= \pi(a|s, h)\sumsa[s', a']\hatP(s|s', a', h-1)\sum_{(s'', a''), s'''}\sum_{m=1}^{h-2}q(s'', a'', m)(\hatP-P)(s'''|s'', a'', m)\hatq_{(s''', m+1)}(s', a', h-1)\\
		&= \sum_{(s'', a''), s'''}\sum_{m=1}^{h-2}q(s'', a'', m)\rbr{\hatP(s'''|s'', a'', m) - P(s'''|s'', a'', m)}\hatq_{(s''', m+1)}(s, a, h).
	\end{align*}
	For $\chi_2$, note that $\pi(a|s'', h)\Ind\{s''=s\}=q_{(s'', h)}(s, a, h)$.
	Thus,
	\begin{align*}
		&\pi(a|s, h)\rbr{\hatP(s|s', a', h-1)-P(s|s', a', h-1)}\\
		&= \sum_{s''}\pi(a|s'', h)\Ind\{s''=s\}\rbr{\hatP(s''|s', a', h-1)-P(s''|s', a', h-1)}\\
		&= \sum_{s''}q_{(s'', h)}(s, a, h)\rbr{\hatP(s''|s', a', h-1)-P(s''|s', a', h-1)}.\\
	\end{align*}
	and,
	\begin{align*}
		\chi_2 = \sum_{(s', a'), s''}q(s', a', h-1)(\hatP-P)(s''|s', a', h-1)q_{(s'', h)}(s, a, h).
	\end{align*}
	Plugging these back and changing variables $(s'', a''), s'''$ in $\chi_1$ to $(s', a'), s''$, we get:
	\begin{align*}
		\hatq(s, a, h) - q(s, a, h) &= \chi_1 + \chi_2\\ 
		&= \sum_{(s', a'), s''}\sum_{m=1}^{h-1}q(s', a', m)\rbr{\hatP(s''|s', a', m) - P(s''|s', a', m)}\hatq_{(s'', m+1)}(s, a, h).
	\end{align*}
	This completes the proof of the first statement.
	For the second statement,
	\begin{align*}
		\inner{\hatq-q}{c} &= \sum_{(s, a), h}\sum_{(s', a'), s''}\sum_{m=1}^{h-1}q(s', a', m)(\hatP(s''|s', a', m) - P(s''|s', a', m))\hatq_{(s'', m+1)}(s, a, h)c(s, a, h)\\
		&= \sum_{m=1}^H\sum_{(s', a'), s''}q(s', a', m)(\hatP(s''|s', a', m) - P(s''|s', a', m))\sum_{(s, a), h > m}\hatq_{(s'', m+1)}(s, a, h)c(s, a, h)\\
		&= \sum_{m=1}^H\sum_{(s', a'), s''}q(s', a', m)(\hatP(s''|s', a', m) - P(s''|s', a', m))J^{\hatP, \pi}(s'', m+1)\\
		&= \sum_{h=1}^H\sum_{(s, a), s'}q(s, a, h)(\hatP(s'|s, a, h) - P(s'|s, a, h))J^{\hatP, \pi}(s', h+1). \tag{change of variables} 
	\end{align*}
	Similarly, $\inner{|\hatq-q|}{c} \leq \sum_{h=1}^H\sum_{(s, a), s'}q(s, a, h)\abr{\hatP(s'|s, a, h) - P(s'|s, a, h)}J^{\hatP, \pi}(s', h+1)$.
	Applying the fact $J^{\hatP, \pi}(s', h+1)\leq H$ and \pref{lem:transition-bound} completes the proof.
\end{proof}

\begin{lemma}\label{lem:transition-sum}
	With probability at least $1-\delta$ we have: $\sumsa \sumk\frac{q_k(s, a)}{\Nk(s, a)} = \tilO{SAH}$.
\end{lemma}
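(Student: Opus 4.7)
The plan is to first use a Bernstein-type concentration inequality to relate the empirical counter $\Nk(s,a)$ to its expected value $\sum_{j<k}q_j(s,a)$, and then reduce the target sum to a per-pair harmonic-like sum over episodes. Throughout, fix $(s,a)\in\SA$ (with $s\neq s_f$); observe that the counter maintained inside $\TransEst$ satisfies $\N_k(s,a)=\sum_{j<k}\tilN_j(s,a)$, since $\RUN$ feeds back exactly the pre-Bernstein part of each trajectory. Consequently $\tilN_j(s,a)-q_j(s,a)$ is a martingale difference adapted to $\calF_{j+1}$, with increments bounded in absolute value by $H$ and conditional variance at most $H\cdot q_j(s,a)$ (since $\tilN_j(s,a)\in[0,H]$ with mean $q_j(s,a)$).

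\textbf{Step 1 (Freedman's inequality).} Applying Freedman to this martingale and union-bounding over $(s,a)\in\SA$ and $k\in[K]$ gives, with probability at least $1-\delta$, that simultaneously for all such $(s,a)$ and $k$,
\[
\sum_{j<k}q_j(s,a)-\N_k(s,a)\;\leq\; \sqrt{2H\iota\sum_{j<k}q_j(s,a)}+\tfrac{H\iota}{3},
\]
where $\iota=\ln(SAK/\delta)$. By $\sqrt{xy}\leq (x+y)/2$ this rearranges to $\Nk(s,a)\geq \tfrac{1}{2}\sum_{j<k}q_j(s,a)-cH\iota$ for a universal constant $c$.

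\textbf{Step 2 (per-pair harmonic sum).} Fix $(s,a)$, write $S_k=\sum_{j\leq k}q_j(s,a)$, and note that $q_k(s,a)\leq H$ because it sums occupancy mass across at most $H$ layers, each contributing at most $1$. Let $k^\star$ be the smallest index with $S_{k^\star-1}\geq 4cH\iota$. For $k<k^\star$, use the trivial bound $\Nk(s,a)\geq 1$ to get $\sum_{k<k^\star}q_k(s,a)/\Nk(s,a)\leq S_{k^\star-1}+H=\tilO{H}$. For $k\geq k^\star$, Step~1 gives $\Nk(s,a)\geq \tfrac{1}{4}S_{k-1}$, hence $q_k(s,a)/\Nk(s,a)\leq 4q_k(s,a)/S_{k-1}$. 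Since $q_k(s,a)\leq H\leq S_{k-1}$ in this range, the inequality $\ln(1+x)\geq x/2$ for $x\in[0,1]$ yields $q_k(s,a)/S_{k-1}\leq 2\ln(S_k/S_{k-1})$, and telescoping gives $\sum_{k\geq k^\star}q_k(s,a)/S_{k-1}\leq 2\ln(S_K/S_{k^\star-1})=\tilO{1}$. Combining both ranges produces $\sum_k q_k(s,a)/\Nk(s,a)=\tilO{H}$ for each fixed pair.

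\textbf{Step 3 (sum over pairs).} Summing the per-pair bound over the $SA$ valid pairs yields the claimed $\tilO{SAH}$. The main technical subtlety is Step~1: because $\tilN_j(s,a)$ can be as large as $H$ (not a $\{0,1\}$ variable as in the classical Jaksch-style visit-count lemma), a plain Azuma bound is too loose; using Freedman with the variance bound $\V_j[\tilN_j(s,a)]\leq H\cdot q_j(s,a)$ and then absorbing the resulting $\sqrt{H\iota S_{k-1}}$ term into $S_{k-1}/2$ via AM-GM is exactly what delivers the right $H$-dependence in the lower bound on $\Nk(s,a)$.
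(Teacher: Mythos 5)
Your proof is correct, but it takes a genuinely different route from the paper. The paper works with the \emph{realized} counts throughout: since $\tilN_k(s,a)=\N_{k+1}(s,a)-\N_k(s,a)$, the sum $\sumk \tilN_k(s,a)/\Nk(s,a)$ telescopes deterministically to $\order(\ln(HK)+H)$ per pair, and then a single application of a simple martingale lemma (\pref{lem:e2r}, applied to $X_k=\sumsa \tilN_k(s,a)/\Nk(s,a)\le H$, using that $\Nk$ is $\calF_k$-measurable) converts the realized ratios into the expected ones $q_k(s,a)/\Nk(s,a)$ at an additive cost $\tilO{H}$. You instead go in the opposite direction: you lower-bound the realized count $\Nk(s,a)$ by half the cumulative expected count via Freedman (union-bounded over all $(s,a)$ and $k$), and then run the harmonic/telescoping argument on the expected occupancies $q_k(s,a)$ with a threshold split at $S_{k-1}\gtrsim H\iota$. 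Both yield $\tilO{SAH}$; the paper's version is slightly leaner (one concentration step, no union bound over pairs and episodes, and the harmonic sum is an exact telescoping identity on counters), while yours is the more classical UCRL-style "counts dominate half the expected visits" argument, which is more modular and reusable wherever a lower bound on $\Nk$ is needed. One small caveat: the inequality you quote in Step~1, with the random predictable variance $H\sum_{j<k}q_j(s,a)$ inside the square root, is not the vanilla Freedman statement and strictly needs a peeling or a self-normalized variant; however, the rearranged conclusion $\Nk(s,a)\ge \tfrac12\sum_{j<k}q_j(s,a)-cH\iota$ that you actually use follows directly from the fixed-$\lambda$ Freedman inequality (\pref{lem:freedman}) with $\lambda=1/(2H)$ applied to $X_j=q_j(s,a)-\tilN_j(s,a)$, so this is a presentational rather than a substantive gap.
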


\begin{proof}
We first prove that for each $(s,a)$, 
$
\sumk\frac{\tilN_k(s, a)}{\Nk(s, a)} = \order(\ln (HK) + H)
$.
Indeed, we have
\begin{align*}
\sumk\frac{\tilN_k(s, a)}{\Nk(s, a)}
&= \sumk\frac{\tilN_{k}(s, a)}{\N_{k+1}^+(s, a)} + \sumk\tilN_{k}(s, a)\rbr{\frac{1}{\N_{k}^+(s, a)} - \frac{1}{\N_{k+1}^+(s, a)}} \\
&\leq \sumk\frac{\tilN_{k}(s, a)}{\N_{k+1}^+(s, a)} + H\sumk \rbr{\frac{1}{\N_{k}^+(s, a)} - \frac{1}{\N_{k+1}^+(s, a)}} \\
&\leq \sumk\frac{\tilN_{k}(s, a)}{\N_{k+1}^+(s, a)} + H \\
&= \order(\ln (HK) + H).  \tag{$\tilN_{k}(s, a) = \N_{k+1}(s, a) - \N_k(s, a)$}
\end{align*}
%
Therefore, applying the fact $\E_k[\tilN_k(s, a)] = q_k(s,a)$ and  \pref{lem:e2r}, we have with probability at least $1-\delta$:
	\begin{align*}
		\sumsa\sumk\frac{q_k(s, a)}{\Nk(s, a)} = \sumk\E_k\sbr{\sumsa\frac{n_k(s, a)}{\Nk(s, a)}} \leq 2\sumsa\sumk\frac{n_k(s, a)}{\Nk(s, a)} + \bigO{H\ln\frac{2K}{\delta}} = \tilO{SAH},
	\end{align*}
	competing the proof.
\end{proof}

\begin{lemma}
	\label{lem:transition-bias}
	Consider interacting with the environment for $K$ episodes, where in episode $k$ the executed policy is $\tilpi_k$ and the cost function denoted by $c_k: \tilSA\rightarrow[0, 1]$ is arbitrary.
	Also let $P_k$ be any transition function within the confidence set $\calP_k$ defined in \pref{eq:conf-set},
	and define $q_k=q_{P, \tilpi_k}$ and $\hatq_k=q_{P_k, \tilpi_k}$.
	Then with probability at least $1-4\delta$, we have
	\begin{align*}
		\sumk\abr{\inner{q_k-\hatq_k}{c_k}} &\leq 32\sqrt{S^2A\ln^2\rbr{\frac{HKSA}{\delta}} \rbr{\sumk\V_k\sbr{\inner{\tilN_k}{c_k}} + \tilO{H^3\sqrt{K}}}} + \tilO{\Qdtd}\\
		&\leq 32\sqrt{S^2A\ln^2\rbr{\frac{HKSA}{\delta}} \rbr{\sumk\inner{q_k}{\h{c_k}} + \tilO{H^3\sqrt{K}}}} + \tilO{\Qdtd} \\
		&\leq 16\rbr{\lambda'S^2A\rbr{\sumk\inner{q_k}{\h{c_k}} + \tilO{H^3\sqrt{K}} } + \frac{\ln^2\rbr{\frac{HKSA}{\delta}}}{\lambda'}} + \tilO{\Qdtd}.
	\end{align*}
	where $\lambda'>0$ is arbitrary.
\end{lemma}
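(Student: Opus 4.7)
The plan is to start from the exact identity established in \pref{lem:om-diff}:
\[
\inner{q_k - \hatq_k}{c_k} = \sum_{(s,a),s',h} q_k(s,a,h)\bigl(\hatP_k(s'|s,a,h) - P(s'|s,a,h)\bigr) J^{\hatP_k,\tilpi_k}_k(s',h+1).
\]
The naive bound $J^{\hatP_k,\tilpi_k}_k \leq H$ used directly in \pref{lem:om-diff} would inflate the estimate by an $H$ factor and lose contact with the variance structure of \pref{lem:var}; the whole point is to avoid that. Instead, I would first replace $J^{\hatP_k,\tilpi_k}_k$ by $J^{P,\tilpi_k}_k$ modulo a difference $\Delta_k(s',h+1)=J^{\hatP_k,\tilpi_k}_k(s',h+1)-J^{P,\tilpi_k}_k(s',h+1)$, which becomes the ``higher order'' term; and then exploit $\sum_{s'}(\hatP_k-P)(s'|s,a,h)=0$ to subtract the baseline $\bar J_k(s,a,h)\defeq\sum_{s'}P(s'|s,a,h)J^{P,\tilpi_k}_k(s',h+1)$ from $J^{P,\tilpi_k}_k$ inside the main term, thereby bringing in the Bellman variance $\fV_k(s,a,h)$.

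For the main term, I would plug in the Bernstein bound from \pref{lem:transition-bound}, namely $|\hatP_k(s'|s,a,h)-P(s'|s,a,h)|\leq \epsilon^\star_k(s,a,s')\leq 8\sqrt{P(s'|s,a,h)\Ak(s,a)}+136\Ak(s,a)$, and then apply Cauchy--Schwarz over $s'$ to the leading $\sqrt{P}$ piece:
\[
\sum_{s'}\sqrt{P(s'|s,a,h)\Ak(s,a)}\,|J^{P,\tilpi_k}_k(s',h+1)-\bar J_k(s,a,h)|\leq \sqrt{S\Ak(s,a)\,\fV_k(s,a,h)}.
\]
Weighting by $q_k(s,a,h)$ and applying Cauchy--Schwarz a second time over $(s,a,h)$ yields the per-episode estimate of the main term
\[
\textstyle\sqrt{S\sum_{s,a}q_k(s,a)\Ak(s,a)}\,\sqrt{\inner{q_k}{\fV_k}},
\]
up to a secondary contribution from the $136\Ak$ absolute-error piece which is lower order. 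Summing over $k$ with one more Cauchy--Schwarz, combining \pref{lem:transition-sum} (for $\sum_k\sum_{s,a}q_k(s,a)\Ak(s,a)$) with \pref{lem:var} (for $\sum_k\inner{q_k}{\fV_k}\leq \sum_k\V_k[\inner{\tilN_k}{c_k}]$), gives the first inequality of the lemma, modulo the promised lower-order term.

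The second inequality then follows from $\V_k[X]\leq \E_k[X^2]$ combined with \pref{lem:deviation_loop_free}, which directly gives $\E_k[\inner{\tilN_k}{c_k}^2]\leq 2\inner{q_k}{\h{c_k}}$, so passing from $\sum_k\V_k$ to $\sum_k\inner{q_k}{\h{c_k}}$ only requires using this pointwise inequality (no martingale concentration is needed at this step since the bound is in conditional expectation). The third inequality is just AM--GM applied to $\sqrt{xy}\leq \lambda' x+y/\lambda'$ with $x=S^2A\ln^2(\cdots)$ and $y=\sum_k\inner{q_k}{\h{c_k}}+\tilO{H^3\sqrt{K}}$.

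The main obstacle is cleanly controlling the two secondary contributions that together must fit inside the advertised $\tilO{H^3S^2A}$ (plus possibly $\tilO{H^3\sqrt{K}}$) lower-order terms. The first is the $\Delta_k=J^{\hatP_k,\tilpi_k}_k-J^{P,\tilpi_k}_k$ piece, which I would handle by unrolling its Bellman expansion so that it is itself a transition-estimation bias bounded by $\sum q(\cdot)\epsilon^\star_k(\cdot)$ in an auxiliary occupancy, then crudely invoking $|\Delta_k|\leq H$ on one factor and summing the remaining $\sum_k\sum_{(s,a),s',h\in\unk}q_k(s,a,h)\epsilon^\star_k(s,a,s')$ via Cauchy--Schwarz and \pref{lem:transition-sum} to obtain the $H^3S^2A$ scale. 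The second is the $136\Ak$ portion of $\epsilon^\star_k$; multiplied by $|J^{P,\tilpi_k}_k-\bar J_k|\leq H$ and summed using the same tools, it contributes at the same order and is absorbed into the same lower-order term. Care is needed to make sure the event on which \pref{lem:transition-interval}, \pref{lem:transition-sum}, and any Freedman/Azuma concentration used to swap $\V_k$ into observed quantities all simultaneously hold, which accounts for the $1-4\delta$ failure probability in the statement.
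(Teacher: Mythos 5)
Your overall route is the paper's route: start from the identity in \pref{lem:om-diff}, use $\sum_{s'}(P-P_k)(s'|s,a,h)=0$ to center $J^{P,\tilpi_k}_k$ at $\mu_k(s,a,h)=\sum_{s'}P(s'|s,a,h)J^{\tilpi_k}_k(s',h+1)$, plug in the Bernstein radius of \pref{lem:transition-bound}, apply Cauchy--Schwarz twice to surface $\inner{q_k}{\fV_k}$, and finish with \pref{lem:var}, \pref{lem:deviation_loop_free} and AM--GM (your observation that the second inequality needs no concentration is correct). The gap is in the higher-order term $\Delta_k(s',h+1)=J^{P_k,\tilpi_k}_k(s',h+1)-J^{P,\tilpi_k}_k(s',h+1)$. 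Your plan is to bound $|\Delta_k|\le H$ and then control $H\sum_k\sum_{(s,a),s',h\in\unk}q_k(s,a,h)\epsilon^\star_k(s,a,s')$ via Cauchy--Schwarz and \pref{lem:transition-sum}. This cannot give $\tilO{\Qdtd}$: the leading part of $\epsilon^\star_k$ is $\sqrt{P(s'|s,a,h)\Ak(s,a)}\sim 1/\sqrt{\N_k(s,a)}$, and a single such factor summed against $q_k$ scales like $\sqrt{K}$, not polylogarithmically (Cauchy--Schwarz gives roughly $\sqrt{S}\cdot\sqrt{\sum_{k,s,a}q_k(s,a)}\cdot\sqrt{\sum_{k,s,a}q_k(s,a)\Ak(s,a)}=\tilO{S H\sqrt{AK}}$, hence $\tilO{S H^2\sqrt{AK}}$ after the factor $H$). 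Such a term is of order $\Tmax^2 S\sqrt{AK}$ and would destroy \pref{thm:full-unknown}. The paper's \pref{lem:double transition difference} handles this term by \emph{keeping two} transition-error factors: it expands $\Delta_k$ itself via \pref{lem:om-diff} into $\sum q_{k,(s',h+1)}\epsilon^\star_k\cdot H$, so the summand contains a product $\epsilon^\star_k\cdot\epsilon^\star_k\sim 1/\N_k$, and only then does $\sum_k q_k/\N_k=\tilO{SAH}$ (\pref{lem:transition-sum}) yield the purely additive $\tilO{\Qdtd}$. Your sketch mentions this unrolling but then discards one of the two error factors, which is exactly what must not be done.

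A secondary, quantitative issue: for the main term you propose to stay entirely in expectation and invoke \pref{lem:transition-sum} for $\sum_k\sum_{s,a}q_k(s,a)\Ak(s,a)$, which is only $\tilO{SAH}$; combined with the $\sqrt{S}$ from the Cauchy--Schwarz over $s'$ this gives a coefficient $\sqrt{S^2AH\ln(\cdot)}$ rather than the claimed $\sqrt{S^2A\ln^2(\cdot)}$, i.e.\ an extra $\sqrt{H}=\tilO{\sqrt{\Tmax}}$ that would reappear in the dominating term of \pref{thm:full-unknown} after the skewed-measure cancellation. The paper avoids it by first passing (via \pref{lem:e2r}) to the realized counts $\tilN_k$ and to $\A_{k+1}$, for which $\sum_k \tilN_k(s,a)/\N_{k+1}(s,a)=O(\ln(HK))$ with no $H$ factor, paying only an additive $\tilO{H^3\sqrt{K}}$ (Azuma) and $\tilO{\Qdtd}$ for the swaps; you would need the same device to recover the stated constants.
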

\begin{proof}
	Define $\mu_k(s, a, h)=\sum_{s'}P(s'|s, a, h)J^{\tilpi_k}_k(s', h+1)$. 
	Then by \pref{lem:om-diff}:
	\begin{align*}
		&\sumk\abr{\inner{q_k-\hatq_k}{c_k}} = \sumk\abr{\sum_{(s, a), s', h}q_k(s, a, h)(P(s'|s, a, h)-P_k(s'|s, a, h))J^{P_k, \tilpi_k}_k(s', h+1)}\\
		&\leq\sumk\abr{\sum_{(s, a), s', h}q_k(s, a, h)(P(s'|s, a, h)-P_k(s'|s, a, h))J^{\tilpi_k}_k(s', h+1)} + \tilO{\Qdtd} \tag{\pref{lem:double transition difference}}\\
		&=\sumk\abr{\sum_{(s, a), s', h}q_k(s, a, h)(P(s'|s, a, h)-P_k(s'|s, a, h))\rbr{J^{\tilpi_k}_k(s', h+1) - \mu_k(s, a, h)} } + \tilO{\Qdtd} \tag{$\sum_{s'}P(s'|s, a, h)-P_k(s'|s, a, h)=0$ and $\mu_k(s, a, h)$ is independent of $s'$}\\
		&\leq\sumk\sum_{(s, a), s', h\in\unk}q_k(s, a, h)\epsilon_k^{\star}(s, a, s')\left|J^{\tilpi_k}_k(s', h+1) - \mu_k(s, a, h)\right| \tag{$P(s'|s, a, h)-P_k(s'|s, a, h)\leq \epsilon_k^{\star}(s, a, s')$ by \pref{lem:transition-bound}} + \tilO{\Qdtd}\\
		&\leq 8\sumk\sum_{(s, a), s', h\in\unk}q_k(s, a, h)\sqrt{P(s'|s, a, h)\Ak(s, a)\rbr{ J^{\tilpi_k}_k(s', h+1) - \mu_k(s, a, h) }^2}\\
		&\qquad\qquad + \tilO{HS\sumk\sumsa\frac{q_k(s, a)}{\Nk(s, a)} + \Qdtd} \tag{definition of $\epsilon_k^{\star}$ from \pref{lem:transition-bound}} \\
		&\leq 8\sumk\E_k\sbr{\sum_{(s, a), s', h\in\unk}\tilN_k(s, a, h)\sqrt{P(s'|s, a, h)\Ak(s, a)\rbr{ J^{\tilpi_k}_k(s', h+1) - \mu_k(s, a, h) }^2}} + \tilO{\Qdtd} \tag{\pref{lem:transition-sum}}\\
		&= 8\sumk \E_kX_k + \tilO{\Qdtd} \tag{$X_k=\sum_{(s, a), s', h\in\unk}\tilN_k(s, a, h)\sqrt{P(s'|s, a, h)\Ak(s, a)\rbr{ J^{\tilpi_k}_k(s', h+1) - \mu_k(s, a, h) }^2}$}.
	\end{align*}
	Note that $0 \leq X_k = \tilo{ \sum_{(s, a), s', h\in\unk}\tilN_k(s, a, h)H } = \tilo{H^2S}$.
	Hence, by \pref{lem:e2r}, with probability at least $1-\delta$:
	\begin{align*}
		&\sumk \E_kX_k \leq  2\sumk\sum_{(s, a), s', h\in\unk}\tilN_k(s, a, h)\sqrt{P(s'|s, a, h)\Ak(s, a)\rbr{ J^{\tilpi_k}_k(s', h+1) - \mu_k(s, a, h) }^2} + \tilO{H^2S}\\
		&\leq 2\sumk\sum_{(s, a), s', h\in\unk}\tilN_k(s, a, h)\sqrt{P(s'|s, a, h)\A_{k+1}(s, a)\rbr{ J^{\tilpi_k}_k(s', h+1) - \mu_k(s, a, h) }^2}\\
		&\qquad\qquad + 2H^2\sumk\sum_{(s, a), s', h\in\unk}\rbr{\sqrt{\Ak(s, a)} - \sqrt{\A_{k+1}(s, a)}} + \tilO{ H^2S }\\
		&\leq 2\sqrt{\sumk\sum_{(s, a), s', h\in\unk}\tilN_k(s, a, h)P(s'|s, a, h)\rbr{ J^{\tilpi_k}_k(s', h+1) - \mu_k(s, a, h) }^2}\sqrt{\sumk\sum_{(s, a), s', h\in\unk}\tilN_k(s, a, h)\A_{k+1}(s, a) } \\
		&\qquad\qquad + \tilO{H^3S^2A} \tag{Cauchy-Schwarz inequality}.
	\end{align*}
	Note that 
	\begin{align*}
		&\sumk\sum_{(s, a), s', h\in\unk}\tilN_k(s, a, h)\A_{k+1}(s, a) = \ln\rbr{\frac{HKSA}{\delta}}\sumk\sum_{(s, a), s', h\in\unk}\frac{\tilN_k(s, a, h)}{\N_{k+1}(s, a)}\\
		&= S\ln\rbr{\frac{HKSA}{\delta}}\sumk\sumsa\frac{\N_{k+1}(s, a)-\Nk(s, a)}{\N_{k+1}(s, a)} \leq 2S^2A\ln^2\rbr{\frac{HKSA}{\delta}}.
	\end{align*}
	Moreover, define $\fV_k(s, a, h)=P(s'|s, a, h)\rbr{ J^{\tilpi_k}_k(s', h+1) - \mu_k(s, a, h) }^2$, with probability at least $1-\delta$,
	\begin{align*}
		&\sumk\sum_{(s, a), s', h\in\unk}\tilN_k(s, a, h)P(s'|s, a, h)\rbr{ J^{\tilpi_k}_k(s', h+1) - \mu_k(s, a, h) }^2\\
		&=\sumk\sum_{(s, a), s', h\in\unk}\tilN_k(s, a, h)\fV_k(s, a, h) \leq\sumk\sum_{(s, a), h}\tilN_k(s, a, h)\fV_k(s, a, h)\\
		&=\sumk\sum_{(s, a), h}q_k(s, a, h)\fV_k(s, a, h) + \sumk\sum_{(s, a), h}(\tilN_k(s, a, h) - q_k(s, a, h))\fV_k(s, a, h)\\
		&\leq 2\sumk\V\sbr{\inner{\tilN_k}{c_k}} + \tilO{H^3\sqrt{K}},
	\end{align*}
	where in the last inequality we apply \pref{lem:var} and \pref{lem:azuma} with
	\begin{align*}
		\sum_{(s, a), h}|\tilN_k(s, a, h) - q_k(s, a, h)|\fV_k(s, a, h) \leq \sum_{(s, a), h}\tilN_k(s, a, h)H^2 + \sum_{(s, a), h}q_k(s, a, h)H^2 \leq 2H^3.
	\end{align*}
	Therefore, combining everything we arrive at
	\begin{align*}
		\abr{\sumk\inner{q_k-\hatq_k}{c_k}} &\leq 32\sqrt{S^2A\ln^2\rbr{\frac{HKSA}{\delta}}\rbr{\sumk\V\sbr{\inner{\tilN_k}{c_k}} + \tilO{H^3\sqrt{K}}}} + \tilO{\Qdtd}.\\
	\end{align*}
	When the value of $c_k$ is at most $1$, we have by \pref{lem:deviation_loop_free} and $\V\sbr{\inner{\tilN_k}{c_k}}\leq \E_k\sbr{\inner{\tilN_k}{c_k}^2}$:
	\begin{align*}
		\abr{\sumk\inner{q_k-\hatq_k}{c_k}} &\leq 32\sqrt{S^2A\ln^2\rbr{\frac{HKSA}{\delta}}\rbr{\sumk\inner{q_k}{\h{c_k}} + \tilO{H^3\sqrt{K}}}} + \tilO{\Qdtd}\\
		&\leq 16\rbr{\lambda'S^2A\rbr{\sumk\inner{q_k}{\h{c_k}} + \tilO{H^3\sqrt{K}} } + \frac{\ln^2\rbr{\frac{HKSA}{\delta}}}{\lambda'}} + \tilO{\Qdtd},
	\end{align*}
	where the last step is by AM-GM inequality.
\end{proof}

\begin{lemma}
	\label{lem:double transition difference}
	Under the event of \pref{lem:transition-interval},
	\begin{align*}
		\sumk\abr{\sum_{(s, a), s', h}q_k(s, a, h)(P(s'|s, a, h)-P_k(s'|s, a, h))\rbr{J^{P_k, \tilpi_k}_k(s', h+1) -  J^{P, \tilpi_k}_k(s', h+1) }} = \tilO{\Qdtd}.
	\end{align*}
\end{lemma}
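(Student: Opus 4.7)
The plan is to view the quantity as a \emph{second-order} error in the transition estimation and apply the simulation identity of \pref{lem:om-diff} one more time to expose the second factor of $(P - P_k)$ hidden inside $J^{P_k, \tilpi_k}_k - J^{P, \tilpi_k}_k$. Specifically, \pref{lem:om-diff} applied to the pair of transitions $(P_k, P)$ with the same policy $\tilpi_k$ yields
\begin{align*}
	J^{P_k, \tilpi_k}_k(s', h+1) - J^{P, \tilpi_k}_k(s', h+1)
	= \sum_{(s'',a''), s''', h'' \geq h+1} q^{P, \tilpi_k}_{(s', h+1)}(s'', a'', h'')\, (P_k - P)(s'''|s'', a'', h'')\, J^{P_k, \tilpi_k}_k(s''', h''+1),
\end{align*}
so the inner quantity is a product of two transition differences multiplied by a value function. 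Plugging this into the target sum and taking absolute values gives a double sum in which both factors $|P - P_k|$ and $|P_k - P|$ can be bounded by $\epsilon_k^\star$ via \pref{lem:transition-bound} (under the event of \pref{lem:transition-interval}), while the lingering $J^{P_k, \tilpi_k}_k(s''', h''+1)$ is at most $H$.

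Next, I would collapse the nested occupancy measures using the identity $\sum_{(s,a),h, s'} q_k(s, a, h) P(s'|s, a, h) q^{P, \tilpi_k}_{(s', h+1)}(s'', a'', h'') \leq q_k(s'', a'', h'')$, after first splitting $\epsilon_k^\star(s, a, s') = O(\sqrt{P(s'|s, a) \Ak(s, a)} + \Ak(s, a))$ and applying Cauchy-Schwarz (or AM-GM) to separate the $\sqrt{P}$ factor so that it can be absorbed into the stitching. This replaces $\epsilon_k^\star(s, a, s')$ by an $\tilO{1/\sqrt{N_k(s, a)}}$ factor and aggregates the inner occupancy with $q_k$. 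A completely analogous treatment of the second $\epsilon_k^\star(s'', a'', s''')$, summed over $s'''$, contributes an extra $\sqrt{S/N_k(s'', a'')}$ by Cauchy-Schwarz on $\sum_{s'''} \sqrt{P(s'''|s'', a'')} \leq \sqrt{S}$.

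After these reductions, the dominant contribution reduces to expressions of the form $H \cdot \sqrt{S} \cdot \sum_k \sum_{(s'', a''), h''} q_k(s'', a'', h'')/N_k(s'', a'')$ times an overall $H$ from the bound $J \leq H$, which by \pref{lem:transition-sum} is $\tilO{H^2 \cdot \sqrt{S} \cdot SAH} = \tilO{H^3 S^{3/2} A}$, well within $\tilO{H^3 S^2 A}$; the additive lower-order pieces coming from the $\Ak(s, a)$ part of $\epsilon_k^\star$ and from the $\sqrt{P \cdot \Ak}$ part also collapse to $\tilO{H^3 S^2 A}$ using the same $\sum_k q_k / N_k = \tilO{SAH}$ telescoping bound. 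The main obstacle is the bookkeeping: carefully splitting the square-root $\sqrt{P(s'|s, a) \Ak(s, a)}$ pieces so that one copy of $\sqrt{P}$ can stitch $q_k$ with the conditional occupancy $q^{P, \tilpi_k}_{(s', h+1)}$ (to preserve the $\tilO{1}$ per-episode cost instead of an $H$ blow-up), while the other copy contributes the $\sqrt{S}$ factor via Cauchy-Schwarz over next states, without double counting any powers of $H$ or $S$.
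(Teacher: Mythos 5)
Your route is the same as the paper's: apply \pref{lem:om-diff} once more to expose the second $(P_k-P)$ factor inside $J^{P_k,\tilpi_k}_k-J^{P,\tilpi_k}_k$, bound both transition differences by $\epsilon^\star_k$ via \pref{lem:transition-bound}, bound the remaining value function by $H$, and finish with Cauchy--Schwarz, the stitching identity, and \pref{lem:transition-sum}. The gap is in the final bookkeeping, which is exactly the step you flag as the main obstacle. First, your stitching inequality is stated incorrectly: for each fixed $h<h''$ one has $\sum_{(s,a),s'}q_k(s,a,h)P(s'|s,a,h)q_{k,(s',h+1)}(s'',a'',h'')=q_k(s'',a'',h'')$, so including the sum over $h$ produces up to $h''\leq H$ copies of $q_k(s'',a'',h'')$, not one; the $H$ blow-up you hope to avoid is genuinely there (the paper pays it inside one of its square-root factors). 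Second, your claimed dominant form $H^2\sqrt{S}\sumk\sum_{(s'',a''),h''}q_k(s'',a'',h'')/\Nk(s'',a'')$ cannot arise from the steps you describe: each $\epsilon^\star_k$ supplies only a $\sqrt{\Ak}\sim\Nk^{-1/2}$ factor, and the two factors live at different pairs $(s,a)$ and $(s'',a'')$, so a full $1/\Nk(s'',a'')$ sitting next to the stitched occupancy is unobtainable; in your tally the first pair's $1/\sqrt{\Nk(s,a)}$ has simply vanished.

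The correct arrangement is the paper's symmetric split inside a single global Cauchy--Schwarz: place the full $P(s'|s,a,h)$ together with $\Ak(\tils,\tila)$ in one squared factor, so that stitching there yields $H\sumk\sum_{(\tils,\tila)}q_k(\tils,\tila)\Ak(\tils,\tila)=\tilO{SAH^2}$ by \pref{lem:transition-sum}, and place $q_k(s,a,h)\Ak(s,a)$ together with both $S$-type contributions (the free sum over $s'$ and your $\sum_{s'''}\sqrt{P(s'''|s'',a'')}\leq\sqrt{S+1}$) in the other squared factor, which telescopes to $\tilO{S^3AH^2}$. Multiplying the two square roots and the outer $H$ gives exactly $\tilO{H^3S^2A}$ --- the lemma's bound with no slack, not your claimed $\tilO{H^3S^{3/2}A}$. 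Moreover, if one follows your description literally --- summing the second $\epsilon^\star_k$ over $s'''$ first and then pairing $\sqrt{\Ak(s'',a'')}$ with only the square root of the stitched occupancy --- the weighted telescoping of \pref{lem:transition-sum} no longer applies to that factor, and an extra cardinality term of order $\sqrt{SAH}$ appears, overshooting the lemma. So the plan closes only after reorganizing the Cauchy--Schwarz into the paper's pairing; as written, the accounting does not.
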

\begin{proof}
	Define $q_{k,(s', h+1)}=q_{P,\tilpi_k,(s',h+1)}$.
	Then,
	\begin{align*}
		&\sumk \abr{\sum_{(s, a), s', h}q_k(s, a, h)(P(s'|s, a, h)-P_k(s'|s, a, h))\rbr{J^{P_k, \tilpi_k}_k(s', h+1) -  J^{P, \tilpi_k}_k(s', h+1) } }\\
		&\leq \sumk\sum_{(s, a), s', h\in \unk}q_k(s, a, h)\epsilon_k^\star(s, a, s')\abr{\inner{q_{P_k, \tilpi_k, (s', h+1)}-q_{P, \tilpi_k, (s', h+1)}}{c_k}} \tag{\pref{lem:transition-bound} and $J^{P', \tilpi_k}_k(s',h+1)=\inner{q_{P', \tilpi_k, (s', h+1)}}{c_k}$}\\
		&\leq \sumk\sum_{(s, a), s', h\in \unk}q_k(s, a, h)\epsilon_k^\star(s, a, s')\sum_{(\tils, \tila), \tils', h'\in\unk}q_{k,(s',h+1)}(\tils, \tila, h')\epsilon_k^\star(\tils, \tila, \tils')H \tag{\pref{lem:om-diff}}\\
		&=\tilO{H\sumk\sum_{\substack{(s, a), s', h\in \unk \\ (\tils, \tila), \tils', h'\in\unk } }q_k(s, a, h)\sqrt{\frac{P(s'|s, a, h)}{\Nk(s, a)}}q_{k,(s', h+1)}(\tils, \tila, h')\sqrt{\frac{P(\tils'|\tils, \tila, h')}{\Nk(\tils, \tila)}} }\\
		&= \tilO{H\sumk\sum_{\substack{(s, a), s', h\in \unk \\ (\tils, \tila), \tils', h'\in\unk } }\sqrt{\frac{q_k(s, a, h)P(\tils'|\tils, \tila, h')q_{k, (s', h+1)}(\tils, \tila, h')}{\Nk(s, a)}}\sqrt{\frac{q_k(s, a, h)P(s'|s, a, h)q_{k, (s', h+1)}(\tils, \tila, h')}{\Nk(\tils, \tila)}} }\\
		&= \tilO{H\sqrt{\sum_{\substack{k, (s, a), s', h\in \unk \\ (\tils, \tila), \tils', h'\in\unk }}\frac{q_k(s, a, h)P(\tils'|\tils, \tila, h')q_{k, (s', h+1)}(\tils, \tila, h')}{\Nk(s, a)}}\sqrt{\sum_{\substack{k, (s, a), s', h\in \unk \\ (\tils, \tila), \tils', h'\in\unk }}\frac{q_k(s, a, h)P(s'|s, a, h)q_{k, (s', h+1)}(\tils, \tila, h')}{\Nk(\tils, \tila)}} } \tag{Cauchy-Schwarz inequality}\\
		&= \tilO{ H\sqrt{HS\sum_{k,(s, a)}\frac{q_k(s, a)}{\Nk(s, a)} }\sqrt{HS\sum_{k, (\tils, \tila)}\frac{q_k(\tils, \tila)}{\Nk(\tils, \tila)} } } = \tilO{\Qdtd}. \tag{\pref{lem:transition-sum}}
	\end{align*}
	where in the first square root of the last line we simply sum over $s', h, (\tils, \tila), \tils', h'$, and in the second square root of the last line we apply $\sum_{(s, a), s'}q_k(s, a, h)P(s'|s, a, h)q_{k,(s', h+1)}(\tils, \tila, h')=q_k(\tils, \tila, h')$ for any $\tils, \tila, h<h'$.
\end{proof}

\subsection{\pfref{thm:full-unknown}}
\label{sec:thm-full-unknown}

\begin{proof}
	First apply \pref{lem:loop-free}: with probability at least $1-\delta$, we have
	\begin{align*}
		R_K \leq \sumk\inner{\tilN_k-\tiloptq}{c_k} + \tilO{D^{3/2}S^2A(\ln\tfrac{1}{\delta})^2}.
	\end{align*}
	Define $P_k=P_{\hatq_k}$, and $q_k=q_{P, \tilpi_k}$.
	We decompose the regret in $\tilM$ into three terms:
	\begin{align}
		\sumk\inner{\tilN_k - \tiloptq}{c_k} = \sumk\inner{\tilN_k-q_k}{c_k} + \sumk\inner{q_k-\hatq_k}{c_k} + \sumk\inner{\hatq_k - \tiloptq}{c_k}\label{eq:full-unknown-reg}
	\end{align}
	For the first term, by \pref{lem:freedman} with $\frac{\lambda}{4} \leq \frac{1}{H}$ (because $K\geq 16S^2AH^2$) and \pref{lem:deviation_loop_free}, we have with probability at least $1-\delta$:
	\begin{align*}
		\sumk\inner{\tilN_k-q_k}{c_k} \leq \frac{\lambda}{2}\sumk\inner{q_k}{\h{c_k}} + \frac{4\ln(1/\delta)}{\lambda} .
	\end{align*}
	For the second term, by \pref{lem:transition-bias}, with probability $1-4\delta$, we have for any $\lambda'>0$:
	\begin{align*}
		\sumk\inner{q_k-\hatq_k}{c_k} &\leq 16\rbr{\lambda'S^2A\rbr{\sumk\inner{q_k}{\h{c_k}} + \tilO{H^3\sqrt{K}} } + \frac{\ln^2\rbr{\frac{HKSA}{\delta}}}{\lambda'} } + \tilO{H^3S^2A}.
	\end{align*}
	For the third term, by standard OMD analysis (see for example Eq.~(12) of~\citep{rosenberg2020adversarial}):
	\begin{align}
		\sumk\inner{\qfeat_k - \qfeat_{\tiloptpi}}{c_k} &\leq D_{\regz}(\qfeat_{\tiloptpi}, \qfeat_1) + \sumk\inner{\qfeat_k - \qfeat'_{k+1}}{c_k},\label{eq:omd}
	\end{align}
	where $\qfeat'_{k+1} = \argmin_{\phi\in\fR^{\tilSA\times\calS\times[H]}}\inner{\phi}{c_k} + D_{\psi}(\phi, \qfeat_k)$.
	It can be shown that $\qfeat'_{k+1}(s, a, s', h)=\qfeat_k(s, a, s', h)e^{-\eta c_k(s, a)}$ with the choice of the entropy regularizer.
	Applying the inequality $1-e^{-x}\leq x$, we get:
	\begin{align*}
		\sumk\inner{\qfeat_k-\qfeat'_{k+1}}{c_k} &\leq \eta\sumk\sum_{(s, a), s', h}\qfeat_k(s, a, s', h)c_k^2(s, a) \leq 2\eta\sumk\sumsa\hatq_k(s, a)c_k(s, a) = 2\eta\sumk\inner{\hatq_k}{c_k},
	\end{align*}
	where in the last inequality we apply $\qfeat_k(s, a, s', h) = (1+\lambda h)\hatq_k(s, a, s', h)\leq 2\hatq_k(s, a, s', h)$.
	To bound $D_{\psi}(\qfeat_{\tiloptpi}, \qfeat_1)$, note that $\inner{\nabla\psi(\qfeat_1)}{\qfeat_{\tiloptpi}-\qfeat_1}\geq 0$ by $\qfeat_1=\argmin_{\phi\in\Omega_1}\psi(\phi)$.
	Thus,
	\begin{align*}
		D_{\psi}(\qfeat_{\tiloptpi}, \qfeat_1) &\leq \psi(\qfeat_{\tiloptpi}) - \psi(\qfeat_1) \\
		&= \frac{1}{\eta}\sum_{(s, a), s', h}\qfeat_{\tiloptpi}(s, a, s', h)\ln\qfeat_{\tiloptpi}(s, a, s', h) - \frac{1}{\eta}\sum_{(s, a), s', h}\qfeat_1(s, a, s', h)\ln\qfeat_1(s, a, s', h)\\
		&\leq \frac{1}{\eta}\sum_{(s, a), s', h}\qfeat_{\tiloptpi}(s, a, s', h)\ln(2T) -\frac{2T}{\eta}\sum_{(s, a), s', h}\frac{\qfeat_1(s, a, s', h)}{2T}\ln\frac{\qfeat_1(s, a, s', h)}{2T}\\
		&\leq \frac{2T\ln(2T)}{\eta} + \frac{2T\ln(S^2AH)}{\eta} \leq \frac{2T\ln(2S^2AHT)}{\eta}.
	\end{align*}
	Substituting these back to \pref{eq:omd} and rearranging terms, we get:
	\begin{align}
		\sumk\inner{\hatq_k-\tiloptq}{c_k} &\leq \frac{1}{1-2\eta}\rbr{ \frac{2T\ln(2S^2AHT)}{\eta} + 2\eta\sumk\inner{\tiloptq}{c_k} + \sumk\lambda\inner{\tiloptq-\hatq_k}{\h{c_k}} }\notag\\
		&\leq \frac{4T\ln(2S^2AHT)}{\eta} + 4\eta DK + 2\lambda DTK - \lambda\sumk\inner{\hatq_k}{\h{c_k}},\label{eq:full unknown omd}
	\end{align}
	where we apply $\frac{1}{1-2\eta}\leq 2, \sumk\inner{\tiloptq}{c_k}\leq DK$ and $\sumk\inner{\tiloptq}{\h{c_k}}=\sumk\inner{\tiloptq}{J_k^{\tiloptpi}}\leq DTK$ in the last inequality.
	Substituting everything back to \pref{eq:full-unknown-reg} and set $\lambda'=\frac{1}{8}\sqrt{\frac{1}{S^2ADTK}}$, we get:
	\begin{align*}
		\sumk\inner{\tilN_k-\tiloptq}{c_k} &\leq \tilO{\frac{1}{\lambda} + \frac{1}{\lambda'} + \frac{T}{\eta} } + \rbr{16\lambda'S^2A - \frac{\lambda}{2}}\sumk\inner{q_k}{\h{c_k}}\\
		&+ \lambda\sumk\inner{q_k-\hatq_k}{\h{c_k}} + 4\eta DK + 2\lambda DTK + \tilO{\lambda'S^2AH^3\sqrt{K} + H^3S^2A}\\
		&= \tilO{\sqrt{S^2ADTK} + \sqrt{DTK} + \lambda\sqrt{S^2AH^2K} + H^3S^2A } = \tilO{\sqrt{S^2ADTK} + H^3S^2A},
	\end{align*}
	where in the last line we apply $\eta\leq \sqrt{\frac{T}{DK}}, \lambda=4\sqrt{\frac{S^2A}{DTK}}$, and
	\begin{align*}
		&\lambda\sumk\inner{q_k-\hatq_k}{\h{c_k}}=\tilO{ \sqrt{\frac{S^2A}{DTK}} \cdot \sqrt{S^2A\rbr{\sumk\inner{q_k}{\h{c_k}}+H^3\sqrt{K} } }}\\
		&=\tilO{\sqrt{ \frac{S^2A}{K}\cdot S^2A\rbr{ H^2K + H^3\sqrt{K} } }} = \tilO{H^3S^2A}.
	\end{align*}
\end{proof}

\section{Omitted details for \pref{sec:bandit-unknown}}
\label{app:bandit-unknown}

In this section, we provide all omitted algorithms and proofs for \pref{sec:bandit-unknown}.
We first prove a bound of the bias induced by our optimistic cost estimator in the bandit setting.
Then we gives the full proof of \pref{thm:bandit-unknown}, which has a similar structure to \citep[Theorem 11]{chen2020minimax}.

\begin{algorithm}[t]
\caption{Log-barrier Policy Search for SSP with unknown transition}
\label{alg:bandit-unknown}
	\textbf{Input:} Upper bound on expected hitting time $T$, horizon parameter $H_1$, and confidence level $\delta$
		
     \textbf{Parameters:} $H_2=\ceil{2D}$, $H=H_1+H_2$, $ C=\ceil{\log_2 (TK^4)}\ceil{\log_2 (T^2K^9)}, \beta=e^{\frac{1}{7\ln K}}, \eta=\sqrt{\frac{SA}{DTK}}, \gamma=280(\ln K)\rbr{2C\sqrt{\ln\rbr{\frac{CSA}{\delta}}}+1}^2\eta, \lambda=40\eta+2\gamma+33\sqrt{\frac{S^3A^2}{DTK}}$
	
	\textbf{Define:} $\regz_k(\qfeat) = \sum_{(s,a)\in\tilSA}\frac{1}{\eta_k(s,a)} \ln \frac{1}{\qfeat(s, a)}$, where $\qfeat(s, a) = \sum_{s'\in\calS\cup\{s_f\}}\sumh\qfeat(s, a, s', h)$
	
	\textbf{Define:} $\Omega_k = \{\qfeat=q + \lambda \h{q} : q\in \tilDelta(T, \calP_{i_k}), \;\; q(s,a) \geq \frac{1}{TK^4}, \;\forall (s,a)\in \tilSA\}$
	
	\textbf{Initialize:} $\qfeat_1= \argmin_{\qfeat\in\Omega_1} \regz_1(\qfeat)$.
	
	\textbf{Initialize:} for all $(s, a)\in\tilSA, \eta_1(s, a)=\eta, \rho_1(s, a)=2T$.
	
	\textbf{Initialize:} $\N_1(s, a)=\M_1(s, a, s')=0$ for all $(s, a, s')\in\SA\times(\calS\cup\{g\})$. An instance of Bernstein-SSP $\calB$.
	
	\For{$k=1,\ldots, K$}{
		Extract $\hatq_k$ from $\qfeat_k=\hatq_k + \lambda \h{\hatq_k}$, and let $\tilpi_k=\pi_{\hatq_k}$.
	
		Execute policy: $\tau_k=\RUN(\tilpi_k, \calB)$, receive $c_k\odot\Ind_k$.
	
		$\calP_{k+1}, \tilN_k =\TransEst(\N, \M, \delta, H_1, H_2, \tau_k)$.
		
		Construct cost estimator $\hatc_k \in \fR_{\geq 0}^{\tilSA}$ such that $\hatc_k(s,a) = \frac{\tilN_{k}(s, a)c_k(s, a)}{u_{k}(s, a)}$, where $u_k(s, a) = \max_{\hatP\in\calP_{k}}q_{\hatP, \tilpi_k}(s, a)$.
         
         Construct bias term $\hatb_k\in \fR_{\geq 0}^{\tilSA}$ such that $\hatb_k(s, a) = \frac{\sum_h h  u'_k(s, a, h)\hatc_k(s, a)}{u_k(s, a)}$ where $u'_k(s, a, h) = q_{P_k^{(s,a)}, \tilpi_k}(s, a, h)$ and $P_k^{(s,a)} = \argmax_{\hatP\in\calP_{k}}\sum_h h \cdot q_{\hatP, \tilpi_k}(s, a, h)$.

		Update \[
			\qfeat_{k+1} = \argmin_{\qfeat\in\Omega_{k+1}} \inner{\qfeat}{\hatc_k-\gamma\hatb_k} + D_{\psi_k}(\qfeat, \qfeat_k).
		\]
		
		\For{$\forall (s, a)\in\tilSA$}{
			\If{$\frac{1}{u_{k+1}(s, a)} > \rho_k(s, a)$}{
				$\rho_{k+1}(s, a)=\frac{2}{u_{k+1}(s, a)}, \eta_{k+1}(s, a)=\beta\cdot\eta_k(s, a)$.
			}
			\Else{
			$\rho_{k+1}(s, a)=\rho_{k}(s, a), \eta_{k+1}(s, a)=\eta_k(s, a)$.
			}
		}
	}
\end{algorithm}


\begin{lemma}\label{lem:optimistic-transition-bound}
	With probability at least $1-4\delta$, we have
	\begin{align*}
		\sumk\inner{u_k-q_k}{c_k} \leq 32\sqrt{S^3A^2\ln^2\rbr{\frac{HKS^2A^2}{\delta}}\rbr{\sumk\inner{q_k}{\h{c_k}} + \tilO{SAH^3\sqrt{K}}}} + \tilO{\Qumq}.
	\end{align*}
\end{lemma}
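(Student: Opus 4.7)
The approach mirrors the proof of \pref{lem:transition-bias}, with one important twist: the optimistic transition attaining the maximum in $u_k(s,a)$ is $(s,a)$-dependent. For each $(s,a)\in\tilSA$, let $P_k^{(s,a)}\in\calP_k$ realize $u_k(s,a)=q_{P_k^{(s,a)},\tilpi_k}(s,a)$. Applying \pref{lem:om-diff} with transitions $P_k^{(s,a)}$ and $P$ and then summing over the layer index yields
\[
u_k(s,a)-q_k(s,a) = \sum_{m,(s',a'),s''} q_k(s',a',m)\bigl(P_k^{(s,a)}-P\bigr)(s''|s',a',m)\, q_{P_k^{(s,a)},\tilpi_k,(s'',m+1)}(s,a).
\]
Multiplying by $c_k(s,a)$, summing over $(s,a)$, swapping the order of summation, and using $\sum_{s''}(P_k^{(s,a)}-P)(s''|s',a',m)=0$ to subtract an arbitrary $P$-mean produces the mean-zero form needed for a Bernstein-type argument; by \pref{lem:transition-bound}, the transition gap is at most $\epsilon_k^\star(s',a',s'')$ on the unknown entries.

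Next I would decouple the inner value function from the $(s,a)$-dependent transition. By a direct analogue of \pref{lem:double transition difference}, $q_{P_k^{(s,a)},\tilpi_k,(s'',m+1)}(s,a)$ can be replaced by its true-transition counterpart $q_{\tilpi_k,(s'',m+1)}(s,a)$ at the cost of an additive $\tilO{\Qumq}$ correction; the extra $SA$ compared with \pref{lem:double transition difference} appears precisely because this replacement must be performed once per target pair $(s,a)$. After this decoupling the main term reads
\[
\sum_k\sum_{m,(s',a'),s''\in U} q_k(s',a',m)\,\epsilon_k^\star(s',a',s'')\sum_{(s,a)} c_k(s,a)\left|q_{\tilpi_k,(s'',m+1)}(s,a) - \E_{\tilde s\sim P(\cdot|s',a',m)}[q_{\tilpi_k,(\tilde s,m+1)}(s,a)]\right|.
\]

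Then I would mimic the Cauchy--Schwarz and Bernstein argument from \pref{lem:transition-bias}: plug in $\epsilon_k^\star\lesssim \sqrt{P\,\Ak}$, apply Cauchy--Schwarz jointly in $k$ and $(s',a',s'',m)$, and invoke the visit-count sum $\sum_k\sum_{(s',a'),s''\in U,m}\tilN_k(s',a',m)\A_{k+1}(s',a')=\tilO{S^2A\ln^2(\tfrac{HK}{\delta})}$ exactly as before. The new ingredient is an inner Cauchy--Schwarz over $(s,a)$ (using $c_k(s,a)^2\leq c_k(s,a)\leq 1$), which contributes the extra $SA$ factor and leaves a Bellman-variance object of the form $\sum_{(s,a)}\V_{\tilde s\sim P}[q_{\tilpi_k,(\tilde s,m+1)}(s,a)\,c_k(s,a)]$. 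This is still controlled by \pref{lem:var} and then converted to $\sumk\inner{q_k}{\h{c_k}}$ via \pref{lem:deviation_loop_free}, while Azuma concentration of $\tilN_k-q_k$ (now summed over $(s,a)$) contributes the stated $\tilO{SAH^3\sqrt{K}}$ lower-order term inside the square root and the $\tilO{\Qumq}$ term outside.

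The hard part is the bookkeeping in the last two steps: verifying that (i) each per-$(s,a)$ replacement of $P_k^{(s,a)}$ by $P$ aggregates to only $\tilO{\Qumq}$, and (ii) after the inner Cauchy--Schwarz over $(s,a)$, the resulting variance still telescopes into $\sumk\inner{q_k}{\h{c_k}}$ rather than collapsing to a cruder $H^2$-type bound, so that the extra $SA$ factor from the per-target maximization is the \emph{only} additional cost compared with \pref{lem:transition-bias}.
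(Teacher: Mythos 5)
Your outline is correct in substance, but it is a ``white-box'' re-derivation, whereas the paper's actual proof is a short black-box reduction. The paper observes that $u_k(s,a)c_k(s,a)=\inners{q_{P_k^{(s,a)},\tilpi_k}}{c_k^{(s,a)}}$ for the single-coordinate cost $c_k^{(s,a)}(s',a')=c_k(s',a')\Ind\{s'=s,a'=a\}$, and that \pref{lem:transition-bias} already allows an \emph{arbitrary} element of $\calP_k$ in the role of $P_k$; so it simply invokes \pref{lem:transition-bias} once per pair $(s,a)$ with cost $c_k^{(s,a)}$ (and confidence $\delta/|\SA|$, whence the $\ln(HKS^2A^2/\delta)$), takes a union bound, and applies Cauchy--Schwarz to the resulting $SA$ square roots, which produces exactly the $\sqrt{S^3A^2}$ factor, the $SA\cdot H^3\sqrt{K}$ lower-order term, and the $SA\cdot\Qdtd=\Qumq$ additive term. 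Your route unrolls the same mechanism inline: the per-target maximizer $P_k^{(s,a)}$, the decoupling analogue of \pref{lem:double transition difference} costing $\tilO{\Qumq}$, and the inner Cauchy--Schwarz over $(s,a)$ paying $\sqrt{SA}$ all correspond step-by-step to what the coordinate-wise invocation packages automatically. What the paper's reduction buys is that none of your ``hard bookkeeping'' items need to be re-verified; what your version makes explicit is \emph{where} the extra $\sqrt{SA}$ comes from. One point in your sketch deserves care: the object you are left with is a \emph{sum of per-coordinate variances} $\sum_{(s,a)}\V_{\tilde s\sim P}[c_k(s,a)\,q_{\tilpi_k,(\tilde s,m+1)}(s,a)]$, which is not dominated by the aggregate variance $\V_k[\inners{\tilN_k}{c_k}]$ (covariances can be negative), so a single application of \pref{lem:var} with cost $c_k$ does not suffice; you must apply \pref{lem:var} and then \pref{lem:deviation_loop_free} separately with each cost $c_k^{(s,a)}$ (both lemmas allow arbitrary nonnegative costs) and sum, which yields $2\sumk\inners{q_k}{\h{c_k}}$ --- this is precisely what the paper's per-coordinate use of \pref{lem:transition-bias} does implicitly.
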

\begin{proof}
	Denote $c^{(s, a)}_k(s', a')=c_k(s', a')\Ind\{s'=s, a'=a\}$. Then by \pref{lem:transition-bias} (with $\delta$ there set to $\delta/|\SA|$) and a union bound over all $(s,a)$, we have with probability $1-4\delta$, for any $(s, a)\in\SA$:
	\begin{align*}
		\sumk(u_k(s, a) - q_k(s, a))c_k(s, a) &= \sumk\inner{u_k-q_k}{c_k^{(s, a)}}\\
		&\leq 32\sqrt{S^2A\ln^2\rbr{\frac{HKS^2A^2}{\delta}}\rbr{\sumk\inner{q_k}{\h{c_k^{(s, a)}}} + \tilO{H^3\sqrt{K}}}} + \tilO{\Qdtd}.
	\end{align*}
	Hence,
	\begin{align*}
		\sumk\inner{u_k-q_k}{c_k} &\leq 32\sumsa\sqrt{S^2A\ln^2\rbr{\frac{HKS^2A^2}{\delta}}\rbr{\sumk\inner{q_k}{\h{c_k^{(s, a)}}} + \tilO{H^3\sqrt{K}}}} + \tilO{\Qumq}\\
		&\leq 32\sqrt{S^3A^2\ln^2\rbr{\frac{HKS^2A^2}{\delta}}\sumsa\rbr{\sumk\inner{q_k}{\h{c_k^{(s, a)}}} + \tilO{H^3\sqrt{K}}}} + \tilO{\Qumq}\\
		&= 32\sqrt{S^3A^2\ln^2\rbr{\frac{HKS^2A^2}{\delta}}\rbr{\sumk\inner{q_k}{\h{c_k}} + \tilO{SAH^3\sqrt{K}}}} + \tilO{\Qumq},
	\end{align*}
	where in the second line we use Cauchy-Schwarz inequality.
\end{proof}

Below we present the proof of \pref{thm:bandit-unknown}.
It decomposes the regret into several terms, each of which is bounded by a lemma included after the proof.
\begin{proof}[\pfref{thm:bandit-unknown}]
By \pref{lem:loop-free}, with probability at least $1-\delta$,
\[
	R_K \leq \sumk\inner{\tilN_{k}-\tiloptq}{c_k} + \bigO{D^{3/2}S^2A(\ln\tfrac{1}{\delta})^2}.
\]
We define a slightly perturbed benchmark $q^\star = (1-\frac{1}{TK})\tiloptq + \frac{1}{TK}q_0 \in \tilDelta(T, \{P\})$ (note that $\tilDelta(T,\{P\})\subseteq\tilDelta(T, \calP_{i_k}),\forall k$ under the event of \pref{lem:transition-interval}) for some $q_0 \in \tilDelta(T, \{P\})$ with $q_0(s,a) \geq \frac{1}{K^3}$ for all $(s,a) \in \tilSA$, so that $\qfeat^\star=q^\star+\lambda\h{q^\star} \in \Omega_k,\forall k$.
Also define 
$b_k \in \fR^{\tilSA}$ such that $b_k(s,a) = \frac{\sum_h h  u'_k(s, a, h)c_k(s, a)}{u_k(s, a)}$, which clearly satisfies $\E_k[\hatb_k] = b_k$.
We then decompose $\sumk\inner{\tilN_{k}-q^\star}{c_k}$ as
\begin{align*}
&\sumk\inner{\tilN_{k}-q^\star}{c_k} \\
&= \sumk\inner{u_{k}}{\hatc_k} - \sumk\inner{q^\star}{c_k}
\tag{$\inners{\tilN_{k}}{c_k} = \inner{u_k}{\hatc_k}$} \\
&= \sumk\inner{u_{k}-\hatq_k}{\hatc_k} + \sumk\inner{\hatq_k}{\hatc_k} - \inner{q^\star}{c_k}\\
&= \err_1 + \sumk\inner{\qfeat_{k}-\qfeat^\star}{\hatc_k} + \sumk\inner{\qfeat^\star}{\hatc_k-c_k} + \lambda\sumk\inner{\h{q^\star}}{c_k} - \lambda\sumk\inner{\h{\hatq_k}}{\hatc_k} \tag{define $\err_1=\sumk\inner{u_k-\hatq_k}{\hatc_k}$} \\
&= \err_1 + \sumk\inner{\qfeat_{k}-\qfeat^\star}{\hatc_k} + \sumk\inner{\qfeat^\star}{\hatc_k-c_k} + \tilO{\lambda DTK} - \lambda\sumk\inner{\h{\hatq_k}}{\hatc_k} \\
&= \sumk\inner{\qfeat_{k}-\qfeat^\star}{\hatc_k} + \tilO{\lambda DTK} + \err_1 + \bias_1 + \bias_2 - \lambda\sumk\inner{\h{\hatq_k}}{c_k}
\tag{define $\bias_1 = \sumk\inner{\qfeat^\star}{\hatc_k-c_k}$ and $\bias_2 = \lambda\sumk\inner{\h{\hatq_k}}{c_k - \hatc_k}$} \\
&= \reg_\qfeat + \tilO{\lambda DTK} + \err_1 + \bias_1 + \bias_2 +  \gamma\sumk \inner{\qfeat_{k}-\qfeat^\star}{\hatb_k} - \lambda\sumk\inner{\h{\hatq_k}}{c_k}
\tag{define $\reg_\qfeat = \sumk\inners{\qfeat_{k}-\qfeat^\star}{\hatc_k-\gamma\hatb_k}$} \\
&= \reg_\qfeat + \tilO{\lambda DTK} + \err_1 + \bias_1 + \bias_2 + \bias_3 + \bias_4 + \gamma\sumk \inner{\qfeat_{k}-\qfeat^\star}{b_k} - \lambda\sumk\inner{\h{\hatq_k}}{c_k}
\tag{define $\bias_3 = \gamma\sumk \inners{\qfeat_{k}}{\hatb_k - b_k}$ and  $\bias_4 = \gamma\sumk \inners{\qfeat^\star}{b_k - \hatb_k }$} \\
&\leq \reg_\qfeat + \tilO{\lambda DTK} + \err_1 + \bias_1 + \bias_2 + \bias_3 + \bias_4 \\
&\hspace{4cm}  +2\gamma \sumk \inner{\hatq_k}{b_k} - \gamma \sumk\inner{\qfeat^\star}{b_k} - \lambda\sumk\inner{\h{\hatq_k}}{c_k} \\
&\leq \reg_\qfeat + \tilO{\lambda DTK} + \err_1 + \bias_1 + \bias_2 + \bias_3 + \bias_4 \\ 
&\hspace{4cm} + (2\gamma-\lambda) \sumk \inner{\hatq_k}{\h{c_k}} + 2\gamma \sumk \inner{u'_k-\hatq_k}{\h{c_k}} - \gamma \sumk\inner{\qfeat^\star}{b_k}
\tag{$\inner{\hatq_k}{b_k}\leq\inner{u_k}{b_k} = \inner{u'_k}{\h{c_k}}$}\\
&= \reg_\qfeat + \tilO{\lambda DTK} + \err_1 + \err_2 + \bias_1 + \bias_2 + \bias_3 + \bias_4 \\ 
&\hspace{4cm} + (2\gamma-\lambda) \sumk \inner{\hatq_k}{\h{c_k}} - \gamma \sumk\inner{\qfeat^\star}{b_k}.
\tag{define $\err_2=2\gamma \sumk \inner{u'_k-\hatq_k}{\h{c_k}}$}
\end{align*}

The $\reg_\qfeat$ term can be upper bounded by the OMD analysis (see \pref{lem:OMD-increasing-eta}), the four bias terms $\bias_1, \bias_2, \bias_3$, and $\bias_4$ can be bounded using Azuma's or Freedman's inequality (see \pref{lem:BIAS_1_2} and \pref{lem:BIAS_3_4}), and $\err_1+\err_2$ can be bounded by \pref{lem:optimistic-transition-bound} (see \pref{lem:ERR_1_2}).
Combining everything, we obtain
\begin{align*}
	R_K &\leq \tilO{\frac{SA}{\eta}} - \frac{\inner{\qfeat^\star}{\rho_K}}{140\eta\ln K} + 40\eta\sumk\inner{q_k}{\h{c_k}} + \tilO{\lambda DTK}\\
	&\qquad+ 33\lambda'\rbr{\sumk\inner{q_k}{\h{c_k}} + \tilO{SAH^3\sqrt{K}} } + \tilO{\frac{S^3A^2}{\lambda'}}\\
	&\qquad+ 2C\sqrt{\ln\rbr{\frac{CSA}{\delta}}}\rbr{\frac{\inner{\qfeat^\star}{\rho_K}}{\eta'}+\eta'\inner{\qfeat^\star}{\sumk b_k}} + 2CH\ln\rbr{\frac{CSA}{\delta}}\inner{\qfeat^\star}{\rho_K}\\
	&\qquad+ \rbr{\frac{1}{\eta'}+1}\inner{\qfeat^\star}{\rho_K} + \eta'\inner{\qfeat^\star}{\sumk b_k} + (2\gamma-\lambda) \sumk \inner{\hatq_k}{\h{c_k}} - \gamma \sumk\inner{\qfeat^\star}{b_k} + \tilO{\Qumq}\\
	&= \tilO{\frac{SA}{\eta} + \lambda DTK + \lambda'SAH^3\sqrt{K} + \frac{S^3A^2}{\lambda'} + H^3S^3A^2} + \rbr{40\eta + 33\lambda' + 2\gamma - \lambda}\sumk\inner{\hatq_k}{\h{c_k}}\\
	&\qquad+ \rbr{ \frac{2C\sqrt{\ln\rbr{\frac{CSA}{\delta}}} + 1}{\eta'} + 2CH\ln\rbr{\frac{CSA}{\delta}} + 1 - \frac{1}{140\eta\ln K} }\inner{\qfeat^\star}{\rho_K}\\
	&\qquad+ \rbr{2C\sqrt{\ln\rbr{\frac{CSA}{\delta}}}\eta' + \eta' - \gamma}\inner{\qfeat^\star}{\sumk b_k} + (40\eta + 33\lambda')\sumk\inner{q_k-\hatq_k}{\h{c_k}}.
\end{align*}
Finally, taking $\eta'=\gamma/\rbr{2C\sqrt{\ln\rbr{\frac{CSA}{\delta}}}+1}, \lambda'=\sqrt{\frac{S^3A^2}{DTK}}$, and noticing $\frac{1}{\eta'}\geq 2CH\ln\rbr{\frac{CSA}{\delta}} + 1$, we have the coefficients multiplying $\sumk\inner{q_k}{\h{c_k}}, \inner{\qfeat^\star}{\rho_K}$, and $\inner{\qfeat^\star}{\sumk b_k}$ are all non-positive.
Moreover, by \pref{lem:transition-bias},  $\frac{1}{H}(\h{c_k})(s, a)\leq c_k(s, a)$, and $\sumk\inner{q_k}{\h{c_k}}\leq H^2K$:
\begin{align*}
	(40\eta + 33\lambda')\sumk\inner{q_k-\hatq_k}{\h{c_k}} &= (40\eta + 33\lambda')H\sumk\inner{q_k-\hatq_k}{\frac{\h{c_k}}{H}} \\
	&= \tilO{(40\eta + 33\lambda')H\sqrt{S^2A(H^2K+H^3\sqrt{K})} } = \tilO{H^3S^3A^2}.
\end{align*}
Thus, we arrive at $R_K = \tilO{\sqrt{S^3A^2DTK} + H^3S^3A^2}$.
\end{proof}

\begin{lemma}\label{lem:OMD-increasing-eta}
\pref{alg:bandit-unknown} ensures with probability at least $1-\delta$:
\[
\reg_\qfeat \leq \tilO{\frac{SA}{\eta}} - \frac{\inner{\qfeat^\star}{\rho_K}}{140\eta\ln K} + 40\eta\sumk\inner{q_k}{\h{c_k}} + \tilO{H^2\sqrt{SA}}.
\]
\end{lemma}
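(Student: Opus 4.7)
The plan is to carry out a standard OMD regret analysis with the time-varying log-barrier regularizer $\regz_k$, adapting the framework of \citet{chen2020minimax} to accommodate the optimistic cost and bias estimators constructed via upper occupancy bounds. First, I would apply the standard OMD decomposition with $\ell_k = \hatc_k - \gamma\hatb_k$:
\[
\reg_\qfeat \leq D_{\regz_1}(\qfeat^\star,\qfeat_1) + \sum_k \bigl[D_{\regz_{k+1}}(\qfeat^\star,\qfeat_{k+1}) - D_{\regz_k}(\qfeat^\star,\qfeat_{k+1})\bigr] + \sum_k \inner{\qfeat_k-\qfeat'_{k+1}}{\ell_k},
\]
where $\qfeat'_{k+1}$ is the unconstrained OMD iterate. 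The initial Bregman divergence evaluates to $\tilO{SA/\eta}$ thanks to the floor $\qfeat^\star(s,a)\geq 1/(TK^4)$ enforced by the definition of $\Omega_k$ and the upper bound $\qfeat_1(s,a)\leq 2T$.

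Second, I would handle the time-varying regularizer term. Since $\eta_k(s,a)$ increases only when $\rho_k(s,a)$ doubles, and since $1/u_k(s,a)\leq 2TK^4$, each coordinate experiences at most $C = \ceil{\log_2(TK^4)}$ growth events; after one event, $\eta_k(s,a)$ is multiplied by $\beta = e^{1/(7\ln K)}$. The cumulative shift $\sum_k (D_{\regz_{k+1}}-D_{\regz_k})(\qfeat^\star,\qfeat_{k+1})$ equals $\sum_{(s,a)}\bigl(\tfrac{1}{\eta_{K+1}(s,a)}-\tfrac{1}{\eta}\bigr)\ln\tfrac{1}{\qfeat^\star(s,a)}$, which is \emph{negative} because $\eta_{K+1}(s,a)\geq \eta$. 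The key step is to convert this into the claimed bound $-\inner{\qfeat^\star}{\rho_K}/(140\eta\ln K)$: at every growth event the value of $\rho_k(s,a)$ doubles to match $2/u_{k+1}(s,a)$, so pairing each doubling with a factor-$\beta$ increase of $\eta$ gives the saving $\tfrac{1}{\eta}(1-\beta^{-C_{(s,a)}})\ln(1/\qfeat^\star(s,a))\gtrsim \rho_K(s,a)\qfeat^\star(s,a)/(\eta\ln K)$ after using $\beta-1 \approx 1/(7\ln K)$ and $\ln(1/\qfeat^\star(s,a))\gtrsim \ln(u_K(s,a))$.

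Third, I would bound the stability term using the self-concordance-type property of the log-barrier: as long as $\eta_k(s,a)|\qfeat_k(s,a)\ell_k(s,a)|\leq 1/2$ (which holds because $\qfeat_k\leq 2u_k$, $\hatc_k\leq 1/u_k$, $\gamma\hatb_k\lesssim H\gamma$, and $\eta_k\leq \beta^C\eta$ is small enough by our choice of parameters), one has $\inner{\qfeat_k-\qfeat'_{k+1}}{\ell_k}\leq 4\sum_{(s,a)}\eta_k(s,a)\qfeat_k(s,a)^2\ell_k(s,a)^2$. Plugging in $\qfeat_k(s,a)\leq 2u_k(s,a)$, the definition of $\hatc_k$, and $\hatb_k\leq H\hatc_k$, this collapses to at most $O(\eta)\sum_k\sum_{(s,a)}\tilN_k(s,a)^2$, whose conditional expectation is bounded by \pref{lem:deviation_loop_free} as $O(\eta)\sum_k\inner{q_k}{\h{c_k}}$, producing the $40\eta\sum_k\inner{q_k}{\h{c_k}}$ term.

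The main obstacle will be fixing the absolute constant $1/(140\eta\ln K)$ in front of $\inner{\qfeat^\star}{\rho_K}$: this requires careful double-counting of the savings produced at every doubling event of $\rho_k$ against the growth in $\eta_k$, and must be tight enough to cancel the positive contributions from $\bias_3,\bias_4$ appearing in the main proof of \pref{thm:bandit-unknown}. The remaining additive $\tilO{H^2\sqrt{SA}}$ term will come from converting the conditional-expectation stability bound to a high-probability bound via Freedman's inequality, using that each episode's contribution is bounded in magnitude by $\tilO{H}$ and in conditional variance by $\eta$ times the expected loss.
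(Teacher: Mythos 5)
Your overall skeleton (initial Bregman divergence of order $\tilO{SA/\eta}$ using the $1/(TK^4)$ floor, stability bounded by $O(\eta)\sum_k\sum_{(s,a)}\tilN_k^2(s,a)c_k^2(s,a)$ and then \pref{lem:deviation_loop_free} plus a martingale concentration step, which is exactly how the paper gets $40\eta\sumk\inner{q_k}{\h{c_k}}+\tilO{H^2\sqrt{SA}}$) matches the paper's proof. The genuine gap is in your treatment of the time-varying regularizer, which is precisely the step that must produce the negative term $-\inner{\qfeat^\star}{\rho_K}/(140\eta\ln K)$. Your claim that the cumulative shift equals $\sum_{(s,a)}\bigl(\tfrac{1}{\eta_{K+1}(s,a)}-\tfrac{1}{\eta}\bigr)\ln\tfrac{1}{\qfeat^\star(s,a)}$ is incorrect: for the log-barrier, $D_{\regz_{k+1}}(\qfeat^\star,\qfeat_{k+1})-D_{\regz_k}(\qfeat^\star,\qfeat_{k+1})=\sum_{(s,a)}\bigl(\tfrac{1}{\eta_{k+1}(s,a)}-\tfrac{1}{\eta_k(s,a)}\bigr)\aux\bigl(\tfrac{\qfeat^\star(s,a)}{\qfeat_{k+1}(s,a)}\bigr)$ with $\aux(y)=y-1-\ln y$, and this does not telescope in the way you wrote because the second argument $\qfeat_{k+1}$ changes with $k$. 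More importantly, your claimed saving $\tfrac{1}{\eta}(1-\beta^{-C_{(s,a)}})\ln(1/\qfeat^\star(s,a))\gtrsim\rho_K(s,a)\qfeat^\star(s,a)/(\eta\ln K)$ cannot be true: the left-hand side is at most $O(\ln(TK^4)/\eta)$ per coordinate (since $\qfeat^\star(s,a)\geq 1/(TK^4)$), whereas $\rho_K(s,a)\qfeat^\star(s,a)$ can be polynomially large in $K$. A logarithmic quantity cannot dominate a linear-in-$\rho_K$ one, and the whole point of this negative term is that it is \emph{linear} in $\rho_K$ so that it can absorb the $\inner{\qfeat^\star}{\rho_K}/\eta'$ contributions from $\bias_1$ and $\bias_4$ in the proof of \pref{thm:bandit-unknown}.

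The correct mechanism, which your argument misses, exploits the \emph{linear} part $y$ of $\aux(y)$ evaluated at the episode of the \emph{last} learning-rate increase for each $(s,a)$: since $\eta_k(s,a)$ is increased at episode $k_{n(s,a)}$ exactly when $\rho$ is reset to $\rho_K(s,a)=2/u_{k_{n(s,a)}+1}(s,a)$, and $\qfeat_{k_{n(s,a)}+1}(s,a)\leq 2\hatq_{k_{n(s,a)}+1}(s,a)\leq 2u_{k_{n(s,a)}+1}(s,a)=4/\rho_K(s,a)$, one gets $\aux\bigl(\tfrac{\qfeat^\star(s,a)}{\qfeat_{k_{n(s,a)}+1}(s,a)}\bigr)\geq \tfrac{\qfeat^\star(s,a)\rho_K(s,a)}{4}-1-\ln(HTK^4)$, which multiplied by $(\beta-1)/\eta_K(s,a)\approx 1/(35\eta\ln K)$ yields the claimed $-\inner{\qfeat^\star}{\rho_K}/(140\eta\ln K)$ up to an additional $\tilO{SA/\eta}$. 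Two smaller points: in the stability step you need $\tilN_k^2(s,a)c_k^2(s,a)$ rather than $\tilN_k^2(s,a)$ (otherwise \pref{lem:deviation_loop_free} does not apply), and $\hatc_k(s,a)\leq H/u_k(s,a)$ rather than $1/u_k(s,a)$ (which still leaves the multiplicative-stability condition satisfied since $\eta H$ is small, and is anyway replaced in the paper by the observation that $\hatc_k-\gamma\hatb_k\geq 0$ so that the nonnegative-loss log-barrier bound of \citet{agarwal2017corralling} applies directly).
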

\begin{proof}
	Denote by $n(s, a)$ the number of times the learning rate for $(s, a)$ increases, such that $\eta_K(s, a)=\eta\beta^{n(s, a)}$,
	and by $k_1, \ldots, k_{n(s, a)}$ the episodes where $\eta_k(s, a)$ is increased, such that $\eta_{k_{t}+1}(s, a)=\beta\cdot\eta_{k_{t}}(s, a)$.
	Since $\rho_1(s,a) = 2T$ and
	\[
	  \rho_1(s, a)2^{n(s, a)-1}\leq \cdots \leq 
	\rho_{k_{n(s,a)}}(s, a) \leq \frac{1}{u_{k_{n(s,a)}+1}(s,a)} 
	\leq \frac{1}{\hatq_{k_{n(s,a)}+1}(s,a)} \leq TK^4,
	\]
	we have $n(s, a)\leq 1+ \log_2\frac{K^4}{2}\leq 7\log_2K$.
	Therefore, $\eta_K(s, a)\leq \eta e^{\frac{7\log_2K}{7\ln K}}\leq 5\eta$.
	
	Now, notice that $\sum_hu'_k(s, a, h)\leq u_k(s, a)$ by definition and $\gamma\leq\frac{1}{H}$ for large enough $K$ ($K \gtrsim S^3A^2H^2$).
	Thus,
	\[
	\gamma\hatb_k(s,a) \leq \frac{\gamma H \sum_h u'_k(s,a,h)\hatc_k(s,a)}{u_k(s,a)}
	\leq \gamma H \hatc_k(s,a) \leq  \hatc_k(s,a).
	\]
	This means that the cost $\hatc_k - \gamma\hatb_k$ we feed to OMD is always non-negative, and thus by the same argument of~\citep[Lemma~12]{agarwal2017corralling}, we have
	\begin{align*}
		&\reg_\qfeat = \sumk\inner{\qfeat_k-\qfeat^\star}{\hatc_k-\gamma\hatb_k}\\
		&\leq \sumk D_{\psi_k}(\qfeat^\star, \qfeat_k) - D_{\psi_k}(\qfeat^\star, \qfeat_{k+1}) + \sumk\sumsa\eta_k(s, a)\qfeat^2_k(s, a)(\hatc_k(s, a) - \gamma\hatb_k(s, a))^2\\
		&\leq D_{\psi_1}(\qfeat^\star, \qfeat_1) + \sum_{k=1}^{K-1}\rbr{D_{\psi_{k+1}}(\qfeat^\star, \qfeat_{k+1}) - D_{\psi_k}(\qfeat^\star, \qfeat_{k+1})} + 5\eta\sumk\sumsa \qfeat_k^2(s, a)\hatc_k^2(s, a)\\
		&\leq D_{\psi_1}(\qfeat^\star, \qfeat_1) + \sum_{k=1}^{K-1}\rbr{D_{\psi_{k+1}}(\qfeat^\star, \qfeat_{k+1}) - D_{\psi_k}(\qfeat^\star, \qfeat_{k+1})} + 20\eta\sumk\sumsa \hatq_k^2(s, a)\hatc_k^2(s, a)\\
		&\leq D_{\psi_1}(\qfeat^\star, \qfeat_1) + \sum_{k=1}^{K-1}\rbr{D_{\psi_{k+1}}(\qfeat^\star, \qfeat_{k+1}) - D_{\psi_k}(\qfeat^\star, \qfeat_{k+1})} + 20\eta\sumk\sumsa\tilN_k^2(s, a)c_k^2(s, a). \tag{$u_k(s, a)\geq \hatq_k(s, a)$}
	\end{align*}
	For the first term, since $\qfeat_1$ minimizes $\psi_1$ and thus $\inner{\nabla\regz_1(\qfeat_1)}{\qfeat^\star-\qfeat_1}\geq 0$, we have
	\begin{align*}
		D_{\psi_1}(\qfeat^\star, \qfeat_1) \leq \psi_1(\qfeat^\star) - \psi_1(\qfeat_1) = \frac{1}{\eta}\sumsa\ln\frac{\qfeat_1(s, a)}{\qfeat^\star(s, a)} 
		\leq \frac{1}{\eta}\sumsa\ln\frac{2H}{q^\star(s,a)} = \tilO{\frac{SA}{\eta}}.
	\end{align*}
	For the second term, we define $\aux(y) = y-1-\ln y$ and proceed similarly to~\citep{agarwal2017corralling}:
	\begin{align*}
		&\sum_{k=1}^{K-1} D_{\psi_{k+1}}(\qfeat^\star, \qfeat_{k+1}) - D_{\psi_k}(\qfeat^\star, \qfeat_{k+1})\\
		&= \sum_{k=1}^{K-1}\sumsa\rbr{\frac{1}{\eta_{k+1}(s, a)} - \frac{1}{\eta_k(s, a)}}\aux\rbr{\frac{\qfeat^\star(s, a)}{\qfeat_{k+1}(s, a)}}\\
		&\leq\sumsa\frac{1-\beta}{\eta\beta^{n(s, a)}} \aux\rbr{\frac{\qfeat^\star(s, a)}{\qfeat_{k_{n(s, a)}+1}(s, a)}}\\
		&=\sumsa\frac{1-\beta}{\eta\beta^{n(s, a)}}\rbr{\frac{\qfeat^\star(s, a)}{\qfeat_{k_{n(s, a)}+1}(s, a)} - 1 - \ln\frac{\qfeat^\star(s, a)}{\qfeat_{k_{n(s, a)}+1}(s, a)}}\\
		&\leq -\frac{1}{35\eta\ln K}\sumsa\rbr{ \qfeat^\star(s, a)\frac{\rho_K(s, a)}{4} - 1 - \ln\frac{\qfeat^\star(s, a)}{\qfeat_{k_{n(s, a)}+1}(s, a)} }\\
		&\leq \frac{SA(1 + 6\ln K)}{35\eta\ln K} - \frac{\inner{\qfeat^\star}{\rho_K}}{140\eta\ln K} = \tilO{\frac{SA}{\eta}}- \frac{\inner{\qfeat^\star}{\rho_K}}{140\eta\ln K},
	\end{align*}
	where in the last two lines we use the facts $1-\beta \leq -\frac{1}{7\ln K}, \beta^{n(s, a)} \leq 5$, $\rho_K(s, a)=\frac{2}{u_{k_{n(s, a)}+1}(s, a)}$, and $\ln\frac{\qfeat^\star(s, a)}{\qfeat_{k_{n(s, a)}+1}(s, a)} \leq \ln(HTK^4) \leq 6\ln K$.
	
	Finally, for the third term, 
	since $\sumsa \tilN_k^2(s, a)c_k^2(s, a) \leq \left(\sumsa \tilN_k(s, a)\right)^2 \leq H^2$, we apply Azuma's inequality (\pref{lem:azuma}) and obtain, with probability at least $1-\delta$:
	\begin{align*}
		\eta\sumk\sumsa \tilN_k^2(s, a)c_k^2(s, a) &\leq \eta\sumk\E_k\sbr{\sumsa \tilN_k^2(s,a )c_k^2(s, a)} + \tilO{\eta H^2\sqrt{K}} \\
		&\leq  \eta\sumk\E_k\sbr{\inner{\tilN_k}{c_k}^2} + \tilO{H^2\sqrt{SA}} \tag{$\sum_i a_i^2\leq (\sum_i a_i)^2$ for $a_i>0$ and $\eta=\sqrt{\frac{SA}{DTK}}$}\\
		&\leq 2\eta\sumk\inner{q_k}{\h{c_k}} + \tilO{H^2\sqrt{SA}} \tag{\pref{lem:deviation_loop_free}}.
	\end{align*}
	Combining everything shows 
	\begin{align*}
		\reg_\qfeat \leq \tilO{\frac{SA}{\eta}} - \frac{\inner{\qfeat^\star}{\rho_K}}{140\eta\ln K} + 40\eta\sumk\inner{q_k}{\h{c_k}} + \tilO{H^2\sqrt{SA}}.
	\end{align*}
	finishing the proof.
\end{proof}

\begin{lemma}\label{lem:BIAS_1_2}
For any $\eta' > 0$, 
with probability at least $1-\delta$, 
\[
\bias_1 \leq 2C\sqrt{\ln\rbr{\frac{CSA}{\delta}}}\rbr{\frac{\inner{\qfeat^\star}{\rho_K}}{\eta'}+\eta'\inner{\qfeat^\star}{\sumk b_k}} + 2CH\ln\rbr{\frac{CSA}{\delta}}\inner{\qfeat^\star}{\rho_K}.
\]
Also, with probability at least $1-5\delta$, 
$
\bias_2 = \tilO{ S^3A^2H^2 }.
$
\end{lemma}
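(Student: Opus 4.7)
For $\bias_1 = \sumk\inner{\qfeat^\star}{\hatc_k-c_k}$, my plan is to apply Freedman's inequality coordinate-wise on $(s,a)\in\tilSA$, with log-peeling over the doubling scales of $\rho_K(s,a)$. Under the good event $P\in\calP_k$ (from \pref{lem:transition-interval}), we have $\E_k[\hatc_k(s,a)] = c_k(s,a)q_k(s,a)/u_k(s,a)\leq c_k(s,a)$, so each summand $\qfeat^\star(s,a)(\hatc_k(s,a)-c_k(s,a))$ has non-positive conditional mean. The two ingredients I will establish are: (i) the range bound $|\hatc_k(s,a)|\leq H/u_k(s,a)\leq H\rho_K(s,a)$ (using $\tilN_k(s,a)\leq H$ and $1/u_k\leq\rho_K$), and (ii) the conditional second-moment bound $\E_k[\hatc_k^2(s,a)]\leq 2H\rho_K(s,a)\,b_k(s,a)$. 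For (ii) I first show $\E_k[\tilN_k^2(s,a)]\leq 2Hq_k(s,a)$ via $\tilN_k(s,a)^2\leq 2\sum_h \tilN_k(s,a,h)\sum_{h'\geq h}\tilN_k(s,a,h')$ together with $\sum_{h'\geq h}\tilN_k(s,a,h')\leq H$, and then invoke the key relation $b_k(s,a)\geq c_k(s,a)q_k(s,a)/u_k(s,a)$, which holds because $P\in\calP_k$ while $P_k^{(s,a)}$ maximizes $\sum_h h\,q_{\hatP,\tilpi_k}(s,a,h)$ over $\hatP\in\calP_k$, giving $\sum_h h\,u'_k(s,a,h)\geq\sum_h h\,q_k(s,a,h)\geq q_k(s,a)$.

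Within each $\rho_K(s,a)$-scale Freedman's inequality then yields, per $(s,a)$,
\[
\sumk \qfeat^\star(s,a)(\hatc_k - c_k)(s,a) \lesssim \qfeat^\star(s,a)\sqrt{H\rho_K(s,a)\sumk b_k(s,a)\cdot\ln\tfrac{CSA}{\delta}} + H\qfeat^\star(s,a)\rho_K(s,a)\ln\tfrac{CSA}{\delta}.
\]
Summing over $(s,a)$ and applying Cauchy--Schwarz,
$\sum_{(s,a)}\qfeat^\star(s,a)\sqrt{\rho_K(s,a)\sumk b_k(s,a)}\leq\sqrt{\inner{\qfeat^\star}{\rho_K}\cdot\inner{\qfeat^\star}{\sumk b_k}}$, followed by AM-GM $\sqrt{xy}\leq x/(2\eta')+\eta'y/2$, reproduces the stated $\tfrac{1}{\eta'}\inner{\qfeat^\star}{\rho_K}+\eta'\inner{\qfeat^\star}{\sumk b_k}$ form. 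The factor $C=\ceil{\log_2(TK^4)}\ceil{\log_2(T^2K^9)}$ comes from two-level peeling (one over the $\rho_K(s,a)$ scales in $[1,TK^4]$, one over the magnitudes of $\qfeat^\star(s,a)$) and the union-bound cost is absorbed into $\ln(CSA/\delta)$.

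For $\bias_2=\lambda\sumk\inner{\h{\hatq_k}}{c_k-\hatc_k}$, I split into the conditional expectation and a martingale deviation. The key observation is that $P_{\hatq_k}\in\calP_k$ forces $\hatq_k(s,a)\leq u_k(s,a)$, so $\sum_h h\,\hatq_k(s,a,h)\leq H\hatq_k(s,a)\leq Hu_k(s,a)$, and hence
\[
\lambda\sumk\inner{\h{\hatq_k}}{c_k(1-q_k/u_k)} \leq \lambda H\sumk\inner{u_k-q_k}{c_k}.
\]
By \pref{lem:optimistic-transition-bound} and $\lambda=\tilO{\sqrt{S^3A^2/(DTK)}}$ this is $\tilO{H^2S^3A^2}$. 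For the martingale piece, each episode contributes at most $O(H^2)$ in magnitude (since $\inner{\h{\hatq_k}}{\hatc_k}\leq H\sum_{s,a}(\hatq_k(s,a)/u_k(s,a))\tilN_k(s,a)c_k(s,a)\leq H\sum_{s,a}\tilN_k c_k\leq H^2$, and similarly for $\inner{\h{\hatq_k}}{\E_k\hatc_k}$), so Azuma's inequality gives a $\tilO{\lambda H^2\sqrt{K}}$ contribution, which is lower order than $H^2S^3A^2$.

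The main obstacle is establishing the variance-to-$b_k$ conversion (ingredient (ii) above) with the \emph{right} $b_k$: namely one whose aggregate $\sum_k\inner{\qfeat^\star}{b_k}$ is exactly the negative term already appearing in the regret decomposition (with coefficient $-\gamma$). This is precisely the point of defining $\hatb_k$ via the optimistic upper occupancy bound of $\sum_h h\,q_{\hatP,\tilpi_k}(s,a,h)$ rather than simply using $q_k$ in the denominator (which is unavailable under unknown transition). Without the inequality $b_k\geq c_k q_k/u_k$, the variance of $\hatc_k$ could only be controlled through an additional $1/u_k$ factor that cannot be absorbed, breaking the entire cancellation scheme; its proof is the one step that genuinely uses the optimism of $P_k^{(s,a)}$ over the confidence set containing $P$.
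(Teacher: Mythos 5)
Your overall architecture matches the paper's: for $\bias_1$ you apply the strengthened Freedman inequality per state-action pair with range controlled by $H\rho_K(s,a)$ and then combine via AM-GM (the paper does AM-GM coordinatewise rather than Cauchy--Schwarz, but that difference is cosmetic), and for $\bias_2$ your split into $\lambda H\sumk\inner{u_k-q_k}{c_k}$ (via $\hatq_k\leq u_k$) plus an Azuma term, followed by \pref{lem:optimistic-transition-bound}, is exactly the paper's argument and is correct. (One small misattribution: the constant $C$ comes entirely from the peeling internal to the strengthened Freedman inequality of \pref{lem:extended-freedman} applied to each $(s,a)$; no separate peeling over the magnitudes of $\qfeat^\star$ is involved.)

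The genuine gap is in your ingredient (ii). Bounding $\E_k[\tilN_k^2(s,a)]\leq 2Hq_k(s,a)$ and then using only $b_k(s,a)\geq c_k(s,a)q_k(s,a)/u_k(s,a)$ gives $\E_k[\hatc_k^2(s,a)]\leq 2H\rho_K(s,a)b_k(s,a)$, which is weaker by a factor of $H$ than what the stated bound requires. The paper's \pref{lem:simpler_form_N_times_c} keeps the $h$-weighting when expanding the square: $\E_k[\tilN_k(s,a)^2c_k(s,a)^2]\leq 2\sum_h h\cdot q_k(s,a,h)c_k(s,a)\leq 2\sum_h h\cdot u'_k(s,a,h)c_k(s,a)=2u_k(s,a)b_k(s,a)$, so that $\E_k[X_k^2]\leq 2\rho_K(s,a)b_k(s,a)$ with no extra $H$; your route replaces $\sum_{h'\geq h}\tilN_k(s,a,h')$ by $H$ and then only uses $\sum_h h\,u'_k\geq q_k$ (i.e., $h\geq 1$), which is precisely where the factor $H$ is wasted. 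With your variance estimate, Freedman plus AM-GM yields at best $2C\sqrt{\ln\rbr{\frac{CSA}{\delta}}}\rbr{\frac{H\inner{\qfeat^\star}{\rho_K}}{\eta'}+\eta'\inner{\qfeat^\star}{\sumk b_k}}$ (or a $\sqrt{H}$ inflation of both terms), not the claimed bound. This is not a harmless constant: in the proof of \pref{thm:bandit-unknown} the coefficient multiplying $\inner{\qfeat^\star}{\rho_K}$ must be absorbed by $-\frac{1}{140\eta\ln K}$ with $\eta'$ tied to $\gamma$, so an extra $H$ (or $\sqrt{H}$) there forces $\gamma$, and hence $\lambda\geq 2\gamma$, up by a polynomial factor of $H=\tilo{\Tmax}$, which reinstates a $\Tmax$ dependence in the leading $\tilO{\lambda DTK}$ term and defeats the purpose of the whole cancellation scheme. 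To close the gap, replace your step (ii) by the sharper inequality of \pref{lem:simpler_form_N_times_c}; the rest of your argument then goes through as in the paper.
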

\begin{proof}
	Define $X_k(s, a) = \hatc_k(s, a) - \E_k[\hatc_k(s, a)]$.
	Note that $X_k(s, a)\leq \frac{H}{u_k(s,a)} \leq HTK^4$, and
	\begin{align*}
		\sumk\E_k \sbr{X_k^2(s, a)} &\leq \sumk \frac{\E_k\sbr{\tilN_k^2(s, a)c^2_k(s, a)}}{u^2_k(s, a)}  \\
		&\leq \rho_k(s,a) \sumk \frac{\E_k\sbr{\tilN_k^2(s, a)c^2_k(s, a)}}{u_k(s, a)}  \\
		&\leq 2\rho_k(s,a) \sumk b_k(s,a). \tag{\pref{lem:simpler_form_N_times_c}}
	\end{align*}
	Therefore, by applying a strengthened Freedman's inequality (\pref{lem:extended-freedman}) with $b = HTK^4$, $B_k=H\rho_k(s, a), \max_kB_k=H\rho_K(s, a)$, and $V = 2\rho_k(s,a) \sumk b_k(s,a)$, we have with probability at least $1-\delta/(SA)$, 
	\begin{align*}
		&\sumk \hatc_k(s, a) - \E_k[\hatc_k(s, a)] \\ 
		&\leq C\rbr{ 4\sqrt{\rho_K(s, a)\sumk b_k(s, a)\ln\rbr{\frac{CSA}{\delta}}} + 2H\rho_K(s, a)\ln\rbr{\frac{CSA}{\delta}} } \\
		&\leq 2C\sqrt{\ln\rbr{\frac{CSA}{\delta}}}\rbr{\frac{\rho_K(s,a)}{\eta'}+\eta'\sumk b_k(s,a)} + 2CH\rho_K(s, a)\ln\rbr{\frac{CSA}{\delta}}, 
	\end{align*}
	where the last step is by AM-GM inequality.
	Further using a union bound shows that the above holds for all $(s, a)\in\tilSA$ with probability at least $1-\delta$.
	Thus, by $\E_k[\hatc_k(s, a)]\leq c_k(s, a)$,
	\begin{align*}
		\bias_1 &= \sumk\inner{\qfeat^\star}{\hatc_k-c_k} \leq \sumk\inner{\qfeat^\star}{\hatc_k-\E_k[\hatc_k]} \\
		&\leq 2C\sqrt{\ln\rbr{\frac{CSA}{\delta}}}\rbr{\frac{\inner{\qfeat^\star}{\rho_K}}{\eta'}+\eta'\inner{\qfeat^\star}{\sumk b_k}} + 2CH\ln\rbr{\frac{CSA}{\delta}}\inner{\qfeat^\star}{\rho_K}.
	\end{align*}
	To bound $\bias_2$, simply note that $\left|\inner{\h{\hatq_k}}{\E_k[\hatc_k]-\hatc_k}\right| \leq 2H^2$ and apply Azuma's inequality (\pref{lem:azuma}): with probability $1-5\delta$,
	\begin{align*}
		\bias_2 &= \lambda\sumk\inner{\h{\hatq_k}}{c_k - \E_k[\hatc_k]} + \lambda\sumk\inner{\h{\hatq_k}}{\E_k[\hatc_k] - \hatc_k} \\
		&\leq \lambda H\sumk\inner{u_k-q_k}{c_k} + \tilO{\lambda H^2\sqrt{K}} \\
		&= \tilO{\lambda H\sqrt{S^3A^2\sumk\inner{q_k}{\h{c_k}}} + H^2\sqrt{S^3A^2} } = \tilO{S^3A^2H^2},\tag{$\lambda=\tilO{\sqrt{\frac{S^3A^2}{DTK}}}, \sumk\inner{q_k}{\h{c_k}}=\bigO{H^2K}$ and \pref{lem:optimistic-transition-bound}}
	\end{align*}
	where in the second line we use:
	\begin{align*}
		\sumk\inner{\h{\hatq_k}}{c_k-\E_k[\hatc_k]} &= \sumk\sum_{(s, a), h}h\cdot\hatq_k(s, a, h)c_k(s, a)\rbr{1 - \frac{q_k(s, a)}{u_k(s, a)} }\\
		&\leq H\sumk\sum_{(s, a), h}\hatq_k(s, a, h)c_k(s, a)\frac{u_k(s, a)-q_k(s, a)}{u_k(s, a)}\\
		&\leq H\sumk\inner{u_k-q_k}{c_k}. \tag{$u_k(s, a)\geq \max\{q_k(s, a), \hatq_k(s, a)\}$}
	\end{align*}
\end{proof}

\begin{lemma}\label{lem:BIAS_3_4}
With probability at least $1-\delta$, we have $\bias_3 = \tilO{H^2(SA)^{3/2}}$.
Also, for any $\eta' > 0$, 
with probability at least $1-\delta$, we have
\[
\bias_4 \leq \rbr{\frac{1}{\eta'}+1}\inner{\qfeat^\star}{\rho_K} + \eta'\inner{\qfeat^\star}{\sumk b_k} + \tilO{1}.
\]
\end{lemma}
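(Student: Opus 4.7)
The plan is to decompose each of $\bias_3$ and $\bias_4$ into a martingale sum plus a deterministic remainder and handle the two parts separately. The key observation is that conditional on $\calF_k$, the only randomness in $\hatb_k$ comes through $\tilN_k$ (since $u_k$, $u'_k$, $\tilpi_k$, $c_k$ are all $\calF_k$-measurable), and a direct calculation gives $\E_k[\hatc_k(s, a)] = (q_k(s, a)/u_k(s, a))c_k(s, a)$ and hence $\E_k[\hatb_k(s, a)] = (q_k(s, a)/u_k(s, a))b_k(s, a)$, so the deterministic gap $\E_k[\hatb_k] - b_k = -b_k \odot (u_k - q_k)/u_k$ is coordinate-wise non-positive. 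This sign helps for $\bias_3$ but not for $\bias_4$.

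For $\bias_3$, the deterministic remainder $\gamma\sumk\inner{\qfeat_k}{\E_k[\hatb_k] - b_k}$ is non-positive and can be dropped, so it suffices to bound the martingale sum $\gamma\sumk\inner{\qfeat_k}{\hatb_k - \E_k[\hatb_k]}$ via Freedman's inequality (\pref{lem:freedman}). Combining $\qfeat_k(s, a) \leq 2\hatq_k(s, a) \leq 2u_k(s, a)$ with $\hatb_k(s, a) \leq H\hatc_k(s, a)$ (valid because $\sum_h u'_k(s, a, h) \leq u_k(s, a)$), the per-episode range is $\inner{\qfeat_k}{\hatb_k} \leq 2H\inner{\tilN_k}{c_k} \leq 2H^2$, and the conditional second moment obeys $\E_k[\inner{\qfeat_k}{\hatb_k}^2] \leq 4H^2\E_k[\inner{\tilN_k}{c_k}^2] \leq 8H^2\inner{q_k}{\h{c_k}}$ by \pref{lem:deviation_loop_free}, whose sum over $k$ is at most $\tilO{H^4 K}$. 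Freedman then yields $\tilO{H^2\sqrt{K}}$, and multiplying by $\gamma = \tilO{\sqrt{SA/(DTK)}}$ easily fits inside $\tilO{H^2(SA)^{3/2}}$.

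For $\bias_4$, the martingale part $\gamma\sumk\inner{\qfeat^\star}{\E_k[\hatb_k] - \hatb_k}$ is handled by the strengthened Freedman inequality (\pref{lem:extended-freedman}) applied coordinate-wise, mirroring the $\bias_1$ derivation: the range $|\hatb_k(s, a) - \E_k[\hatb_k(s, a)]| \leq H\hatc_k(s, a) \leq H/u_k(s, a) \leq H\rho_k(s, a)$ supplies $B_k = H\qfeat^\star(s, a)\rho_k(s, a)$, while the variance $\sumk\E_k[\hatb_k(s, a)^2] \leq 2H^2\rho_k(s, a)\sumk b_k(s, a)$ (using $\E_k[\hatc_k(s, a)^2] \leq 2b_k(s, a)/u_k(s, a)$ from \pref{lem:simpler_form_N_times_c}) yields $\sumk b_k(s, a)$ in the variance. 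Weighting by $\qfeat^\star(s, a)$, summing over $(s, a)$, and applying AM-GM produce the desired $(1/\eta' + 1)\inner{\qfeat^\star}{\rho_K} + \eta'\inner{\qfeat^\star}{\sumk b_k}$ shape.

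The main obstacle is the positive deterministic remainder $\gamma\sumk\inner{\qfeat^\star}{b_k \odot (u_k - q_k)/u_k}$ in $\bias_4$, which has no counterpart in the $\bias_1$ analysis. The plan is to combine $b_k(s, a) \leq Hc_k(s, a)$ with $(u_k - q_k)/u_k \leq 1$ and $\sumk\inner{\qfeat^\star}{c_k} \leq 2\sumk\inner{q^\star}{c_k} = \tilO{DK}$ to bound this part by $\tilO{\gamma HDK} = \tilO{H\sqrt{SADK/T}}$, which, under $T \gtrsim D$ and with $H$-polynomial factors swept into the $\tilO{\cdot}$ notation used throughout the proof, is dominated by the $H^3S^3A^2$ low-order term appearing in \pref{thm:bandit-unknown} and is thus absorbed into the $\tilO{1}$ residual of the stated bound.
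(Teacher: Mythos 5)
Your treatment of $\bias_3$ and of the martingale part of $\bias_4$ is essentially the paper's own argument: the paper also drops the non-positive term via $\E_k[\hatb_k(s,a)]\le b_k(s,a)$ and controls the remaining martingale (with Azuma and the cruder range $4SAH^2$ for $\bias_3$, and with the strengthened Freedman inequality and the same variance bound $2H^2\rho_K(s,a)b_k(s,a)$ from \pref{lem:simpler_form_N_times_c} for $\bias_4$), so those parts are fine.

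The genuine gap is your handling of the deterministic remainder of $\bias_4$, i.e.\ $\gamma\sumk\inner{\qfeat^\star}{b_k-\E_k[\hatb_k]}$. You bound it by $\tilO{\gamma HDK}=\tilO{H\sqrt{SADK/T}}$ and assert it is dominated by the $H^3S^3A^2$ lower-order term and can be absorbed into the $\tilO{1}$ of the statement. Neither claim holds: the quantity grows like $\sqrt{K}$, so it is not dominated by $H^3S^3A^2$ for large $K$, and the $\tilO{\cdot}$ in the lemma hides only polylogarithmic factors, so an $H\sqrt{SAK}$-sized term cannot be swept into $\tilO{1}$. Worse, since $H\ge H_1\ge 8\Tmax\ln K$ can be far larger than $\sqrt{S^2ADT}$, letting this term leak into the regret would reintroduce a $\Tmax\sqrt{K}$-type dependence in the dominating term, which is exactly what \pref{thm:bandit-unknown} is designed to avoid. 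The paper instead keeps this remainder in a cancelable form: using $b_k(s,a)\le Hc_k(s,a)\le H$, $\gamma H\le 1$, and $\rho_K(s,a)\ge 1$, it bounds the deterministic part by (a constant times) $\inner{\qfeat^\star}{\rho_K}$ --- this is precisely the source of the ``$+1$'' coefficient in the stated bound --- and in the proof of \pref{thm:bandit-unknown} this term is absorbed by the negative $-\frac{\inner{\qfeat^\star}{\rho_K}}{140\eta\ln K}$ coming from the log-barrier analysis (\pref{lem:OMD-increasing-eta}). In other words, the whole purpose of the lemma's statement is to express the bias only through $\inner{\qfeat^\star}{\rho_K}$ and $\inner{\qfeat^\star}{\sumk b_k}$, which are canceled downstream; replacing the deterministic part by an absolute $\sqrt{K}$-sized bound destroys that structure and does not prove the stated inequality. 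Nor can you instead route it through $\gamma\inner{\qfeat^\star}{\sumk b_k}$: the cancellation budget $-\gamma\inner{\qfeat^\star}{\sumk b_k}$ in the theorem is exactly exhausted by $\bias_1$ and the martingale part of $\bias_4$ under the choice $\eta'=\gamma/(2C\sqrt{\ln(CSA/\delta)}+1)$.
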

\begin{proof}
To bound $\bias_3$, simply note that $\E_k[\hatb_k(s, a)]\leq b_k(s, a)$,
\begin{align*}
&\left|\inner{\phi_k}{\hatb_k-\E_k[\hatb_k]}\right| \leq \abr{\inner{\phi_k}{\hatb_k}} + \abr{\inner{\phi_k}{\E_k\hatb_k}}\\
&\leq 2\abr{\sumsa\sum_hh\cdot u'_k(s, a, h)\hatc_k(s, a)} + 2\abr{\sumsa\sum_hh\cdot u'_k(s, a, h)\E_k[\hatc_k(s, a)]} \tag{$\phi_k(s, a)\leq 2\hatq_k(s, a)\leq 2u_k(s, a)$} \\
&\leq 2H\abr{\sumsa\tilN_k(s, a)c_k(s, a)} + 2H\abr{\sumsa\sum_hu'_k(s, a, h)c_k(s, a)} \leq 4SAH^2, \tag{$\sum_hu'_k(s, a, h)\leq u_k(s, a), q_k(s, a)\leq u_k(s, a), \sumsa \tilN_k(s, a)\leq H$, and $\sum_hu'_k(s, a, h)\leq H$}  
\end{align*}
$\gamma=\tilO{\sqrt{\frac{SA}{DTK}}}$, and apply Azuma's inequality (\pref{lem:azuma}): with probability at least $1-\delta$,
	\begin{align*}
		\bias_3 = \gamma\sumk \inners{\qfeat_{k}}{\hatb_k - b_k} \leq \gamma\sumk\inner{\phi_k}{\hatb_k - \E_k[\hatb_k]} = \tilO{\gamma SAH^2\sqrt{K}} = \tilO{H^2(SA)^{3/2}}.
	\end{align*}
	To bound $\bias_4 = \gamma\sumk\inner{\qfeat^\star}{b_k - \E_k[\hatb_k]} + \gamma\sumk\inner{\qfeat^\star}{\E_k[\hatb_k] - \hatb_k}$, first note that
	\begin{align*}
		\gamma\sumk\inner{\qfeat^\star}{b_k - \E_k[\hatb_k]} &\leq \gamma \sumk\sumsa\inner{\qfeat^\star}{b_k} = \gamma\sumk\sumsa \qfeat^\star(s, a)\frac{\sum_{h=1}^Hh\cdot u'_k(s, a, h)}{u_k(s, a)}c_k(s, a)\\
		&\leq \gamma H\sumk\sumsa\qfeat^\star(s, a) \leq \sumk\inner{\qfeat^\star}{\rho_K}. \tag{$\gamma H\leq 1$ and $\rho_K(s, a)\geq\rho_1(s, a)\geq 1$}
	\end{align*}
	Then, we note that $\E_k[\hatb_k(s, a)] - \hatb_k(s,a) \leq \E_k[\hatb_k(s,a)] \leq H$, and
	\begin{align*}
		\E_k\sbr{\rbr{\E_k[\hatb_k(s, a)] - \hatb_k(s, a)}^2} &\leq \E_k[\hatb_k^2(s, a)] \leq \frac{\E_k\sbr{(\sum_h h\cdot u'_k(s, a, h)\hatc_k(s, a))^2}}{u_k^2(s, a)}\\
		&\leq \frac{H^2\E_k\sbr{\tilN_k^2(s, a)c_k^2(s, a)}}{u^2_k(s, a)} \tag{$h\leq H$ and $\sum_hu'_k(s, a, h)\leq u_k(s, a)$}\\
		&\leq H^2\rho_K(s,a)\frac{\E_k\sbr{\tilN_k^2(s, a)c_k^2(s, a)}}{u_k(s, a)} \\
		& \leq 2H^2\rho_K(s, a)b_k(s, a) \tag{\pref{lem:simpler_form_N_times_c}}.
	\end{align*}
	Hence, applying a strengthened Freedman's inequality (\pref{lem:extended-freedman}) with $b = B_k = H$, $V = 2H^2\rho_K(s, a)\sumk b_k(s, a)$, and $C' = \lceil \log_2 H \rceil \lceil \log_2 (H^2K) \rceil$, we have
 with probability at least $1-\delta/(SA)$,
	\begin{align*}
		&\sumk b_k(s, a) - \hatb_k(s, a)   \\
		&\leq  4C'H\sqrt{\ln\rbr{\frac{C'SA}{\delta}}}\sqrt{\rho_K(s,a) \sumk b_k(s,a)} + 2C'H\ln\rbr{\frac{C'SA}{\delta}} \\
		&=  2C'H\sqrt{\ln\rbr{\frac{C'SA}{\delta}}}\rbr{\frac{\rho_K(s,a)}{\eta'} + \eta' \sumk b_k(s,a)} + 2C'H\ln\rbr{\frac{C'SA}{\delta}},
	\end{align*}
	where the last step is by AM-GM inequality.
	Finally, applying a union bound shows that the above holds for all $(s,a)\in\tilSA$ with probability at least $1-\delta$ and thus
	\begin{align*}
		\bias_4 \leq \gamma\sumk\inner{\qfeat^\star}{b_k - \hatb_k} \leq \rbr{\frac{1}{\eta'}+1}\inner{\qfeat^\star}{\rho_K} +\eta'\inner{\qfeat^\star}{\sumk b_k} + \tilO{1},
	\end{align*}
	where we bound $2\gamma C'H\sqrt{\ln\rbr{\frac{C'SA}{\delta}}}$ by $\tilO{1}$ since $\gamma H\leq 1$ when $K$ is large enough ($K \gtrsim S^3A^2H^2$).
\end{proof}

\begin{lemma}\label{lem:ERR_1_2}
	For any $\lambda' \in (0, \frac{2}{H}]$, with probability at least $1-9\delta$ we have 
	\begin{align*}
		\err_1 \leq 33\lambda'\rbr{\sumk\inner{q_k}{\h{c_k}} + \tilO{SAH^3\sqrt{K}} } + \frac{34S^3A^2\ln^2\rbr{\frac{HKS^2A^2}{\delta}}}{\lambda'} + \tilO{\Qumq}.
	\end{align*}
	Also, with probability at least $1-8\delta$, we have $\err_2=\tilO{ H^3S^3A^2 }$,
\end{lemma}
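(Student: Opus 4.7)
The two error terms $\err_1 = \sumk\inner{u_k - \hatq_k}{\hatc_k}$ and $\err_2 = 2\gamma\sumk\inner{u'_k - \hatq_k}{\h{c_k}}$ both measure a ``gap'' between an optimistic quantity (the upper occupancy bound $u_k$ or its $h$-weighted version $u'_k$) and the algorithm's own occupancy $\hatq_k$. The strategy for both is to insert the true occupancy $q_k$, split the gap into a transition-estimation piece (handled by \pref{lem:transition-bias}) and an optimism piece (handled by \pref{lem:optimistic-transition-bound}), and let the AM--GM parameter $\lambda'$ trade off the resulting square root.

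\textbf{Bounding $\err_1$.} First I would remove the randomness inside $\hatc_k$: each term $\inner{u_k - \hatq_k}{\hatc_k}$ is bounded by $\sumsa \tilN_k(s,a)\leq H$ since $0\leq (u_k-\hatq_k)/u_k \leq 1$, so Azuma's inequality (\pref{lem:azuma}) gives $\sumk\inner{u_k-\hatq_k}{\hatc_k - \E_k[\hatc_k]} = \tilO{H\sqrt{K}}$, a lower-order term. Next, using the high-probability containment $P\in\calP_k$ so that $u_k\geq q_k$ pointwise, the expectation satisfies
\[
\inner{u_k-\hatq_k}{\E_k[\hatc_k]} = \sumsa (u_k(s,a)-\hatq_k(s,a))\frac{q_k(s,a)}{u_k(s,a)} c_k(s,a) \leq \inner{u_k-\hatq_k}{c_k} \leq \inner{u_k-q_k}{c_k} + |\inner{q_k-\hatq_k}{c_k}|.
\]
Summing over $k$, \pref{lem:optimistic-transition-bound} bounds the first piece and the second inequality of \pref{lem:transition-bias} bounds the second, each by the same square-root form $32\sqrt{S^3A^2\ln^2(\cdot)(\sumk\inner{q_k}{\h{c_k}} + \tilO{SAH^3\sqrt K})}+\tilO{\Qumq}$ (after recalling that the second bound in \pref{lem:transition-bias} already yields a factor $S^2A$ which is dominated by $S^3A^2$). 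Finally, AM--GM with parameter $\lambda'$ splits this square root into $33\lambda'(\sumk\inner{q_k}{\h{c_k}} + \tilO{SAH^3\sqrt K})$ plus $34 S^3A^2\ln^2(\cdot)/\lambda'$, matching the claimed form up to the $\Qumq$ additive term.

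\textbf{Bounding $\err_2$.} Here I would rewrite $\inner{u'_k-\hatq_k}{\h{c_k}} = \sumsa c_k(s,a)(v_k(s,a) - \hat w_k(s,a))$ with $v_k(s,a)=\sum_h h\cdot u'_k(s,a,h) = \inner{q_{P_k^{(s,a)},\tilpi_k}}{\h{\Ind_{s,a}}}$ and $\hat w_k(s,a)=\inner{\hatq_k}{\h{\Ind_{s,a}}}$, where $\Ind_{s,a}(s',a')=\Ind\{s'=s,a'=a\}$. Inserting $w_k(s,a)=\inner{q_k}{\h{\Ind_{s,a}}}$ and using the triangle inequality,  each per-pair term becomes a transition bias between $q_k$ and an alternative occupancy in $\calP_k$ against the (unbounded but $\leq H$) cost $\h{c_k^{(s,a)}}$. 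Applying the first, variance-based inequality of \pref{lem:transition-bias} (which does not need bounded costs) together with $\V_k[\inner{\tilN_k}{\h{c_k^{(s,a)}}}] \leq H^2\cdot 2\inner{q_k}{\h{c_k^{(s,a)}}}$ (obtained by extracting $H$ from each $h$ and invoking \pref{lem:deviation_loop_free}), summing over $(s,a)$ with Cauchy--Schwarz, and using $\sumk\inner{q_k}{\h{c_k}}\leq H^2K$ gives $\sumk\inner{u'_k-\hatq_k}{\h{c_k}} = \tilO{H^2 S^2 A^{3/2}\sqrt{K}} + \tilO{\Qumq}$. Multiplying by $2\gamma = \tilO{\sqrt{SA/(DTK)}}$ and using $DT\geq D^2\gtrsim 1$ collapses this into the claimed $\tilO{H^3S^3A^2}$ lower-order term.

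\textbf{Main obstacle.} The subtle step is the second one: \pref{lem:transition-bias} is tuned for costs in $[0,1]$ and its tighter form relies on this; for $\err_2$ we must use its looser variance-based form and absorb the extra factor of $H$ introduced by $\h{c_k}$ into the final crude bound. The reason this is acceptable is precisely that $\err_2$ carries the small factor $\gamma$, so even after losing several factors of $H$, $S$, $A$ in the Cauchy--Schwarz step, the final multiplication by $\gamma$ renders the term lower order. Keeping careful track of which square roots can be combined via Cauchy--Schwarz (to avoid an unwanted extra $SA$ outside the sum) is the only place where care is really needed; everything else is a direct combination of the pre-existing lemmas.
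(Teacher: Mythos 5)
Your overall decomposition of $\err_1$ into a martingale part and an expectation part, and your treatment of the expectation part via $\inner{u_k-\hatq_k}{\E_k[\hatc_k]}\le \inner{u_k-q_k}{c_k}+\abr{\inner{q_k-\hatq_k}{c_k}}$, \pref{lem:optimistic-transition-bound}, \pref{lem:transition-bias}, and AM--GM, is exactly the paper's argument. The genuine gap is in the martingale part: you invoke Azuma's inequality and declare the resulting $\tilO{H\sqrt{K}}$ ``a lower-order term.'' It is not. The lemma's bound contains no standalone $\sqrt{K}$ term; the only $\sqrt{K}$ appears multiplied by the free parameter $\lambda'$, which in the proof of \pref{thm:bandit-unknown} is set to $\sqrt{S^3A^2/(DTK)}$, so $H\sqrt{K}$ cannot be absorbed into $\lambda'\,\tilO{SAH^3\sqrt{K}}$. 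Nor is it dominated by the final regret $\tilO{\sqrt{S^3A^2DTK}+H^3S^3A^2}$: since $H\gtrsim \Tmax\ln K$ and $\Tmax$ can vastly exceed $\sqrt{S^3A^2 DT}$ (e.g.\ when $\cmin$ is tiny), $H\sqrt{K}$ can exceed both terms for large $K$. The paper avoids this by applying Freedman's inequality (\pref{lem:freedman}) with the second-moment bound $\E_k\sbr{\inner{u_k-\hatq_k}{\hatc_k}^2}\le \E_k\sbr{\inner{\tilN_k}{c_k}^2}\le 2\inner{q_k}{\h{c_k}}$ (via $0\le (u_k-\hatq_k)/u_k\le 1$ and \pref{lem:deviation_loop_free}), yielding $\lambda'\sumk\inner{q_k}{\h{c_k}}+2\ln(1/\delta)/\lambda'$; this is precisely why the statement carries the parameter $\lambda'\in(0,2/H]$, and the variance term it produces is later canceled by the negative bias of the skewed occupancy measure, whereas a worst-case Azuma deviation can never be canceled.

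For $\err_2$ you take a genuinely different and heavier route: inserting $w_k(s,a)=\sum_h h\,q_k(s,a,h)$ per pair and applying the variance-based form of \pref{lem:transition-bias} to the costs $\h{c_k^{(s,a)}}$, which forces you to rework that lemma for costs of magnitude $H$ (its statement and proof assume $c_k\in[0,1]$) and to manage per-pair union bounds and a Cauchy--Schwarz over $(s,a)$. The paper instead bounds $\inner{u'_k-\hatq_k}{\h{c_k}}\le H\inner{u_k-\hatq_k}{c_k}$ and simply reuses the bound on $\sumk\inner{u_k-\hatq_k}{c_k}$ already established for $\err_1$, finishing with $\gamma H\le 1$ and $\gamma=\tilO{\sqrt{SA/(DTK)}}$. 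Your arithmetic does land in the same $\tilO{H^3S^3A^2}$ ballpark, so this part is salvageable with extra bookkeeping, but the Azuma step above is the one that actually breaks the stated bound and must be replaced by Freedman.
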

\begin{proof}
	We write $\err_1$ as
	\begin{align*}
		\err_1 &= \sumk\inner{u_k-\hatq_k}{\hatc_k - \E_k[\hatc_k]} + \sumk\inner{u_k-\hatq_k}{\E_k[\hatc_k]}.
	\end{align*}
	For the first term, note that $\inner{u_k-\hatq_k}{\hatc_k}\leq\sumsa\tilN_k(s, a)\leq H$, and
	\begin{align*}
		\E_k\sbr{\rbr{\sumsa (u_k(s, a)-\hatq_k(s, a))\frac{\tilN_k(s, a)}{u_k(s, a)}c_k(s, a) }^2} \leq \E_k\sbr{\rbr{\sumsa \tilN_k(s, a)c_k(s, a)}^2 } \leq 2\inner{q_k}{\h{c_k}}.
	\end{align*}
	Hence, by Freedman's inequality (\pref{lem:freedman}), for any $0<\lambda'\leq\frac{2}{H}$, with probability at least $1-\delta$ we have:
	\begin{align*}
		\sumk\inner{u_k-\hatq_k}{\hatc_k - \E_k[\hatc_k]} = \lambda'\sumk\inner{q_k}{\h{c_k}} + \frac{2\ln(1/\delta)}{\lambda'}.
	\end{align*}
	For the second term, with probability at least $1-8\delta$:
	\begin{align*}
		\sumk\inner{u_k-\hatq_k}{\E_k[\hatc_k]} &\leq \sumk\inner{u_k-\hatq_k}{c_k} = \sumk\inner{u_k-q_k}{c_k} + \inner{q_k-\hatq_k}{c_k}\\
		&\leq 64\sqrt{S^3A^2\ln^2\rbr{\frac{HKS^2A^2}{\delta}}\rbr{\sumk\inner{q_k}{\h{c_k}} + \tilO{SAH^3\sqrt{K}}}} + \tilO{\Qumq}. \tag{\pref{lem:optimistic-transition-bound} and \pref{lem:transition-bias}}\\
		&\leq 32\lambda'\rbr{\sumk\inner{q_k}{\h{c_k}} + \tilO{SAH^3\sqrt{K}} } + \frac{32S^3A^2\ln^2\rbr{\frac{HKS^2A^2}{\delta}}}{\lambda'} + \tilO{\Qumq}. \tag{AM-GM inequality}
	\end{align*}
	Putting everything together, we have:
	\begin{align*}
		\err_1 \leq 33\lambda'\rbr{\sumk\inner{q_k}{\h{c_k}} + \tilO{SAH^3\sqrt{K}} } + \frac{34S^3A^2\ln^2\rbr{\frac{HKS^2A^2}{\delta}}}{\lambda'} + \tilO{\Qumq}.
	\end{align*}
	For $\err_2$, use the bound for $\sumk\inner{u_k-\hatq_k}{c_k}$ from above, we have with probability at least $1-8\delta$:
	\begin{align*}
		\err_2 &= 2\gamma \sumk \inner{u'_k-\hatq_k}{\h{c_k}} \leq 2\gamma H\sumk\inner{u_k-\hatq_k}{c_k}\\ 
		&= \tilO{\gamma H\sqrt{S^3A^2\rbr{\sumk\inner{q_k}{\h{c_k}} + \tilO{SAH^3\sqrt{K}} }} + \gamma H^4S^3A^2 } \tag{\pref{lem:optimistic-transition-bound} and \pref{lem:transition-bias}}\\
		&= \tilO{\gamma H\sqrt{S^3A^2(H^2K + SAH^3\sqrt{K})}} = \tilO{ H^3S^3A^2 }. \tag{$\gamma=\tilO{\sqrt{\frac{SA}{DTK}}}$ and $\gamma H\leq 1$}
	\end{align*}
\end{proof}

\begin{lemma}
	\label{lem:simpler_form_N_times_c}
	For any episode $k$ and $(s, a)\in\tilSA$, we have $\E_k \sbr{\tilN_k(s, a)^2c_k(s, a)^2} \leq 2u_k(s, a)b_k(s, a)$.
\end{lemma}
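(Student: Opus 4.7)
The plan is to exploit the loop-free structure of $\tilM$, in which each state-action pair at a given layer is visited at most once per episode, so $\tilN_k(s,a,h)\in\{0,1\}$. This lets me relate the second moment of $\tilN_k(s,a)=\sum_h\tilN_k(s,a,h)$ to a layer-weighted first moment, which is exactly the form appearing in $b_k(s,a)$.

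The first step is to expand
\[
\tilN_k(s,a)^2 = \sum_h \tilN_k(s,a,h) + 2\sum_{h<h'}\tilN_k(s,a,h)\tilN_k(s,a,h'),
\]
using $\tilN_k(s,a,h)^2=\tilN_k(s,a,h)$. Taking $\E_k$ and bounding each cross term by $\E_k[\tilN_k(s,a,h)\tilN_k(s,a,h')]\le \E_k[\tilN_k(s,a,h')]=q_k(s,a,h')$, and noting that for each $h'$ there are at most $h'-1$ earlier layers, I obtain
\[
\E_k[\tilN_k(s,a)^2] \le \sum_h q_k(s,a,h) + 2\sum_{h'}(h'-1)\,q_k(s,a,h') \le 2\sum_h h\,q_k(s,a,h).
\]

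Next I would invoke \pref{lem:transition-interval}, under which $P\in\calP_k$. Since $P_k^{(s,a)}$ is by definition the maximizer of $\sum_h h\cdot q_{\hatP,\tilpi_k}(s,a,h)$ over $\hatP\in\calP_k$, this gives $\sum_h h\,q_k(s,a,h)\le \sum_h h\,u'_k(s,a,h)$. Pulling out the $\calF_k$-measurable factor $c_k(s,a)^2$ and using $c_k(s,a)^2\le c_k(s,a)$ (since costs lie in $[0,1]$), I conclude
\[
\E_k[\tilN_k(s,a)^2 c_k(s,a)^2] = c_k(s,a)^2\,\E_k[\tilN_k(s,a)^2] \le 2 c_k(s,a)\sum_h h\,u'_k(s,a,h) = 2u_k(s,a)b_k(s,a),
\]
by the definition of $b_k(s,a)$.

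The main subtlety to watch is the bookkeeping in the expansion step: the bound $\Pr((s,a,h)\text{ and }(s,a,h'))\le q_k(s,a,h')$ looks very lossy, but summing over the $h'-1$ earlier layers produces exactly the $h$-weighting on the right hand side that matches $b_k$, so no tighter control over the joint probabilities is needed. The other place requiring care is that the chain relies on $P\in\calP_k$; as elsewhere in the bandit analysis, this is covered by \pref{lem:transition-interval} and is already reflected in the high-probability statement of \pref{thm:bandit-unknown}.
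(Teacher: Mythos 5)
Your proof is correct and follows essentially the same route as the paper: both bound $\E_k[\tilN_k(s,a)^2]$ by $2\sum_h h\cdot q_k(s,a,h)$ using $\tilN_k(s,a,h)\le 1$ (your direct expansion of the square is equivalent to the paper's use of $(\sum_i a_i)^2\le 2\sum_i a_i\sum_{i'\ge i}a_{i'}$), then pass from $q_k$ to $u'_k$ via the maximizing transition and read off $u_k(s,a)b_k(s,a)$ from the definition of $b_k$. Your explicit remark that the step $\sum_h h\,q_k(s,a,h)\le\sum_h h\,u'_k(s,a,h)$ requires $P\in\calP_k$ (\pref{lem:transition-interval}) is a point the paper leaves implicit, and is handled exactly as you say.
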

\begin{proof}
We use the inequality $(\sum_{i=1}^I a_i)^2\leq 2\sum_{i}a_i(\sum_{i'= i}^I a_{i'})$:
	\begin{align*}
		\E_k\sbr{ \tilN_k(s, a)^2c_k(s, a)^2} &\leq \E_k\sbr{\rbr{ \sum_{h=1}^H \tilN_k(s, a, h)}^2c_k(s, a) }\\
		&\leq 2\E_k\sbr{\rbr{\sum_{h=1}^H \tilN_k(s, a, h)}\rbr{\sum_{h'\geq h}^H \tilN_k(s, a, h')c_k(s, a)}}\\
		&\leq 2\E_k\sbr{\sum_{h=1}^H\sum_{h'\geq h}^H \tilN_k(s, a, h')c_k(s, a)} \tag{$\tilN_k(s, a, h)\leq 1$}\\
		&= 2 \sum_{h=1}^H\sum_{h'\geq h}^H q_k(s, a, h')c_k(s, a) = 2\sum_{h=1}^Hh \cdot q_k(s, a, h)c_k(s, a)\\
		&\leq 2\sum_{h=1}^H h\cdot u'_k(s, a, h)c_k(s, a) = 2u_k(s, a)b_k(s, a),
	\end{align*}
	where the last step is by the definition of $b_k$.
\end{proof}

\section{Omitted details for \pref{sec:iid}}
\label{app:iid}

In this section, we provide all omitted proofs for \pref{sec:iid}.
We first introduce modifications to the loop-free SSP which allows the learner to switch to Bernstein-SSP at any state and any time.
We add a new state $s'_f$ in $\tilM$ (that is, add $s'_f$ into $\lfs$) and add $a_f$ to $\calA_{(s, h)}$ for all $(s, h)\in (\calS\cup\{s'_f\})\times[H_1]$, such that $\tilP((s'_f, h+1)|(s, h), a_f)=1,\forall s\in\calS\cup\{s'_f\}, h\in[H_1]$ and $\tilc_k(s, a_f, h)=0, \forall s\in\calS\cup\{s'_f\}, h\in[H]$.
Also add $(s, a_f)$ to $\tilSA$ for all $s \in \calS$.
Executing $a_f$ in state $(s, h)$ leads the agent into state $(s'_f, h+1)$ and stay at $s'_f$ until it reaches $(s_f, H_1+1)$.
This is equivalent to directly transit to $(s_f, H_1+1)$ when taking action $a'_f$ at $(s, h)$, but we pad states $(s'_f, h+1),\ldots,(s'_f,H_1)$ to preserve the layer structure (which simplifies notation).
Therefore, executing $a_f$ at $(s, h)$ with $h\leq H_1$ corresponds to directly switching to Bernstein-SSP at current state in $M$ (see \pref{alg:unknown-run} and \pref{fn:break}).
Eventually, this modification only amounts to the following slightly different definition of $\tilDelta(T, \calP)$ for the algorithm (so that taking action $a_f$ does not count towards actual time steps in $M$):
\begin{align*}
	\tilDelta(T, \calP) &= \Bigg\{ q \in [0,1]^{\tilSA\times\lfs\times [H]}: \sumh\sum_{(s, a)\in\tilSA\setminus((\calS\cup\{s'_f\})\times\{a_f\})} q(s, a, h)\leq T, \notag\\
&\sum_{a\in\tilA_{(s,h)}} q(s, a, h) - \sumtilsaf[s', a'] q(s', a', s, h-1) = \Ind\{(s,h)=\tils_0\}, \forall h > 1;\\
&\sum_{a\in \tilA_{(s,1)}} q(s, a, 1) = \Ind\{s=s_0\}, \;\forall s\in\calX;  \; P_q \in \calP \Bigg\}.
\end{align*}
With this new definition, it is not hard to obtain something similar to \pref{lem:loop-free} for the pseudo-regret.
\begin{lemma}\label{lem:loop-free-iid}
Suppose $H_1 \geq 8\Tmax\ln K, H_2 = \ceil{2D}$ and $K\geq D$, and $\tilpi_1,\ldots,\tilpi_k$ are policies for $\tilM$.
Then with probability at least $1-\delta$, the regret of executing $\sigma(\tilpi_1), \ldots, \sigma(\tilpi_K)$ in $M$ satisfies,
	\begin{align*}
	\tilR_K \leq \sumk\inner{\tilN_{k}}{c_k} - \inner{\tiloptq}{c} + \tilO{D^{3/2}S^2A(\ln\tfrac{1}{\delta})^2}.
	\end{align*}
	where $\optpi\in\argmin_{\pi\in\propers}J^{\pi, c}(s_0)$ and $\tiloptpi(s, h)=\optpi(s)$.
\end{lemma}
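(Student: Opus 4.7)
The plan is to mirror the proof of \pref{lem:loop-free}, handling the two features new to the i.i.d.\ setting: the shortcut action $a_f$ now available at every layer $h \le H_1$ in $\tilM$, and the comparison against the mean-cost benchmark $\inner{\optq}{c}$. Let $N'_k$ denote the visits to $(s,a)\in\SA$ during the pre-switch phase of episode $k$ (before reaching $g$, exhausting $H_1$ steps, or drawing $a_f$); let $N''_k$ denote visits after Bernstein-SSP is invoked; and let $N_f$ be the number of episodes in which Bernstein-SSP is invoked, which in the modified instance also counts early aborts via $a_f$. Since the shortcut and padding entries $(s, a_f, h)$ and $(s'_f, a_f, h')$ all carry cost $0$ in $\tilM$ while $(s_f, a_f)$ carries unit cost and $\tilN_k(s_f, a_f) = H_2$ exactly when Bernstein-SSP is invoked, the identity $\sumk \inner{\tilN_k}{c_k} = \sumk \inner{N'_k}{c_k} + H_2 N_f$ carries over verbatim from the proof of \pref{lem:loop-free}.

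Given this identity, I would apply the high-probability regret bound of \citet{cohen2020near} for Bernstein-SSP under uniform unit cost --- which is valid for any sequence of initial states, per \citep[Appendix C]{tarbouriech2020provably} --- together with $H_2 \geq 2D$ and the standard trick of subtracting $DN_f$ twice and maximizing the resulting concave function of $\sqrt{N_f}$ (exactly as in the proof of \pref{lem:loop-free}). This gives, with probability at least $1-\delta$, $\sumk \inner{N''_k}{c_k} - H_2 N_f \le \tilO{D^{3/2} S^2 A \ln^2(1/\delta)}$, and therefore $\sumk \inner{N_k}{c_k} \le \sumk \inner{\tilN_k}{c_k} + \tilO{D^{3/2} S^2 A \ln^2(1/\delta)}$ on the same event. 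Note that because the bound on the Bernstein-SSP regret is stated w.r.t.\ the fast policy (which is optimal under unit cost regardless of whether costs are i.i.d.\ or adversarial), this step is identical to the full-information/adversarial case and does not interact with the new i.i.d.\ assumption.

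To compare the two benchmarks, note that $\tiloptpi(s,h) = \optpi(s)$ on $\calS \times [H_1]$ and the proper policy $\optpi$ never selects $a_f$, so \pref{lem:hitting} with $H_1 \ge 8\Tmax \ln K$ guarantees that $\optpi$ reaches $g$ within $H_1$ steps except on an event of probability at most $2/K^2$; on this bad event $\tiloptpi$ pays at most an additional $H_2$ via the $s_f$ chain while losing the corresponding tail cost of $\optpi$. This yields $\inner{\tiloptq}{c} \le \inner{\optq}{c} + 2H_2/K^2$, a lower-order term that is absorbed into the stated error using $K \ge D$. Chaining the two bounds gives the claim. The only delicate point --- and the one I would be most careful to verify --- is the bookkeeping around $a_f$ in the modified $\tilM$: the padded visits through $(s'_f, h')$ must contribute $0$ to $\inner{\tilN_k}{c_k}$, and the convention $\tilN_k(s_f, a_f) = H_2$ must apply uniformly whether Bernstein-SSP was triggered by drawing $a_f$ or by running out of the $H_1$-step budget. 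Both are immediate from the construction and from the definition of $\TransEst$, so the algebra of \pref{lem:loop-free} goes through unchanged.
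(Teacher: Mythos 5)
Your proposal is correct and follows essentially the same route as the paper, which simply invokes the argument of \pref{lem:loop-free} (the $N'_k/N''_k/N_f$ decomposition with the Bernstein-SSP regret bound and $H_2\ge 2D$) and then bounds the benchmark shift via \pref{lem:hitting}, giving $\inner{\tiloptq}{c}\le \inner{\optq}{c}+2H_2/K^2$; your extra bookkeeping about the zero-cost shortcut/padding entries and the uniform convention $\tilN_k(s_f,a_f)=H_2$ is exactly the (implicit) reason the paper says the argument carries over unchanged.
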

\begin{proof}
By similar arguments in the proof of \pref{lem:loop-free}, we have with probability at least $1-\delta$:
\begin{align*}
	\sumk\inner{N_k}{c_k} \leq \sumk\inner{\tilN_k}{c_k} + \tilO{D^{3/2}S^2A(\ln\tfrac{1}{\delta})^2}.
\end{align*}
and (the same cost for all $K$ episodes)
\begin{align*}
J^{\tiloptpi, c}(\tils_0)  \leq J^{\optpi, c}(s_0) + \frac{2H_2}{K^2} =  
J^{\optpi, c}(s_0) + \tilO{\frac{1}{K}}.
\end{align*}
	Putting everything together, we get:
	\begin{align*}
		\tilR_K = \sum_{k=1}^K\inner{N_k}{c_k} - J^{\optpi, c}(s_0) \leq \sumk\inner{\tilN_k}{c_k} - \inner{\tiloptq}{c} + \tilO{D^{3/2}S^2A(\ln\tfrac{1}{\delta})^2}.
	\end{align*}
\end{proof}

With these modifications, the state value w.r.t optimistic transition/cost is of $\bigO{D}$. 

\begin{lemma}
	\label{lem:fast-action}
	Assume $T\geq\T$.
	For $\hatq_k$ obtained from \pref{eq:evi}, we have $\inner{\hatq_k}{\hatc_k}\leq\inner{\tiloptq}{\hatc_k}$ and $J^{P_{\hatq_k}, \tilpi_k, \hatc_k}(s, h)\leq H_2$ for all $(s,h)$ with $\hatq_k(s, h)>0$.
\end{lemma}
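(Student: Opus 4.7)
The plan is to prove the two claims separately, both by leveraging the optimality of $\hatq_k$ in \pref{eq:evi}.

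For the first claim, I would verify that the benchmark $\tiloptq$ lies in the feasible set $\tilDelta(T, \calP_k)$ and then invoke the minimality of $\hatq_k$. Under the high-probability event $P\in\calP_k$ from \pref{lem:transition-interval}, the transition constraint is automatically satisfied since $P_{\tiloptq}=P\in\calP_k$. The flow equalities hold by definition of an occupancy measure. For the hitting-time budget $\sumh\sum_{(s,a)\in\tilSA\setminus((\calS\cup\{s'_f\})\times\{a_f\})}\tiloptq(s,a,h)\leq T$, I would split into contributions from the first $H_1$ layers (bounded by $\T$ since $\tiloptpi$ never plays $a_f$) and from the visits to $(s_f,a_f)$ in layers $h>H_1$ (which happen only if $\tiloptpi$ fails to reach $g$ within $H_1$ steps, an event of probability $\tilO{1/K^2}$ by \pref{lem:hitting}, giving a vanishing contribution). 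Combined with $T\geq\T$, this yields $\tiloptq\in\tilDelta(T,\calP_k)$, so $\inner{\hatq_k}{\hatc_k}\leq \inner{\tiloptq}{\hatc_k}$.

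For the second claim, I would argue by contradiction. Suppose $V(s^*,h^*):=J^{P_{\hatq_k},\tilpi_k,\hatc_k}(s^*,h^*)>H_2$ for some $(s^*,h^*)$ with $\hatq_k(s^*,h^*)>0$. For $h^*>H_1$ the only available action is $a_f$ with at most $H_2$ future steps of unit cost in $s_f$, so automatically $V(s^*,h^*)\leq H_2$; hence $h^*\leq H_1$. Define $\tilpi'$ to agree with $\tilpi_k$ everywhere except at $(s^*,h^*)$, where it deterministically plays $a_f$, and let $\hatq'$ be the occupancy of $\tilpi'$ under the transition $P_{\hatq_k}$. Because $a_f$ at $(s^*,h^*)$ deterministically traverses $s'_f$ states at zero cost into $(s_f,H_1+1)$ and then accumulates $H_2$ units of cost in $s_f$, we have $J^{P_{\hatq_k},\tilpi',\hatc_k}(s^*,h^*)=H_2$.

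Next I would check that $\hatq'\in\tilDelta(T,\calP_k)$: flow equalities are automatic, $P_{\hatq'}$ can be extended to equal $P_{\hatq_k}\in\calP_k$ outside its support, and the non-$a_f$ budget strictly decreases since switching to $a_f$ at $(s^*,h^*)$ removes the entire downstream subtree from the counted set $\tilSA\setminus((\calS\cup\{s'_f\})\times\{a_f\})$. Because $\tilM$ is loop-free and $\tilpi'$ coincides with $\tilpi_k$ before reaching $(s^*,h^*)$, the visit probability to $(s^*,h^*)$ under $\tilpi'$ is exactly $\hatq_k(s^*,h^*)>0$. Applying the standard single-state performance difference identity gives
\[
\inner{\hatq'}{\hatc_k}-\inner{\hatq_k}{\hatc_k}=\hatq_k(s^*,h^*)\cdot\bigl[H_2-V(s^*,h^*)\bigr]<0,
\]
contradicting the optimality of $\hatq_k$.

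The main obstacle will be the verification that $\hatq'$ lies in $\tilDelta(T,\calP_k)$, in particular confirming that the modified layer structure (with the shortcut $a_f$ and the auxiliary state $s'_f$) cleanly excludes the $a_f$-chain from the budget count and that extending $P_{\hatq'}$ to a transition inside $\calP_k$ is legitimate. A minor subtlety is handling the case $h^*>H_1$ separately, which is why we reduced to $h^*\leq H_1$ before invoking the switching argument.
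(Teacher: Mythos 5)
Your overall route is the same as the paper's: for the first claim, feasibility of $\tiloptq$ in $\tilDelta(T,\calP_k)$ plus optimality of $\hatq_k$; for the second, a contradiction obtained by switching the policy to $a_f$ at the single offending pair $(s^*,h^*)$ and comparing values (your explicit performance-difference identity is just a cleaner way of writing the paper's "differs in only one entry" step, and your reduction to $h^*\leq H_1$ and the feasibility check of $\tiloptq$ are fine).

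The genuine problem is your justification that $\hatq'\in\tilDelta(T,\calP_k)$. You claim the counted budget "strictly decreases since switching to $a_f$ at $(s^*,h^*)$ removes the entire downstream subtree from the counted set." But the hitting-time constraint only excludes the shortcut pairs $(\calS\cup\{s'_f\})\times\{a_f\}$; the pair $(s_f,a_f)$ is still counted (you yourself count it when checking feasibility of $\tiloptq$). After the switch, the trajectory from $(s^*,h^*)$ deterministically incurs $H_2=\ceil{2D}$ counted visits to $(s_f,a_f)$, whereas under $\tilpi_k$ the expected counted downstream visits could in principle be much smaller (e.g., if $\tilpi_k$ reached $g$ in one step from $(s^*,h^*)$), so the switch does not decrease the budget for free; and since $\hatq_k$ may sit exactly on the constraint $\sum q(s,a,h)=T$, feasibility of $\hatq'$ would then fail and the contradiction argument would collapse. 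The step is repairable, but only by invoking the contradiction hypothesis itself: since every counted pair has cost $\hatc_k\leq 1$ (with cost exactly $1$ at $(s_f,a_f)$) and shortcut pairs cost $0$, the expected number of counted downstream steps of $\tilpi_k$ from $(s^*,h^*)$ is at least $J^{P_{\hatq_k},\tilpi_k,\hatc_k}(s^*,h^*)>H_2$, which is exactly the counted downstream contribution of $\tilpi'$; hence the switch weakly decreases the budget and $\hatq'\in\tilDelta(T,\calP_k)$. This is the content behind the paper's terse remark that the expected hitting time of the switched policy is no larger than that of $\tilpi_k$, and without it your feasibility claim, as stated, is false.
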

\begin{proof}
	The first inequality is by the definition of $\hatq_k=\argmin_{q\in\tilDelta(T, \calP_k)}\inner{q}{\hatc_k}$ and $\tiloptq\in\tilDelta(T, \calP_k)$.
	The second inequality can be proven by contradiction: suppose $J^{P_{\hatq_k}, \tilpi_k, \hatc_k}(s, h)>H_2$.
	Define $\pi'(a'_f|s, h)=1$ and $\pi'=\tilpi_k$ for all other entries.
	Note that $q_{P_{\hatq_k}, \pi'}\in\tilDelta(T, \calP_{i_k})$, since the expected hitting time of $\pi'$ should be no more than that of $\tilpi_k$.
	Moreover, by the structure of $\tilM$, we have $J^{P_{\hatq_k}, \pi', \hatc_k}(s, h)=H_2 < J^{P_{\hatq_k}, \tilpi_k, \hatc_k}(s, h)$.
	Since, $\pi'$ differs from $\tilpi_k$ in only one entry, we have $J^{P_{\hatq_k}, \pi', \hatc_k}(s_0, 1) < J^{P_{\hatq_k}, \tilpi_k, \hatc_k}(s_0, 1)$, a contradiction to the definition of $\hatq_k$.
\end{proof}

We next present a lemma similar to \pref{lem:transition-bound} but for cost estimation.
\begin{lemma}
	\label{lem:cost-bound}
	With probability at least $1-\delta$, for any $(s, a)\in\SA, k\in[K]$, $0 \leq c(s, a)-\hatc_k(s, a)\leq 8\sqrt{\A^c_k(s, a)c(s, a)} + 34\A^c_k(s, a)$.
\end{lemma}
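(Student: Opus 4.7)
The plan is to reduce the claim to a Bernstein-type concentration on the empirical cost mean $\bar{c}_k(s,a)$, and then handle the two branches of the $\max$ in the definition of $\hatc_k$. For the concentration step, since the realized costs $c_j(s,a)\in[0,1]$ are i.i.d.\ across episodes with mean $c(s,a)$, and the reveal indicator $\Ind_j(s,a)$ is measurable with respect to the history available at the start of episode $j$ (in both settings, whether $c_j(s,a)$ is revealed does not depend on the realization of $c_j(s,a)$ itself), the sequence $\{\Ind_j(s,a)(c_j(s,a)-c(s,a))\}_j$ is a bounded martingale-difference sequence whose conditional second moment is at most $\Ind_j(s,a)\,c(s,a)$. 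Bernstein's inequality for martingales together with a union bound over $(s,a,k)$ (with a standard peeling argument to absorb the random sample size $\Nc(s,a)$) yields, with probability at least $1-\delta$,
\[
\left|\bar{c}_k(s,a)-c(s,a)\right|\;\le\; 2\sqrt{\A^c_k(s,a)\,c(s,a)}+3\A^c_k(s,a)
\]
for every $(s,a)\in\SA$ and $k\in[K]$. I condition on this event throughout.

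For optimism $\hatc_k\le c$, if $\hatc_k=0$ the inequality is immediate, so consider the other branch $\hatc_k=\bar{c}_k-2\sqrt{\A^c_k\bar{c}_k}-7\A^c_k$. When $\bar{c}_k\le c$ already $\hatc_k\le\bar{c}_k\le c$; otherwise $\bar{c}_k>c$, so $\sqrt{c}\le\sqrt{\bar{c}_k}$ and hence $2\sqrt{\A^c_k c}\le 2\sqrt{\A^c_k\bar{c}_k}$, which combined with concentration gives $\bar{c}_k-c\le 2\sqrt{\A^c_k\bar{c}_k}+3\A^c_k\le 2\sqrt{\A^c_k\bar{c}_k}+7\A^c_k$; rearranging yields $\hatc_k\le c$.

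For the sharpness bound $c-\hatc_k\le 8\sqrt{\A^c_k c}+34\A^c_k$ I again split on the $\max$. When $\hatc_k>0$ I write $c-\hatc_k=(c-\bar{c}_k)+2\sqrt{\A^c_k\bar{c}_k}+7\A^c_k$, bound the first summand by the concentration step, and use $\sqrt{\bar{c}_k}\le\sqrt{c}+\sqrt{|\bar{c}_k-c|}$ together with $\sqrt{a+b}\le\sqrt{a}+\sqrt{b}$ and the AM--GM inequality $(\A^c_k)^{3/4}c^{1/4}\le\tfrac{1}{2}\sqrt{\A^c_k c}+\tfrac{1}{2}\A^c_k$ to collapse $2\sqrt{\A^c_k\bar{c}_k}$ into $O(\sqrt{\A^c_k c})+O(\A^c_k)$ with small constants; tracking them gives the claim with room to spare. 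When $\hatc_k=0$, the defining inequality $\bar{c}_k\le 2\sqrt{\A^c_k\bar{c}_k}+7\A^c_k$ is quadratic in $\sqrt{\bar{c}_k}$ and forces $\bar{c}_k=O(\A^c_k)$; feeding this back into the concentration inequality yields an analogous quadratic in $\sqrt{c}$ which forces $c=O(\A^c_k)$, so $c-\hatc_k=c$ is already dominated by $34\A^c_k$.

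The main obstacle is the bookkeeping in the sharpness step: to extract the clean constants $8$ and $34$ one must swap $\sqrt{\bar{c}_k}$ and $\sqrt{c}$ inside a square root without losing too much. This is precisely the scalar analogue of the manipulation in \citep[Lemma~B.13]{cohen2020near} that produces the transition bound $\epsilon^\star_k$ used in \pref{lem:transition-bound}, and the argument here transfers essentially verbatim with $P$ replaced by $c$ and the appropriate Bernstein constants.
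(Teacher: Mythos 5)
Your proposal is correct, but it routes the concentration step differently from the paper. You invoke a true-mean Freedman/Bernstein bound, $|\bar{c}_k(s,a)-c(s,a)|\le 2\sqrt{\A^c_k(s,a)c(s,a)}+3\A^c_k(s,a)$ (with a peeling/union argument over the random count $\Nc(s,a)$), and then must argue optimism by splitting on whether $\bar{c}_k\le c$, and handle the $\hatc_k=0$ branch of the sharpness bound separately via two quadratic inversions. The paper instead applies the anytime \emph{empirical} Bernstein inequality (\pref{lem:bernstein}, second form) to the revealed costs, whose radius $2\sqrt{\A^c_k\bar{c}_k}+7\A^c_k$ coincides exactly with the quantity subtracted in the definition of $\hatc_k$ in \pref{eq:optimistic_cost}; this makes $\hatc_k\le c$ immediate with no case analysis, and since $\bar{c}_k-\hatc_k\le 2\sqrt{\A^c_k\bar{c}_k}+7\A^c_k$ holds in both branches of the max, a single inversion $x\le a\sqrt{x}+b\Rightarrow x\le(a+\sqrt{b})^2$ plus AM--GM converts the empirical radius into $4\sqrt{\A^c_k c}+17\A^c_k$ and yields the constants $8$ and $34$ uniformly. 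What your route buys is a self-contained martingale argument that does not lean on the packaged anytime lemma; what it costs is the extra bookkeeping you describe (the $\sqrt{\bar{c}_k}\le\sqrt{c}+\sqrt{|\bar{c}_k-c|}$ swap, the separate $\hatc_k=0$ case), though your constants do check out with slack (roughly $5.5\sqrt{\A^c_k c}+15\A^c_k$ in the nonzero branch and $c\lesssim 29\A^c_k$ in the zero branch). One small imprecision: in the bandit setting $\Ind_j(s,a)$ is not measurable at the start of episode $j$ (it depends on the episode's trajectory); what your martingale argument actually needs, and what holds, is that the trajectory, and hence $\Ind_j(s,a)$, is independent of the realization $c_j(s,a)$ given the history, since the policy is fixed before the episode and costs are revealed only afterwards.
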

\begin{proof}
	Applying \pref{lem:bernstein} with $X_k=c_k(s, a)$ for each $(s, a)\in\SA$ and then by a union bound over all $(s, a)\in\SA$, we have with probability at least $1-\delta$, for all $k\in[K]$:
	\begin{align*}
		|\bar{c}_k(s, a) - c(s, a)| \leq 2\sqrt{\A^c_k(s, a)\bar{c}_k(s, a)} + 7\A^c_k(s, a).
	\end{align*}
	Hence, $c(s, a) \geq \hatc_k(s, a)$ by the definition of $\hatc_k$.
	Applying $x\leq a\sqrt{x}+b \implies x\leq (a+\sqrt{b})^2$ with $x=\bar{c}_k(s, a)$ to the inequality above (ignoring the absolute value operator), we obtain
	\begin{align*}
		\bar{c}_k(s, a) \leq c(s, a) + 4\sqrt{\A^c_k(s, a)c(s, a)} + 23\A^c_k(s, a) \leq 3c(s, a) + 25\A^c_k(s, a),
	\end{align*}
	where we apply $\sqrt{ab}\leq\frac{a+b}{2},\forall a, b>0$ for the last inequality. Therefore, $2\sqrt{\A^c_k(s, a)\bar{c}_k(s, a)}+7\A^c_k(s, a)\leq 4\sqrt{\A^c_k(s, a)c(s, a)}+17\A^c_k(s, a)$, and
	\begin{align*}
		 c(s, a) - \hatc_k(s, a) &= c(s, a) - \bar{c}_k(s, a) + \bar{c}_k(s, a) - \hatc_k(s, a) \leq 8\sqrt{\A^c_k(s, a)c(s, a)} + 34\A^c_k(s, a).
	\end{align*}
\end{proof}

We are now ready to prove \pref{thm:iid}.
We first introduce some notations used in the proof below.
Define $P_k=P_{\hatq_k}$, and $x_k(s, a)=\E_k[\Ind_k(s, a)]$ as the probability that $(s, a)$ is ever visited in episode $k$.
Also denote by $q_{k, (s, a, h)}$ (or $\hatq_{k, (s, a, h)}$) the occupancy measure of $\tilpi_k$ with transition $\tilP$ (or $P_k$) and initial state-action pair $((s, h), a)$.
Without loss of generality, we assume $c_k$ is sampled right before the beginning of episode $k$ instead of before learning starts.
\begin{proof}[\pfref{thm:iid}]
	We condition on the event of \pref{lem:transition-interval} and \pref{lem:cost-bound}, which happens with probability at least $1-2\delta$.
	We first decompose the regret in $\tilM$ using the fact $\inner{\tiloptq}{c}\geq\inner{\tiloptq}{\hatc_k}\geq\inner{\hatq_k}{\hatc_k}$ derived from \pref{lem:cost-bound} and \pref{lem:fast-action}:
	\begin{align}
		\sumk\inner{\tilN_k}{c_k} - \inner{\tiloptq}{c} &\leq \sumk\inner{\tilN_k}{c_k} - \inner{\hatq_k}{\hatc_k} \notag \\
		&\leq \sumk\inner{\tilN_k}{c_k} - \inner{q_k}{c} + \sumk\inner{q_k}{c-\hatc_k} + \sumk\inner{q_k - \hatq_k}{\hatc_k}. \label{eq:reg_decomposition}
	\end{align}
	The first term in \pref{eq:reg_decomposition} is a martingale difference sequence where the randomness in episode $k$ is w.r.t the learner's trajectory in episode $k$ and sampling of $c_k$. 
	Thus, it suffices to bound its second moment according to Freedman's inequality.
	Note that
	\begin{align*}
		&\sumk\E_{c_k}\sbr{\E_k\sbr{\inner{\tilN_k}{c_k}^2}} \leq 2\sumk\E_{c_k}\sbr{\inner{q_k}{\h{c_k}}} = 2\inner{q_k}{\h{c}}\tag{\pref{lem:deviation_loop_free}}\\
		&= 2\underbrace{\sumk\inner{q_k}{\h{(c-\hatc_k)}}}_{\zeta_1} + 2\underbrace{\sumk\inner{q_k-\hatq_k}{\h{\hatc_k}}}_{\zeta_2} + 2\underbrace{\sumk\inner{\hatq_k}{\h{\hatc_k}}}_{\zeta_3}.
	\end{align*}
	We can bound $\zeta_1$ as follows with probability at least $1-\delta$:
	\begin{align*}
		\zeta_1 &\leq H\sumk\sumsa q_k(s, a)(c(s, a)-\hatc_k(s, a)) \tag{$\hatc_k(s, a)\leq c(s, a)$ by \pref{lem:cost-bound}}\\
		&= \tilO{H^2\sumk\sumsa x_k(s, a)\rbr{ \sqrt{\frac{c(s, a)}{\Nc(s, a)}} + \frac{1}{\Nc(s, a)} } } \tag{$q_k(s, a)\leq Hx_k(s, a)$ and \pref{lem:cost-bound}}\\
		&= \tilO{ H^2\sumk\sumsa\Ind_k(s, a) \rbr{ \sqrt{\frac{c(s, a)}{\Nc(s, a)}} + \frac{1}{\Nc(s, a)} } + H^2SA  } \tag{\pref{lem:e2r}}\\
		&= \tilO{ H^2SA\sqrt{K} } = \tilO{H^4S^2A^2 + K}, \tag{AM-GM inequality}
	\end{align*}
	where the last inequality is by $\sumk\sumsa\frac{\Ind_k(s, a)}{\sqrt{\Nc(s, a)}} = \tilO{\sumsa\sqrt{\N^c_K(s, a)}} = \tilO{SA\sqrt{K}}$.
	To bound $\zeta_2$, we apply \pref{lem:transition-bias} with costs $\{\frac{\h{\hatc_k}}{H}\}_k$ to obtain with probability at least $1-4\delta$:
	\begin{align*}
		\zeta_2 &= \tilO{ H\sqrt{S^2A\rbr{\sumk \inner{q_k}{\h{c_k}} +H^3\sqrt{K} } } + H^4S^2A } \tag{$\frac{1}{H}(\h{\hatc_k})(s, a) \leq c_k(s, a)$}\\
		&= \tilO{ \sqrt{S^2AH^5K} + H^4S^2A } = \tilO{K + H^5S^2A} \tag{AM-GM inequality}.
	\end{align*}
	Finally, by \pref{lem:deviation_loop_free}, \pref{lem:fast-action} and $\sumsa \hatq_k(s, a)\leq T$, we have $\zeta_3=\sumk\inner{\hatq_k}{J^{P_k, \tilpi_k, \hatc_k}} = \tilo{DTK}$.
	Putting everything together, we have: $\sumk\E_{c_k}\sbr{\E_k\sbr{\inner{\tilN_k}{c_k}^2}} = \tilO{ H^5S^2A^2 + DTK } $.
	Hence, by Freedman's inequality with $\lambda=\frac{1}{\sqrt{DTK+H^5S^2A^2}}\leq\frac{1}{H}$, we have with probability at least $1-\delta$:
	\begin{align*}
		\sumk\inner{\tilN_k}{c_k} - \inner{q_k}{c} = \tilO{\sqrt{DTK} + H^3SA }.
	\end{align*}
	For the second term in \pref{eq:reg_decomposition}, by \pref{lem:cost-bound} and the definition of $\A^c_k$:
	\begin{align*}
		\sumk\inner{q_k}{c - \hatc_k} &=\tilO{ \sumk\sumsa q_k(s, a)\sqrt{\frac{c(s, a)}{\Nc(s, a)}} + \sumk\sumsa\frac{q_k(s, a)}{\Nc(s, a)} }\\
		&=\tilO{ \sumk\sumsa q_k(s, a)\sqrt{\frac{\hatc_k(s, a)}{\Nc(s, a)}} + \sumk\sumsa q_k(s, a)\sqrt{\frac{c(s, a)-\hatc_k(s, a)}{\Nc(s, a)}} + H\sumk\sumsa\frac{x_k(s, a)}{\Nc(s, a)} }\tag{$\sqrt{x+y}\leq\sqrt{x}+\sqrt{y}$, $q_k(s, a)\leq Hx_k(s, a)$}\\
		&=\tilO{ \sumk\sumsa q_k(s, a)\sqrt{\frac{\hatc_k(s, a)}{\Nc(s, a)}} + \sqrt{K} + H^2S^2A^2 },
	\end{align*}
	where in the last line we use with probability at least $1-\delta$:
	\begin{align*}
		&\sumk\sumsa q_k(s, a)\sqrt{\frac{c(s, a)-\hatc_k(s, a)}{\Nc(s, a)}} + H\sumk\sumsa\frac{x_k(s, a)}{\Nc(s, a)}\\
		 &=\tilO{H\sumk\sumsa\frac{x_k(s, a)}{\Nc(s, a)^{3/4}}} \tag{$q_k(s, a)\leq Hx_k(s, a)$, \pref{lem:cost-bound}, and $\frac{1}{\Nc(s, a)}\leq \frac{1}{\Nc(s, a)^{3/4}}$} \\
		 &=\tilO{H\sumk\sumsa\frac{\Ind_k(s, a)}{\Nc(s, a)} + HSA } \tag{\pref{lem:e2r}}\\
		 &=\bigO{SAHK^{1/4}} = \bigO{ H^2S^2A^2 + \sqrt{K} }. \tag{AM-GM inequality}
	\end{align*}
	It is left to bound $\sumk\sumsa q_k(s, a)\sqrt{\frac{\hatc_k(s, a)}{\Nc(s, a)}}$, for which we need to discuss the type of the feedback model.
	In the full information setting, we have $\Nc(s, a)=\max\{1, k-1\}$. 
	We decompose it into two terms:
	\begin{align*}
		\sumk\sumsa q_k(s, a)\sqrt{\frac{\hatc_k(s, a)}{\Nc(s, a)}} = \sumk\sumsa \hatq_k(s, a)\sqrt{\frac{\hatc_k(s, a)}{\Nc(s, a)}} + \sumk\sumsa (q_k(s, a) - \hatq_k(s, a))\sqrt{\frac{\hatc_k(s, a)}{\Nc(s, a)}}.
	\end{align*}
	For the first term, by Cauchy-Schwarz inequality, $\sumsa\hatq_k(s, a)\leq T$, and $\inner{\hatq_k}{\hatc_k}\leq H_2=\bigO{D}$ (\pref{lem:fast-action}):
	\begin{align*}
		\sumk\sumsa \hatq_k(s, a)\sqrt{\frac{\hatc_k(s, a)}{\Nc(s, a)}} &= \sqrt{\sumk\sumsa\hatq_k(s, a)}\sqrt{\sumk \frac{\inner{\hatq_k}{\hatc_k}}{\max\{1, k-1\}} } = \tilO{\sqrt{DTK}}.
	\end{align*}
	For the second term, we apply \pref{lem:transition-bias} with costs $\{\sqrt{\hatc_k/\N^c_k}\}_k$: with probability at least $1-4\delta$,
	\begin{align*}
		&\sumk\sumsa (q_k(s, a) - \hatq_k(s, a))\sqrt{\frac{\hatc_k(s, a)}{\Nc(s, a)}} \\
		&= \tilO{ \sqrt{S^2A\rbr{ \sumk\sumsa h\cdot q_k(s, a)\sqrt{\frac{\hatc_k(s, a)}{\Nc(s, a)}} + H^3\sqrt{K} }} + H^3S^2A }\\
		&= \tilO{ \sqrt{ S^2A\rbr{ \sumk \frac{H^2}{\sqrt{k}} + H^3\sqrt{K}} } + H^3S^2A } \tag{$h\leq H$ and $\sumsa q_k(s, a)\leq H$}\\
		&= \tilO{ \sqrt{H^3S^2A\sqrt{K}} + H^3S^2A } = \tilO{\sqrt{K} + H^3S^2A}. \tag{$\sumk\frac{1}{\sqrt{k}}=\bigO{\sqrt{K}}$ and AM-GM inequality}
	\end{align*}
	Therefore, $\sumk\sumsa q_k(s, a)\sqrt{\frac{\hatc_k(s, a)}{\Nc(s, a)}} = \tilO{\sqrt{DTK} + H^3S^2A}$ in the full information setting.
	
	In the bandit feedback setting, denote by $x_k(s, a, h)$ the probability that the first visit to $(s, a)$ is at time step $h$ under transition $\tilP$ and policy $\tilpi_k$.
	Then, we have $q_k(s, a)=\sum_{h=1}^Hx_k(s, a, h)q_{k, (s, a, h)}(s, a)$, and
	\begin{align*}
		&\sumk\sumsa q_k(s, a)\sqrt{\frac{\hatc_k(s, a)}{\Nc(s, a)}}=\sumk\sumsa\sqrt{\frac{q_k(s, a)}{\Nc(s, a)}}\sqrt{\sum_{h=1}^Hx_k(s, a, h)q_{k, (s, a, h)}(s, a)\hatc_k(s, a)}\\
		&= \sumk\sumsa\sqrt{\frac{q_k(s, a)}{\Nc(s, a)}}\sqrt{\sum_{h=1}^Hx_k(s, a, h)\rbr{\hatq_{k, (s, a, h)}(s, a)\hatc_k(s, a) + (q_{k, (s, a, h)}(s, a) - \hatq_{k, (s, a, h)}(s, a))\hatc_k(s, a) }}\\
		&\leq \underbrace{\sumk\sumsa\sqrt{q_k(s, a)}\sqrt{\frac{\sum_{h=1}^Hx_k(s, a, h)\hatq_{k, (s, a, h)}(s, a)\hatc_k(s, a)}{\Nc(s, a)}} }_{\zeta_4} \\
		&\qquad\qquad + \underbrace{\sumk\sumsa\sqrt{\frac{q_k(s, a)}{\Nc(s, a)}}\sqrt{\sum_{h=1}^Hx_k(s, a, h)\abr{q_{k, (s, a, h)}(s, a) - \hatq_{k, (s, a, h)}(s, a)}\hatc_k(s, a) }}_{\zeta_5} .
	\end{align*}
	For $\zeta_4$, first notice that
	\begin{align}
		\sumk\sumsa q_k(s, a) &= \sumk\sumsa \hatq_k(s, a) + \sumk\sumsa q_k(s, a) - \hatq_k(s, a) \notag\\
		&=\tilO{TK} + H\sum_{(s, a), s', h\in \unk}q_k(s, a, h)\epsilon^{\star}_k(s, a, s') \tag{$\sumsa\hatq_k(s, a)\leq T$ and \pref{lem:om-diff}}\\
		&= \tilO{TK + HS\sum_{(s, a)} \frac{q_k(s, a)}{\sqrt{\Nk(s, a)}} } \tag{by the definition of $\epsilon^\star_k$ in \pref{lem:transition-bound}}\\
		&\overset{\text{(i)}}{=} \tilO{TK + HS\sqrt{SAHK} + H^2SA} \overset{\text{(ii)}}{=} \tilO{ TK + H^3S^3A }, \label{eq:sum qk}
	\end{align}
	where in (ii) we apply AM-GM inequality, and in (i) we apply with probability at least $1-\delta$:
	\begin{align}
		&\sumk\sumsa\frac{q_k(s, a)}{\sqrt{\Nk(s, a)}} = \tilO{ \sumk\sumsa\frac{\tilN_k(s, a)}{\sqrt{\Nk(s, a)}} + H} \tag{\pref{lem:e2r}}\\ 
		&= \tilO{ \sumk\sumsa\frac{\tilN_k(s, a)}{\sqrt{\N^+_{k+1}(s, a)}} + H\sumk\sumsa\rbr{\frac{1}{\sqrt{\Nk(s, a)}} - \frac{1}{\sqrt{\N_{k+1}^+(s, a)}}} + H } \tag{$\tilN_k(s, a)\leq H$}\\
		&= \tilO{\sumsa\sqrt{\N^+_{K+1}(s, a)} + HSA} = \tilO{ \sqrt{SAHK} + HSA }. \label{eq:q/N}
	\end{align}
	Moreover, by \pref{lem:e2r}, with probability at least $1-\delta$,
	\begin{equation}
		\label{eq:x/Nc}
		\sumk\sumsa\frac{x_k(s, a)}{\Nc(s, a)}=\bigO{\sumk\sumsa\frac{\Ind_k(s, a)}{\Nc(s, a)} + 1}=\tilO{ SA }.
	\end{equation}
	Therefore,
	\begin{align*}
		\zeta_4 &= \sumk\sumsa\sqrt{q_k(s, a)}\sqrt{\frac{\sum_{h=1}^Hx_k(s, a, h)\hatq_{k, (s, a, h)}(s, a)\hatc_k(s, a)}{\Nc(s, a)}}\\
		&= \bigO{\sqrt{\sumk\sumsa q_k(s, a)}\sqrt{D\sumk\sumsa\frac{x_k(s, a)}{\Nc(s, a)} } } \tag{Cauchy-Schwarz inequality, $\hatq_{k, (s, a, h)}(s, a)\hatc_k(s, a)\leq Q^{\tilpi_k, P_k, \hatc_k}(s,a)=\tilo{D}$, and $\sum_{h=1}^Hx_k(s, a, h)=x_k(s, a)$}\\
		&= \tilO{ \sqrt{TK + H^3S^3A}\sqrt{DSA} } = \tilO{ \sqrt{DTSAK} + H^2S^2A }. \tag{\pref{eq:sum qk}, \pref{eq:x/Nc} and $D\leq H$}
	\end{align*}
	
	For $\zeta_5$, we have
	\begin{align*}
		&\sumk\sumsa\sqrt{\frac{q_k(s, a)}{\Nc(s, a)}}\sqrt{\sum_{h=1}^Hx_k(s, a, h) \abr{q_{k, (s, a, h)}(s, a) - \hatq_{k, (s, a, h)}(s, a)}\hatc_k(s, a) }\\
		&\leq \sumk\sumsa\sqrt{\frac{q_k(s, a)}{\Nc(s, a)}}\sqrt{\sum_{h=1}^Hx_k(s, a, h)\inner{|q_{k, (s, a, h)}-\hatq_{k, (s, a, h)}|}{\hatc_k} }\\
		&\leq \sumk\sumsa\sqrt{\frac{q_k(s, a)}{\Nc(s, a)}}\sqrt{ H\sum_{h=1}^Hx_k(s, a, h)\sum_{s', a', s'', h'\in\unk}q_{k, (s, a, h)}(s', a', h')\epsilon^\star_k(s', a', s'') } \tag{\pref{lem:om-diff}}\\
		&= \tilO{ \sumk\sumsa\sqrt{\frac{q_k(s, a)}{\Nc(s, a)}}\sqrt{ H\sum_{h=1}^Hx_k(s, a, h)\sum_{s', a', s'', h'\in\unk}\frac{q_{k, (s, a, h)}(s', a', h')}{\sqrt{\Nk(s', a')}} } } \tag{by definition of $\epsilon^\star_k$ in \pref{lem:transition-bound}}\\
		&\leq \sqrt{ H\sumk\sumsa\frac{x_k(s, a)}{\Nc(s, a)} }\sqrt{H\sumk\sumsa\sum_{s', a', s'', h'\in\unk}\frac{q_k(s', a', h')}{\sqrt{\Nk(s', a')}} } \tag{Cauchy-Schwarz inequality, $q_k(s, a)\leq Hx_k(s, a)$, and $\sum_{h=1}^H x_k(s, a, h)q_{k,(s, a, h)}(s', a', h') \leq q_k(s', a', h')$ }\\
		&=\tilO{ \sqrt{HSA}\sqrt{HS^2A\sumk\sumsa[s', a']\frac{q_k(s', a')}{\sqrt{\Nk(s', a')}}} } = \tilO{ \sqrt{H^2S^3A^2\rbr{\sqrt{SAHK} + HSA }} }. \tag{\pref{eq:x/Nc} and \pref{eq:q/N}}\\
		&= \tilO{\sqrt{SAK + H^5S^6A^4} } = \tilO{\sqrt{SAK} + H^3S^3A^2}. \tag{AM-GM inequality and $\sqrt{x+y}\leq\sqrt{x}+\sqrt{y}$}
	\end{align*}
	Putting everything together, we have the second term in \pref{eq:reg_decomposition}  bounded by:
	\begin{align*}
		\sumk\inner{q_k}{c - \hatc_k} &=\tilO{ \sumk\sumsa q_k(s, a)\sqrt{\frac{\hatc_k(s, a)}{\Nc(s, a)}} + \sqrt{K} + H^2S^2A^2 }\\ 
		&= \begin{cases}
			\tilO{\sqrt{DTK} + H^3S^2A^2},& \text{full information setting,} \\
			\tilO{\sqrt{DTSAK} + H^3S^3A^2},& \text{bandit feedback setting.}
		\end{cases} 
	\end{align*}
	For the third term in \pref{eq:reg_decomposition}, by \pref{lem:transition-bias} with costs $\{\hatc_k\}_k$ and \pref{lem:deviation_loop_free}, we have with probability at least $1-4\delta$:
	\begin{align*}
		\sumk\inner{q_k-\hatq_k}{\hatc_k} &= \tilO{ \sqrt{S^2A\sumk\E_k\sbr{\inner{\tilN_k}{\hatc_k}^2} + H^3S^2A\sqrt{K} } + H^3S^2A }\\
		&= \tilO{ \sqrt{S^2A\sumk\inner{q_k}{\hatc_k \odot Q^{\tilpi_k, \hatc_k}} } + \sqrt{K} + H^3S^2A }. \tag{$\sqrt{x+y}\leq\sqrt{x}+\sqrt{y}$ and AM-GM inequality}
	\end{align*}
	Note that:
	\begin{align*}
		&\sumk\inner{q_k}{\hatc_k \odot Q^{\tilpi_k, \hatc_k}} = \sumk\inner{q_k}{\hatc_k\odot(Q^{\tilpi_k, \hatc_k}-Q^{P_k, \tilpi_k, \hatc_k})} + \sumk\inner{q_k}{\hatc_k\odot Q^{P_k, \tilpi_k, \hatc_k}}.
	\end{align*}
	For the first term, note that by \pref{lem:om-diff}:
	\begin{align*}
		&Q^{\tilpi_k, \hatc_k}(s, a, h) - Q^{P_k, \tilpi_k, \hatc_k}(s, a, h) = \inner{q_{k, (s, a, h)} - \hatq_{k, (s, a, h)}}{\hatc_k} \leq \sum_{(s', a'), s'', h'\in\unk}q_{k, (s, a, h)}(s', a', h')\epsilon^{\star}_k(s', a', s'')H.
	\end{align*}
	Therefore,
	\begin{align*}
		&\sumk\inner{q_k}{\hatc_k\odot(Q^{\tilpi_k, \hatc_k}-Q^{P_k, \tilpi_k, \hatc_k})} \leq \sumk\sum_{(s, a), h}q_k(s, a, h)\sum_{(s', a'), s'', h'\in\unk}q_{k, (s, a, h)}(s', a', h')\epsilon^{\star}_k(s', a', s'')H\\
		&\leq H^2\sumk\sum_{(s', a'), s'', h'\in\unk}q_k(s', a', h')\epsilon^{\star}_k(s', a', s'') \tag{$\sumsa q_k(s, a, h)q_{k, (s, a, h)}(s', a', h')=q_k(s', a', h')$ for $h\leq h'$}\\
		&= \tilO{ H^2S\sumk\sumsa[s', a']\frac{q_k(s', a')}{\sqrt{\Nk(s', a')}} } = \tilO{H^2S(\sqrt{SAHK} + HSA)} = \tilO{ H^5S^3A + K }. \tag{\pref{lem:transition-bound}, \pref{eq:q/N}, and AM-GM inequality}
	\end{align*}
	For the second term, with probability at least $1-4\delta$,
	\begin{align*}
		\sumk\inner{q_k}{\hatc_k\odot Q^{P_k, \tilpi_k, \hatc_k}} &= \bigO{D\sumk\inner{q_k}{\hatc_k}} \tag{$Q^{P_k,\tilpi_k,\hatc_k}(s, a, h)\leq H_2 + 1 =\bigO{D}$ by \pref{lem:fast-action}}\\
		&= \bigO{D\sumk\inner{\hatq_k}{\hatc_k} + D\sumk\inner{q_k-\hatq_k}{\hatc_k}}\\
		&= \tilO{ D^2K + D\sqrt{S^2A\rbr{\sumk\inner{q_k}{\h{\hatc_k}} + H^3\sqrt{K} }} + DH^3S^2A } \tag{$\inner{\hatq_k}{\hatc_k}\leq D$ (\pref{lem:fast-action}) and \pref{lem:transition-bias} with costs $\{\hatc_k\}_k$}\\
		&= \tilO{ D^2K + D\sqrt{H^3S^2AK} + DH^3S^2A } = \tilO{D^2K + DH^3S^2A}. \tag{$\sumk\inner{q_k}{\h{c_k}}\leq H^2K$ and AM-GM inequality}
	\end{align*}
	Putting everything together, we obtain: 
	\begin{align*}
		\sumk\inner{q_k-\hatq_k}{\hatc_k} = \tilO{\sqrt{S^2A(H^5S^3A^2 + K + D^2K + DH^3S^2A)} + \sqrt{K} + H^3S^2A} = \tilO{DS\sqrt{AK} + H^3S^3A^2}.
	\end{align*}
	Substituting everything back to \pref{eq:reg_decomposition} and applying \pref{lem:loop-free-iid}, we have with probability at least $1-\delta$,
	\begin{align*}
		\tilR_K = \begin{cases} 
			\tilO{\sqrt{DTK} + DS\sqrt{AK} + H^3S^3A^2}, & \text{full information setting,}\\
			\tilO{\sqrt{DTSAK} + DS\sqrt{AK} + H^3S^3A^2}, & \text{bandit feedback setting.}
		\end{cases}
	\end{align*}
This completes the proof.
\end{proof}

\section{Learning without knowing SSP-diameter}
\label{app:tune-d}

\label{sec:tund-d}

In this section, we present a general idea on how to learn without knowing the SSP-diameter $D$.
To give a concrete example, we apply this idea to \pref{alg:full-unknown} and obtain an algorithm (\pref{alg:full-unknown-tune-d}) that achieves the same regret without the knowledge of $D$. The same idea can also be applied to other algorithms proposed in this paper (details omitted).
The main ideas of our proposed algorithm are as follows:
\begin{enumerate}
	\item We partition $K$ episodes into $M_p$ phases. In phase $m$, we learn on a virtual MDP $\Pi(M, \calS_m)$.
	We call states in $\calS_m\subseteq\calS$ known states, and states in $\calS\setminus\calS_m$ unknown states which are all treated as goal states.
	That is, $\Pi(M, \calS_m)$ is obtained by modifying the transition and cost function of $M$ in unknown states so that $P(s|s, a)=1$ and $c(s, a)=0$ for any $s\in\calS\setminus\calS_m$.
	The correspondence between $M$ and $\Pi(M, \calS_m)$ is as follows: every time we reach a state $s\in\calS\setminus\calS_m$ in $\Pi(M, \calS_m)$, we run Bernstein-SSP until the goal state is reached.
	In phase $m$, we run \pref{alg:full-unknown} on $\tilPi(M, \calS_m)$ with $H_2=\ceil{2\max_{s\in\calS_m}\tilD_s}$, where $\tilPi(M, \calS_m)$ is the loop-free reduction of $\Pi(M, \calS_m)$ and $\tilD_s$ is defined below.
	
	\item In \citep[Appendix I.3]{rosenberg2020adversarial}, they show that we can get an estimate $\tilD_s$ of $T^{\pi^f}(s)$ such that $T^{\pi^f}(s)\leq\tilD_s=\bigO{D}$ for any $s\in\calS$ as long as we run Bernstein-SSP starting from state $s$ for some $L\geq \max\big\{\frac{2400D^2}{T^{\pi_f}(s_0)^2}\ln^3\frac{4K}{\delta}, S^2A\sqrt{D}\ln^2\frac{KDSA}{\delta}\big\}$ episodes.
	We thus concretely define known states as follows: define $N_f(s)$ as the number of episodes where state $s$ is the first visited unknown state in $\Pi(M, \calS_m)$ for any $m$. A state $s$ is known if $N_f(s)\geq L$.
	By the definition of $\Pi(M, \calS_m)$ and known states, if state $s$ is known, then we have run Bernstein-SSP for at least $L$ episodes starting from $s$ and thus $\tilD_s$ satisfying $T^{\pi^f}(s)\leq\tilD_s$ can be computed.
	When a new known state is found, the algorithm enters into the next phase (that is, increment $m$).
	This construction also implies that $\calS_m\subset\calS_{m+1}, M_p\leq S$, and the diameter of $\Pi(M, \calS_m)$ is upper bounded by $\max_{s\in\calS_m}\tilD_s$.
	
	\item When running \pref{alg:full-unknown}, we set the learning rate $\eta$ using $\tilD_{s_0}$ and the parameter $\lambda$ (for the skewed occupancy measure) using a doubling trick.
	Specifically, we replace the diameter $D$ in $\lambda$ by an estimate $D_j$ to obtain $\lambda_j$, and run \pref{alg:full-unknown} with $\lambda_j$ in place of $\lambda$ (starting with $D_1=2\tilD_{s_0}$). We double the estimate every time we realize that $D_j$ is not an upper bound of $D$: suppose at the end of an episode, state $s$ becomes a known state, and $\tilD_s>D_j$, then we set $D_{j+1}=2\tilD_s$ and increase $j$ by $1$. Denote by $I_p$ the value of $j$ at the end of episode $K$. Clearly, $I_p=\bigO{\log_2D}$.

	\item The last important change is that instead of reinitializing the confidence sets and counters in each phase, we directly inherit them from the previous phase. Denote by $\hatq^m_k$ the occupancy measure computed by \pref{alg:full-unknown} in $\tilPi(M, \calS_m)$ in episode $k$, and by $q^m_k$ the occupancy measure of executing $\pi_{\hatq^m_k}$ in $\tilPi(M, \calS_m)$. Following the proof of \pref{lem:transition-bias}, it is straightforward to verify that for any cost functions $\{c_k\}_{k=1}^K$ in $[0, 1]$, the following holds,
	\begin{align}
		\label{eq:full unknown q-hatq}
		\sumk\abr{\inner{q^m_k-\hatq^m_k}{c_k}} &\leq 32\sqrt{S^2A\ln(HK)\rbr{\sumk\inner{q^m_k}{\h{c_k}} + \tilO{H^3\sqrt{K}}}} + \tilO{H^3S^2A}.
	\end{align}
\end{enumerate}

\begin{algorithm}[t]
\caption{Learning SSP with unknown diameter}
\label{alg:full-unknown-tune-d}
	
	\textbf{Input:} Upper bound on expected hitting time $T$, horizon parameter $H_1$, confidence level $\delta$. 
	
	\textbf{Define:} $\eta=\min\cbr{\frac{1}{8}, \sqrt{\frac{ST}{\tilD_{s_0}K}}}, \lambda_j = 4\sqrt{\frac{S^2A}{D_jTK}}, L=2400\sqrt{AK}\ln^3\frac{4KH_1SA}{\delta}$.
	
	\textbf{Initialization:} $\N_1(s, a)=\M_1(s, a, s')=0$ for all $(s, a, s')\in\SA\times(\calS\cup\{g\})$.

	\textbf{Initialization:} for each state $s$, a Bernstein-SSP instance $\calB_s$ that uses $s$ as the initial state and treats all costs as $1$.
	
	\textbf{Initialization:} Compute $\tilD_{s_0}$ by executing $\calB_{s_0}$ for the first $L$ episodes.
	
	\textbf{Initialization:} $\calS_1=\{s_0\}, m=1, j=1$, $D_1=2\tilD_{s_0}$.
	
	\textbf{Initialization:} $A = \calA(T, H_1, \delta; \eta, \lambda_1, \N, \M, \calS_m)$, where $\calA$ is a variant of \pref{alg:full-unknown} which also takes $\eta, \lambda, \N, \M$ and $\calS_m$ as inputs.
	
	\textbf{Initialization:} $N_f(s)=L\Ind\{s=s_0\},\forall s\in\calS$.
	
	\For{$k=L+1,\ldots,K$}{
		
		Execute $A$ on $\Pi(M, \calS_m)$ for episode $k$. 
		
		\If{$A$ stops at an unknown state $e$}{
			Invoke $\calB_e$ (as a new episode for it) and follow its decision until reaching $g$. 

			$N_f(e)\leftarrow N_f(e) + 1$.
			
			\If{$N_f(e)=L$}{
			
				Compute $\tilD_{e}$ using previous data~\citep[Appendix I.3]{rosenberg2020adversarial}.
			
				$\calS_{m+1}\leftarrow \calS_m\cup\{e\}$.
			
				$m\leftarrow m+1$.
				
				\If{$\tilD_e > D_j$}{
					$D_{j+1}\leftarrow 2\tilD_e$
				
					$j\leftarrow j+1$.
				}
				
				$A = \calA(T, H_1, \delta;\, \eta, \lambda_j, \N, \M, \calS_m)$.
			}
		}

	}
\end{algorithm}

We summarize the ideas above in \pref{alg:full-unknown-tune-d}.
Now we proceed to show that \pref{alg:full-unknown-tune-d} ensures the same regret guarantee as \pref{alg:full-unknown} without knowing $D$.
\begin{theorem}
	\label{thm:full-unknown-tune-d}
	If $T\geq\T+1, H_1\geq 8\Tmax\ln K$ and $K\geq 16S^2AH^2$, then with probability at least $1-6\delta$, \pref{alg:full-unknown-tune-d} ensures $R_K=\tilo{\sqrt{S^2ADTK} + H^3S^3A + D^4S^4AH}$.
\end{theorem}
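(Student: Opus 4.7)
The plan is to decompose the regret into three contributions: (i) the cost of the initial $L$ episodes executed by $\calB_{s_0}$; (ii) the cumulative cost of all subsequent Bernstein-SSP invocations $\calB_e$ triggered whenever $A$ reaches an unknown state; and (iii) the regret of $A$ across phases on the virtual MDPs $\Pi(M,\calS_m)$. Parts (i) and (ii) are bounded using the Bernstein-SSP regret guarantee of \citep{cohen2020near}: since each $\calB_s$ is fed unit costs, its cumulative cost is at most its step count, so each $\calB_s$ contributes at most $N_f(s)\cdot D+\tilO{DS\sqrt{A\,N_f(s)}+D^{3/2}S^2A}$. Because any $s$ can be the ``first unknown state'' in at most $L$ episodes before being promoted to $\calS_m$, $\sum_s N_f(s)\leq SL$; combining this with $L=\tilde\Theta(\sqrt{AK})$ and Cauchy--Schwarz yields a bound of $\tilO{DS\sqrt{AK}+D^{3/2}S^3A}$ for (i)+(ii), which fits within the claimed bound.

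For (iii), I would re-run the analysis of \pref{thm:full-unknown} phase-by-phase, exploiting two key structural properties of the construction. First, because counters are inherited across phases, the event $P\in\calP_k$ holds globally after a single application of \pref{lem:transition-interval}, and the transition-bias control is available in the phase-specific form \pref{eq:full unknown q-hatq}. Second, in the virtual MDP $\Pi(M,\calS_m)$ the optimal policy $\optpi$ of $M$ remains admissible with cost no larger than $J^{\optpi}_k(s_0)$ (unknown states are absorbed with zero extra cost), and the diameter of $\Pi(M,\calS_m)$ is at most $\max_{s\in\calS_m}\tilde D_s\leq D_j$ by the doubling rule. Applying the proof of \pref{thm:full-unknown} to phase $m$ with parameters $(\eta,\lambda_j)$ then yields a per-phase regret bound of the form $\tilO{T/\eta+\lambda_j D_j T K_m+\eta D_j K_m+H^3S^2A}$, with the transition-estimation contribution again cancelled by the negative $-\lambda_j\sum_k\inner{\hatq_k}{\h{c_k}}$ term introduced by the skewed occupancy measure.

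Summing over the $M_p\leq S$ phases and the $I_p=\bigO{\log D}$ doubling levels: the $T/\eta$ term appears $M_p$ times and contributes $ST/\eta=\tilO{\sqrt{SDTK}}$ with the algorithm's choice $\eta=\sqrt{ST/(\tilde D_{s_0}K)}$; the $\eta D_j K_m$ terms telescope to $\eta DK=\tilO{\sqrt{SDTK}}$; the $\lambda_j D_j T K_m$ terms, using $D_j\leq\bigO{D}$ together with $\sum_m K_m\leq K$ and Cauchy--Schwarz, sum to $\tilO{\sqrt{S^2ADTK}}$; and the additive $H^3S^2A$ term summed over at most $S$ phases gives $H^3S^3A$. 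Combined with the bound for (i)+(ii) and the miscellaneous lower-order terms from the doubling-trick analysis (collected as $D^4S^4AH$), this produces the claimed $\tilO{\sqrt{S^2ADTK}+H^3S^3A+D^4S^4AH}$ bound.

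The main obstacle I expect is making the per-phase OMD argument go through cleanly despite (a) OMD being reinitialized whenever $A$ is restarted (so the Bregman-divergence initial term reappears $M_p$ times), and (b) $\lambda_j$ changing via the doubling trick within and across phases. Both require verifying that the negative bias from the skewed occupancy measure in phase $m$, with its own $\lambda_j$, still suffices to cancel the transition-estimation error accrued in that phase, which hinges on the form \pref{eq:full unknown q-hatq} (essentially \pref{lem:transition-bias} applied to the virtual MDP but with shared counters). A secondary subtlety is maintaining the invariant $D_j\geq\max_{s\in\calS_m}\tilde D_s$ across phase transitions: this follows from the doubling rule, but one must argue that whenever a newly-known state $e$ has $\tilde D_e>D_j$, the update $D_{j+1}=2\tilde D_e$ continues to dominate $\tilde D_s$ for all subsequently-added states until the next doubling event.
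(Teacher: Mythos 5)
Your high-level architecture matches the paper's (accounting for the initial and per-unknown-state Bernstein-SSP invocations, phase-wise OMD with the doubling $\lambda_j$, reliance on inherited counters), but the way you propose to handle the transition-estimation error does not deliver the claimed rate. Cancelling ``the transition-estimation error accrued in that phase'' against that phase's negative bias $-\lambda_j\sum_{k\in\calI_m}\inners{\hatq^m_k}{\h{c_k}}$ necessarily leaves an AM--GM residual of order $S^2A/\lambda_j=\tilO{\sqrt{S^2AD_jTK}}$ (this residual is exactly the dominant term of \pref{thm:full-unknown}, and it cannot shrink with $K_m$ because $\lambda_j$ is tuned with the global $K$). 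Your per-phase expression $\tilO{T/\eta+\lambda_jD_jTK_m+\eta D_jK_m+H^3S^2A}$ omits it; once included, it is paid once per phase, and with up to $M_p\leq S$ phases the total is $S\sqrt{S^2ADTK}$ (bounding the per-phase square-root terms jointly via Cauchy--Schwarz instead still loses $\sqrt{S}$). This is precisely the ``extra $S$ dependency'' the paper warns a naive phase-by-phase treatment incurs. The paper's proof avoids it by using the shared counters more aggressively: the martingale term and $\sumk\inners{q^m_k-\hatq^m_k}{c_k}$ are bounded \emph{globally over all $K$ episodes} via \pref{eq:full unknown q-hatq} (valid for arbitrary per-episode policies), only the OMD terms are decomposed per phase, and the negative-bias and benchmark terms are then regrouped by doubling level $j$, of which there are only $I_p=\bigO{\log D}$, so the tuning constants $\tilO{S^2A/\lambda_j}$ and the bounds $\sum_{k\in\calI'_j}\inners{\tiloptq^m}{\h{c_k}}=\bigO{D_jTK}$ appear $\bigO{\log D}$ times rather than $S$ times. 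Relatedly, your term $\lambda_jD_jTK_m$ presupposes $\sum_{k\in\calI_m}\inners{\tiloptq^m}{\h{c_k}}\lesssim D_jTK_m$, which is not justified per phase: only the sum over all $K$ episodes is $\bigO{D_jTK}$ (via comparing $\optpi$'s total cost to the fast policy's), since the episodes where $J_k^{\optpi}$ is large could concentrate in a single phase.

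A second, smaller gap: the $D^4S^4AH$ term is not ``miscellaneous doubling-trick overhead.'' It comes from an explicit case split on $K$. The guarantee $T^{\pi^f}(s)\leq\tilD_s=\bigO{D}$ from \citep[Appendix I.3]{rosenberg2020adversarial} requires running Bernstein-SSP from $s$ for $L\gtrsim\max\{D^2/T^{\pi^f}(s_0)^2,\,S^2A\sqrt{D}\}$ episodes, and since $D$ is unknown the algorithm only sets $L=\tilde{\Theta}(\sqrt{AK})$; when $K<\max\{DS^4A,\,D^4/(T^{\pi^f}(s_0)^4A)\}$ this may fail, and the regret is instead bounded trivially by $KH_1+KD+\tilO{DS\sqrt{AK}}=\tilO{D^4S^4AH}$. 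Without this step, the facts $D_j=\bigO{D}$ and $H_2=\bigO{D}$ for the virtual MDPs, which you use when summing the $\lambda_jD_jT$ terms and invoking the loop-free reduction per phase, are unjustified.
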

\begin{proof}
	When $K<\max\Big\{DS^4A, \frac{D^4}{T^{\pi_f}(s_0)^4A} \Big\}$, by the regret guarantee of Bernstein-SSP~\cite{cohen2020near}, we have:
	\begin{align*}
		R_K \leq KH_1 + KD + \tilO{ DS\sqrt{AK} + D^{\frac{3}{2}}S^2A } = \tilO{ D^4S^4AH }.
	\end{align*}
	Otherwise, we have $L\geq \max\big\{\frac{2400D^2}{T^{\pi_f}(s_0)^2}\ln^3\frac{4K}{\delta}, S^2A\sqrt{D}\ln^2\frac{KDSA}{\delta}\big\}$ as desired.
	Denote by $N^m_k, \tilN^m_k$ the number of steps taken in $\Pi(M, \calS_m), \tilPi(M, \calS_m)$ in episode $k$ respectively.
	We have $N^m_k<N_k$ only if the learner reaches an unknown state.
	Hence, There are at most at most $SL$ episodes such that $N^m_k<N_k$, and $N_k-N^m_k$ is the number of steps of executing Bernstein-SSP after reaching an unknown state in episode $k$.
	Denote by $q^m_{\pi}$ the occupancy measure of executing policy $\pi$ in $\Pi(M, \calS_m)$ or $\tilPi(M, \calS_m)$ based on the policy $\pi$.
	We decompose the regret as follows:
	\begin{align*}
		\sumk\inner{N_k - \optq}{c_k} &= \sumk\inner{N^m_k - \optq}{c_k} + \sumk\inner{N_k - N^m_k}{c_k}\\
		&\leq \sumk\inner{N^m_k - \optq^m}{c_k} + \tilO{S(DL + DS\sqrt{AL} + D^{3/2}S^2A)} \tag{by $\optq^m(s, a)\leq \optq(s, a)$ and the regret guarantee of Bernstein-SSP}\\
		&\leq \sumk\inner{\tilN^m_k - \tiloptq^m}{c_k} + \tilO{DSL + D^2S^3A}, \tag{by loop-free reduction (\pref{lem:loop-free}) on $\Pi(M, \calS_m)$}
	\end{align*}
	Note that using the last inequality above, we can already get a parameter-free algorithm with an extra $S$ dependency by completely restarting \pref{alg:full-unknown} in every new phase.
	To obtain a tighter bound, we need a more careful analysis on the sum of regret of all phases.
	Denote by $\calI_m$ the set of episodes in phase $m$, and by $\calI'_j$ the set of episodes using parameter $\lambda_j$.
	Note that we can treat $j$ as a function of phase $m$, and $m$ as a function of episode $k$.
	Following the arguments in the proof of \pref{thm:full-unknown}, with probability at least $1-6\delta$,
	\begin{align*}
		&\sum_{m=1}^{M_p}\sum_{k\in\calI_m}\inner{\tilN^m_k - \tiloptq^m}{c_k}\\
		&= \sumk\inner{\tilN^m_k - q^m_k}{c_k} + \sumk\inner{q^m_k-\hatq^m_k}{c_k} + \sum_{m=1}^{M_p}\sum_{k\in\calI_m}\inner{\hatq^m_k - \tiloptq^m}{c_k}\\
		&\leq \lambda_{I_p}\sumk\inner{q^m_k}{\h{c_k}} + \tilO{\frac{1}{\lambda_{I_p}}} + 32\sqrt{S^2A\ln(HK)\rbr{\sumk\inner{q^m_k}{\h{c_k}} + \tilO{H^3\sqrt{K}} }} + \tilO{H^3S^2A} \tag{Freedman's inequality and \pref{eq:full unknown q-hatq}} \\
		&\qquad + \sum_{m=1}^{M_p}\rbr{ \tilO{\frac{T}{\eta} + \eta\sum_{k\in\calI_m}\inner{\tiloptq^m}{c_k} } + 2\lambda_j\sum_{k\in\calI_m}\inner{\tiloptq^m}{\h{c_k}} - \lambda_j\sum_{k\in\calI_m}\inner{\hatq^m_k}{\h{c_k}} } \tag{by \pref{eq:full unknown omd}}\\
		&\leq \tilO{\sqrt{SDTK}} + \sum_{j=1}^{I_p}\rbr{2\lambda_j\sum_{k\in\calI'_j}\inner{\tiloptq^m}{\h{c_k}} + \tilO{\frac{S^2A}{\lambda_j}} + \lambda_j\sum_{k\in\calI'_j}\inner{q^m_k-\hatq^m_k}{\h{c_k}} } + \tilO{H^3S^2A} \tag{$|M_p|\leq S$,$\sumk\inner{q^m_{\tiloptpi}}{c_k}\leq \sumk\inner{q_{\tiloptpi}}{c_k}\leq DK$, and AM-GM inequality} \\
		&\overset{\text{(i)}}{=} \tilO{\sqrt{SDTK} + \sum_{j=1}^{I_p} \sqrt{S^2AD_jTK} + H^3S^2A} = \tilO{\sqrt{S^2ADTK} + H^3S^2A}.
	\end{align*}
	In (i), we denote by $J_k^{m,\pi}(s, h)$ the expected cost of policy $\pi$ starting from state $(s, h)$ w.r.t cost $c_k$ and $\tilPi(M, \calS_m)$.
	Then, $\sumk J_k^{m,\tiloptpi}(s, h) \leq \sumk J_k^{\tiloptpi}(s, h)\Ind\{s\in\calS_m\}\leq \sumk (J_k^{\optpi}(s) + 3D_j)\Ind\{s\in\calS_m\} \leq 4D_jK$ for all $(s, h)$, and by \pref{lem:deviation_loop_free},
	\begin{align*}
		\sum_{k\in\calI'_j}\inner{q^m_{\tiloptpi}}{\h{c_k}} \leq \sumk \inner{q^m_{\tiloptpi}}{\h{c_k}} = \sumk\inner{q^m_{\tiloptpi}}{J_k^{m,\tiloptpi}} = \bigO{D_jTK}.
	\end{align*}
	Moreover, by \pref{eq:full unknown q-hatq},
	\begin{align*}
		&\lambda_j\sum_{k\in\calI'_j}\inner{q^m_k-\hatq^m_k}{\h{c_k}} = \lambda_jH\sum_{k\in\calI'_j}\inner{q^m_k-\hatq^m_k}{\frac{\h{c_k}}{H}}\\ 
		&= \tilO{ \lambda_jH\sqrt{S^2A\rbr{\sumk\inner{q^m_k}{\h{c_k}} + H^3\sqrt{K} }} + \lambda_jH^4S^2A}\\
		&= \tilO{ \lambda_jH\sqrt{S^2AH^3K} + \lambda_jH^4S^2A } = \tilO{H^3S^2A }. \tag{$\sumk\inner{q^m_k}{\h{c_k}}\leq H^2K$, $\lambda_j=4\sqrt{\frac{S^2A}{D_jTK}}$, and $K\geq 16S^2AH^2$}
	\end{align*}
	Combining both cases, we have:
	\begin{align*}
		R_K &= \tilO{ \sqrt{S^2ADTK} + H^3S^2A + DSL + D^2S^3A + D^4S^4AH }\\
		&= \tilO{ \sqrt{S^2ADTK} + H^3S^3A + DS\sqrt{AK} + D^4S^4AH }\\
		&= \tilO{ \sqrt{S^2ADTK} + H^3S^3A + D^4S^4AH }. \tag{$D \leq T$}
	\end{align*}
	This completes the proof.
\end{proof}

\section{Learning without knowing $\T$ or $\Tmax$}
\label{app:tune T}

In this section, we discuss how to instantiate our proposed algorithms without knowing $\T$ or $\Tmax$ (or both).
We apply our ideas to \pref{alg:full-unknown} to give concrete examples, and they are applicable to other proposed algorithms similarly.
The modifications described below can be applied separately or jointly depending on the knowledge we have.
Moreover, they are compatible with ideas in \pref{sec:tund-d} for learning without knowing the SSP-diameter $D$.

\paragraph{Learning without knowing $\T$} We assume knowledge of $\cmin=\min_{s, a, k}c_k(s, a)$ and $\cmin>0$, which is the assumption made in \cite{rosenberg2020adversarial}.
Then by the inequality $\T\leq\frac{T^{\pi^f}(s_0)}{\cmin}$, it suffices to obtain an upper bound of $T^{\pi^f}(s_0)$ to obtain an upper bound of $\T$.
To this end, we first run a Bernstein-SSP instance for $L=\tilo{\sqrt{AK}}$ episodes with uniform costs equal to $1$ and obtain $\tilD_{s_0}$, where both $L$ and $\tilD_{s_0}$ are defined in \pref{sec:tund-d}. Then, we simply run \pref{alg:full-unknown} with $T=\tilD_{s_0}/\cmin$.
Following the arguments in \pref{sec:tund-d}, we know that the extra costs of estimating $\tilD_{s_0}$ in the regret is $\tilo{DL+D^{3/2}S^2A+D^4S^4AH}=\tilo{D\sqrt{AK}+D^4S^4AH}$.
Thus, we obtain the following result:
\begin{theorem}
	If $H_1\geq 8\Tmax\ln K$, and $K\geq 16S^2AH^2$, then with probability at least $1-6\delta$, the algorithm described above ensures $R_K=\tilo{D\sqrt{\frac{S^2AK}{\cmin}} + H^3S^2A + D^4S^4AH}$.
\end{theorem}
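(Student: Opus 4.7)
The plan is to reduce the problem to \pref{thm:full-unknown} by using an initial warm-up phase of length $L$ to estimate an upper bound on $T^{\pi^f}(s_0)$, and hence on $\T$ via the relation $\T \leq T^{\pi^f}(s_0)/\cmin$. First I would split into two cases depending on the size of $K$: when $K$ is too small (specifically $K < \max\{DS^4A, D^4/(T^{\pi^f}(s_0)^4 A)\}$, the same threshold appearing in \pfref{thm:full-unknown-tune-d}), the Bernstein-SSP estimator from~\citep[Appendix I.3]{rosenberg2020adversarial} is not guaranteed to return a valid estimate, and we simply bound $R_K$ trivially by the total cost, giving the crude $\tilo{D^4 S^4 A H}$ term that appears in the final bound.

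In the main case where $K$ is large enough, the warm-up phase runs a Bernstein-SSP instance (with initial state $s_0$ and pretend-uniform cost $1$) for $L = \tilo{\sqrt{AK}}$ episodes, which by \citep[Appendix I.3]{rosenberg2020adversarial} produces $\tilD_{s_0}$ satisfying $T^{\pi^f}(s_0) \leq \tilD_{s_0} = \bigO{D}$ with high probability. The regret contribution of the warm-up phase against $\optpi$ is bounded by the total cost paid, which is at most the number of steps taken; by the Bernstein-SSP guarantee of~\citep{cohen2020near} this is $L \cdot D + \tilo{DS\sqrt{AL} + D^{3/2}S^2A} = \tilo{D\sqrt{AK} + D^{3/2}S^2A}$, which is already absorbed into the claimed bound.

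After the warm-up, I would set $T = \tilD_{s_0}/\cmin + 1 = \bigO{D/\cmin}$, which by $\T \leq T^{\pi^f}(s_0)/\cmin \leq \tilD_{s_0}/\cmin$ satisfies the condition $T \geq \T + 1$ required by \pref{thm:full-unknown}. Running \pref{alg:full-unknown} on the remaining $K - L = \Theta(K)$ episodes with this $T$ then gives, by \pref{thm:full-unknown}, a regret bound of
\[
\tilo{\sqrt{S^2 A D T K} + H^3 S^2 A} = \tilo{D\sqrt{S^2 A K/\cmin} + H^3 S^2 A}.
\]
Summing the three contributions (small-$K$ case, warm-up regret, and main-phase regret) and noting $D\sqrt{AK} \leq D\sqrt{S^2AK/\cmin}$ yields the claimed bound.

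The main obstacle is the bookkeeping around the warm-up: carefully verifying via a union bound that the $1-6\delta$ confidence level is preserved when we combine the Bernstein-SSP estimation guarantee (which itself uses some confidence), the validity $T \geq \T+1$, and the guarantee of \pref{alg:full-unknown}; and checking that passing state-visit counters and confidence sets \emph{across} the warm-up/main-phase boundary is either harmless (if we simply discard the warm-up statistics) or beneficial (if we inherit them, as in \pref{alg:full-unknown-tune-d}). A minor subtlety is that \pref{alg:full-unknown} requires $H_1 \geq 8\Tmax \ln K$; since $\Tmax$ is independent of our estimation and we only assume knowledge of the quantities needed to set $H_1$, this condition carries over unchanged from the theorem hypothesis.
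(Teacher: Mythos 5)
Your proposal is correct and follows essentially the same route as the paper: a Bernstein-SSP warm-up of length $L=\tilo{\sqrt{AK}}$ yielding $\tilD_{s_0}$ with $T^{\pi^f}(s_0)\leq \tilD_{s_0}=\bigO{D}$, setting $T=\tilD_{s_0}/\cmin$ so that $\T\leq T^{\pi^f}(s_0)/\cmin$ validates the hypothesis of \pref{thm:full-unknown}, and a separate small-$K$ case absorbed into the $\tilo{D^4S^4AH}$ term, with the warm-up cost $\tilo{DL+D^{3/2}S^2A}$ dominated by the leading term. Your extra care about the $T\geq\T+1$ offset and the union-bound bookkeeping only makes explicit what the paper leaves implicit.
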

Compared to \cite{rosenberg2020adversarial}, the bound above is $\sqrt{\frac{1}{\cmin}}$ better in the dominating term.
When $\cmin=0$, similarly to \cite{rosenberg2020adversarial}, we can solve a modified MDP with perturbed cost functions $c'_k(s, a)=\max\{c_k(s, a),\epsilon\}$, where $\epsilon=K^{-1/3}$.
The bias brought by the perturbation is of order $\tilO{\epsilon\T K}$, and thus the overall regret is $\tilo{K^{2/3}}$ ignoring other parameters and constant terms.
This is asymptomatically better than the $\tilO{K^{3/4}}$ regret in \cite{rosenberg2020adversarial} for $\cmin=0$.
We conclude that \pref{alg:full-unknown} improves over previous work even without knowledge of $\T$.

\paragraph{Learning without knowing $\Tmax$} Similarly to \cite{chen2020minimax}, we run \pref{alg:full-unknown} with $H_1=8(K/S^2A)^{1/6}\ln K$.
Note that when $K\leq \Tmax^6S^2A$, by the regret guarantee of Bernstein-SSP, we have:
\begin{align*}
	R_K \leq KH_1 + KD + \tilO{ DS\sqrt{AK} + D^{\frac{3}{2}}S^2A } = \tilO{ \Tmax^7S^2A }.
\end{align*}
Otherwise, $H_1\geq\Tmax$, and by the regret guarantee of \pref{alg:full-unknown}, we have:
$$R_K=\tilO{\sqrt{S^2ADTK} + H^3S^2A} = \tilO{\sqrt{S^2ADTK}}.$$
Combining these two cases, we have the following result:
\begin{theorem}
	If $T\geq\T+1$, and $K\geq 16S^2AH^2$, then with probability at least $1-6\delta$, running \pref{alg:full-unknown} with $H_1=8(K/S^2A)^{1/6}\ln K$ ensures $R_K=\tilo{\sqrt{S^2ADTK} + \Tmax^7S^2A}$.
\end{theorem}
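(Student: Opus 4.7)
The strategy is a direct case split on the size of $K$ relative to $\Tmax^6 S^2 A$, exactly as hinted in the surrounding discussion. The choice $H_1 = 8(K/S^2A)^{1/6}\ln K$ is calibrated so that the threshold $K \asymp \Tmax^6 S^2 A$ is precisely where $H_1$ crosses $8\Tmax \ln K$. In one regime the parameter requirement of \pref{thm:full-unknown} is automatically satisfied, and in the other regime we pay a trivial worst-case bound that is already subsumed by the additive $\Tmax^7 S^2 A$ term.

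In the regime $K \leq \Tmax^6 S^2 A$, I would not invoke \pref{thm:full-unknown} at all. Instead, I bound the total cost of \pref{alg:full-unknown} episode-by-episode. By construction of \textsc{RUN} (\pref{alg:unknown-run}), the learner takes at most $H_1$ steps following $\tilpi_k$, and once it enters Bernstein-SSP its additional cost in that episode is at most $\tilo{D}$ in expectation plus standard lower-order fluctuations; summing over episodes gives $\sum_k \inner{N_k}{c_k} \leq K H_1 + KD + \tilo{DS\sqrt{AK} + D^{3/2}S^2 A}$ just as in the proof of \pref{lem:loop-free}. The key numerical check is that $H_1 \leq \tilo{\Tmax}$ on this regime: from $K \leq \Tmax^6 S^2 A$ we get $K^{1/6} \leq \Tmax (S^2A)^{1/6}$, so $H_1 = 8 (K/S^2A)^{1/6}\ln K \leq \tilo{\Tmax}$. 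Therefore $KH_1 \leq \tilo{\Tmax \cdot \Tmax^6 S^2 A} = \tilo{\Tmax^7 S^2 A}$, and the remaining terms are lower order. Since the comparator cost $\sum_k J^{\optpi}_k(s_0) \geq 0$, this already yields $R_K = \tilo{\Tmax^7 S^2 A}$ in this regime.

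In the regime $K > \Tmax^6 S^2 A$, the condition $H_1 \geq 8\Tmax \ln K$ required by \pref{thm:full-unknown} holds automatically: $(K/S^2A)^{1/6} > \Tmax$. The other hypotheses ($T \geq \T+1$ is given, $K \geq 16S^2AH^2$ follows from $K$ being much larger than $\Tmax^6 S^2 A \geq H_1^6 S^2 A$) can be verified directly. Applying \pref{thm:full-unknown} yields $R_K = \tilo{\sqrt{S^2 A D T K} + H^3 S^2 A}$ with $H = H_1 + H_2 = \tilo{(K/S^2A)^{1/6} + D}$. Hence $H^3 S^2 A \leq \tilo{(S^2A)^{1/2} K^{1/2} + D^3 S^2 A} \leq \tilo{\sqrt{S^2 A D T K} + \Tmax^3 S^2 A}$, which is absorbed into $\sqrt{S^2 A D T K} + \Tmax^7 S^2 A$ since $DT \geq 1$ and $\Tmax^3 \leq \Tmax^7$.

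The only mildly delicate step is showing that the bounds used inside \pref{alg:full-unknown} (and the concentration events behind \pref{thm:full-unknown}) do not themselves require knowledge of $\Tmax$; this is true because the algorithm's parameters $\eta$, $\lambda$, $H_2$ depend only on $T$, $D$, $S$, $A$, $K$, and the chosen $H_1$, none of which reference $\Tmax$. The bookkeeping above then stitches the two regimes together into a single high-probability bound of the claimed form. The main potential obstacle is simply making sure the ``high probability'' event from \pref{thm:full-unknown} is well-defined when $H_1 < 8\Tmax \ln K$ (the small-$K$ regime), but this is a non-issue because in that regime we never invoke the theorem and rely only on the deterministic per-episode cost bound together with the standard Bernstein-SSP high-probability guarantee.
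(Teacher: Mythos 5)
Your proposal is correct and takes essentially the same route as the paper: a case split at $K \leq \Tmax^6 S^2 A$, where the small-$K$ regime is handled by the trivial per-episode bound $KH_1 + KD$ plus the Bernstein-SSP guarantee (giving $\tilO{\Tmax^7 S^2 A}$), and the large-$K$ regime by invoking \pref{thm:full-unknown} since $H_1 \geq 8\Tmax\ln K$ there. One minor remark: your parenthetical justification that $K\geq 16S^2AH^2$ ``follows from $\Tmax^6 S^2A \geq H_1^6 S^2A$'' is backwards in the large-$K$ regime (there $H_1 \geq 8\Tmax\ln K > \Tmax$), but this is harmless because that condition is already a hypothesis of the theorem and need not be derived.
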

Applying both modifications to \pref{alg:full-unknown}, we obtain: $R_K = \tilO{ D\sqrt{\frac{S^2AK}{\cmin}} + D^4S^4AK^{1/6} + \Tmax^7S^2A }$, which is still asymptomatically better compared to \cite{rosenberg2020adversarial}.

\section{Concentration Inequalities}
\label{app:concentration}

In this section, for a sequence of random variables $\{X_i\}_{i=1}^{\infty}$ adapted to a filtration $\{\calF_i\}_{i=1}^{\infty}$, we define $\E_i[X_i]=\E[X_i|\calF_i]$.

\begin{lemma}{(Azuma's inequality)}
	\label{lem:azuma}
	Let $X_{1:n}$ be a martingale difference sequence and $\abs{X_i} \leq B$ holds for $i=1,\ldots, n$ and some fixed $B > 0$.
	Then, with probability at least $1-\delta$:
	\begin{align*}
		\left|\sum_{i=1}^nX_i\right| \leq B\sqrt{2n\ln\frac{2}{\delta}}.
	\end{align*}
\end{lemma}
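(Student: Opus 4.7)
The plan is to prove the bound via the standard Chernoff-Cramér exponential moment method combined with Hoeffding's lemma applied conditionally at each step. Concretely, I would first establish the following auxiliary fact: for any $\lambda \in \mathbb{R}$ and any random variable $Y$ with $\mathbb{E}[Y \mid \calF] = 0$ and $|Y| \leq B$ almost surely, one has $\mathbb{E}[e^{\lambda Y} \mid \calF] \leq e^{\lambda^2 B^2 / 2}$. This is Hoeffding's lemma, proved by writing $Y$ as a convex combination $\frac{B+Y}{2B}\cdot B + \frac{B-Y}{2B}\cdot(-B)$, using convexity of $y \mapsto e^{\lambda y}$, and then showing via a one-variable calculus argument on $\varphi(u) = -pu + \ln(1-p+p e^u)$ that $\varphi(u) \leq u^2/8$, which after substitution yields the stated sub-Gaussian MGF bound with the constant $B^2/2$.

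Next I would iteratively apply this conditional bound to control the joint MGF of the partial sum. Let $S_n = \sum_{i=1}^n X_i$. Since $X_{1:n}$ is a martingale difference sequence adapted to $\{\calF_i\}$ with $|X_i|\leq B$, conditioning on $\calF_n$ and using the tower property gives
\begin{align*}
\mathbb{E}[e^{\lambda S_n}]
&= \mathbb{E}\!\left[e^{\lambda S_{n-1}}\, \mathbb{E}[e^{\lambda X_n}\mid \calF_n]\right]
\leq e^{\lambda^2 B^2/2}\,\mathbb{E}[e^{\lambda S_{n-1}}].
\end{align*}
Iterating $n$ times yields $\mathbb{E}[e^{\lambda S_n}] \leq e^{n\lambda^2 B^2/2}$.

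Then I would apply Markov's inequality in the standard Chernoff fashion: for any $\lambda > 0$ and $t > 0$,
\begin{align*}
\Pr(S_n \geq t) \leq e^{-\lambda t}\,\mathbb{E}[e^{\lambda S_n}] \leq \exp\!\left(-\lambda t + \tfrac{n\lambda^2 B^2}{2}\right).
\end{align*}
Minimizing the right-hand side over $\lambda$ at $\lambda^\star = t/(n B^2)$ gives $\Pr(S_n \geq t) \leq \exp(-t^2/(2n B^2))$. The identical argument applied to the martingale difference sequence $-X_{1:n}$ gives $\Pr(S_n \leq -t) \leq \exp(-t^2/(2n B^2))$, and a union bound yields $\Pr(|S_n| \geq t) \leq 2\exp(-t^2/(2n B^2))$. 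Setting this equal to $\delta$ and solving for $t$ gives $t = B\sqrt{2n\ln(2/\delta)}$, which is exactly the claimed bound.

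The only mildly technical step is Hoeffding's lemma itself (the $\varphi(u)\leq u^2/8$ estimate); everything else — the conditional MGF iteration, the Chernoff optimization, and the two-sided union bound — is mechanical. I do not expect any real obstacle, since the result is the textbook Azuma-Hoeffding inequality and the martingale difference hypothesis together with the uniform boundedness $|X_i|\leq B$ is exactly what Hoeffding's lemma requires at each conditioning step.
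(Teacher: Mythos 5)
Your proof is correct: the conditional Hoeffding's lemma gives $\E[e^{\lambda X_i}\mid \calF_i]\leq e^{\lambda^2B^2/2}$, the tower-property iteration, Chernoff optimization at $\lambda^\star=t/(nB^2)$, and the two-sided union bound yield $\Pr(|S_n|\geq t)\leq 2e^{-t^2/(2nB^2)}$, which solves to exactly $t=B\sqrt{2n\ln(2/\delta)}$ as stated. The paper states this lemma without proof as the classical Azuma--Hoeffding inequality, and your argument is precisely the standard derivation with the right constants, so there is nothing to compare beyond noting that your write-up supplies the textbook proof the paper omits.
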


\begin{lemma}{(A version of Freedman's inequality from~\citep{beygelzimer2011contextual})}
	\label{lem:freedman}
	Let $X_{1:n}$ be a martingale difference sequence and $X_i \leq B$ holds for $i=1,\ldots, n$ and some fixed $B > 0$. Denote $V=\sum_{i=1}^n\E_i[X_i^2]$.
	Then, for any $\lambda\in [0, 1/B]$, with probability at least $1-\delta$:
	\begin{align*}
		\sum_{i=1}^nX_i \leq \lambda V + \frac{\ln(1/\delta)}{\lambda}.
	\end{align*}
\end{lemma}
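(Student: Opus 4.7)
The plan is to prove this via the standard exponential supermartingale (Chernoff) method, treating $\lambda$ as a fixed parameter in $[0, 1/B]$ rather than optimizing it after the fact.

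First, I would introduce the partial sums $S_k = \sum_{i=1}^k X_i$ and the predictable quadratic variation $V_k = \sum_{i=1}^k \E_i[X_i^2]$, and define
\[
M_k = \exp\!\bigl(\lambda S_k - \lambda^2 V_k\bigr), \qquad M_0 = 1.
\]
The core step is to show that $(M_k)$ is a supermartingale with respect to $\{\calF_k\}$. Since $\lambda X_i \leq \lambda B \leq 1$, I would invoke the elementary inequality $e^y \leq 1 + y + y^2$ valid for $y \leq 1$, yielding
\[
\E_i\!\bigl[e^{\lambda X_i}\bigr] \leq 1 + \lambda \E_i[X_i] + \lambda^2 \E_i[X_i^2] = 1 + \lambda^2 \E_i[X_i^2] \leq \exp\!\bigl(\lambda^2 \E_i[X_i^2]\bigr),
\]
where I used $\E_i[X_i]=0$. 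Multiplying through by $\exp(\lambda S_{i-1} - \lambda^2 V_{i-1})$ and using that $V_i - V_{i-1} = \E_i[X_i^2]$ is $\calF_i$-measurable shows $\E_i[M_i] \leq M_{i-1}$, so by induction $\E[M_n] \leq 1$.

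With the supermartingale in hand, the tail bound is immediate from Markov's inequality. Observe that the event $\{S_n > \lambda V_n + \ln(1/\delta)/\lambda\}$ is precisely the event $\{M_n > 1/\delta\}$, so
\[
\Pr\!\Bigl[ S_n > \lambda V_n + \tfrac{\ln(1/\delta)}{\lambda} \Bigr] = \Pr\!\bigl[M_n > 1/\delta\bigr] \leq \delta \cdot \E[M_n] \leq \delta.
\]
The main subtlety—really the only one—is getting the signs and the $e^y \leq 1+y+y^2$ regime right so that the $V_n$ term ends up multiplied by $\lambda$ rather than $\lambda^2$ in the final statement; this is precisely what makes the two $\lambda^2 V_n$ contributions cancel when one rewrites the threshold as $\lambda V_n + \lambda^{-1}\ln(1/\delta)$. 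Since $V_n$ appears on both sides as the same random variable, no optimization over $\lambda$ or union bound over a grid of $\lambda$'s is required, which is what makes the inequality directly usable in the algorithmic analyses earlier in the paper.
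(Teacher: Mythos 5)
Your proof is correct. Note that the paper does not prove \pref{lem:freedman} at all — it is imported as a black box from \citep{beygelzimer2011contextual} — and your exponential-supermartingale argument is exactly the standard proof underlying that citation: the bound $e^y \leq 1+y+y^2$ for $y\leq 1$ (valid here since $\lambda X_i \leq \lambda B \leq 1$) gives $\E_i[e^{\lambda X_i}] \leq \exp(\lambda^2\E_i[X_i^2])$, so $M_k=\exp(\lambda S_k - \lambda^2 V_k)$ is a supermartingale and Markov's inequality finishes the job; the only cosmetic caveat is that the rewriting of the event as $\{M_n>1/\delta\}$ requires $\lambda>0$, the case $\lambda=0$ being vacuous since the right-hand side is infinite.
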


\begin{lemma}{(A version of anytime Bernstein's inequality from~\citep[Theorem D.3]{cohen2020near})}
	\label{lem:bernstein}
	Let $X_{1:n}$ be a sequence of i.i.d. random variables with expectation $\mu$. Assume $X_n\in [0, B]$ almost surely. Then with probability $1-\delta$, the following holds for all $n\geq 1$ simultaneously:
	\begin{align*}
		\left| \sum_{i=1}^n(X_i - \mu) \right| &\leq 2\sqrt{B\mu n\ln\frac{2n}{\delta}} + B\ln\frac{2n}{\delta}.\\
		\left| \sum_{i=1}^n(X_i - \mu) \right| &\leq 2\sqrt{B\sum_{i=1}^nX_i\ln\frac{2n}{\delta}} + 7B\ln\frac{2n}{\delta}.
	\end{align*}
\end{lemma}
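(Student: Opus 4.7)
The plan is a textbook union-bound-over-$n$ argument built on a fixed-$n$ Bernstein inequality, followed by a quadratic inversion to pass from variance in terms of $\mu$ to variance in terms of the empirical sum.

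\textbf{Step 1 (fixed $n$, variance-in-$\mu$ bound).} For any fixed $n$, recall the classical Bernstein inequality for bounded i.i.d.\ summands: with probability at least $1-\delta'$,
\[
\left|\sum_{i=1}^n(X_i-\mu)\right|\leq \sqrt{2n\V[X_1]\ln(2/\delta')}+\tfrac{B}{3}\ln(2/\delta').
\]
Since $X_1\in[0,B]$ gives $\V[X_1]\leq \E[X_1^2]\leq B\mu$, the right-hand side is at most $\sqrt{2Bn\mu\ln(2/\delta')}+\tfrac{B}{3}\ln(2/\delta')$.

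\textbf{Step 2 (anytime by union bound).} I choose $\delta'_n=\delta/(n(n+1))$ so that $\sum_{n\geq 1}\delta'_n=\delta$, and apply the fixed-$n$ bound at confidence $\delta'_n$. Then $\ln(2/\delta'_n)=\ln(2n(n+1)/\delta)\leq 2\ln(2n/\delta)$ for $n\geq 1$, so absorbing constants gives, simultaneously for all $n\geq 1$ with probability at least $1-\delta$,
\[
\left|\sum_{i=1}^n(X_i-\mu)\right|\leq 2\sqrt{Bn\mu\ln(2n/\delta)}+B\ln(2n/\delta),
\]
which is exactly the first stated inequality (the constant $2$ on the square-root term swallows the $\sqrt{2}\cdot\sqrt{2}$ factor from tightening $\ln(2/\delta'_n)$).

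\textbf{Step 3 (self-bounding to get the empirical-variance form).} Write $S_n=\sum_{i=1}^n X_i$. From Step 2, $n\mu\leq S_n+2\sqrt{Bn\mu\ln(2n/\delta)}+B\ln(2n/\delta)$; viewing this as a quadratic in $\sqrt{n\mu}$ and applying the elementary implication $x\leq a\sqrt{x}+b \Rightarrow x\leq (a+\sqrt{b})^2\leq 2a^2+2b$, I conclude $n\mu\leq 2S_n+O(B\ln(2n/\delta))$. Substituting this upper bound on $n\mu$ back into the square root of the first inequality, together with $\sqrt{x+y}\leq\sqrt{x}+\sqrt{y}$, yields
\[
\left|\sum_{i=1}^n(X_i-\mu)\right|\leq 2\sqrt{B\,S_n\ln(2n/\delta)}+7B\ln(2n/\delta),
\]
after collecting the constants $2,\sqrt{2},\sqrt{2}\cdot\tfrac{1}{3}$, etc., into the coefficient $7$. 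This is the second stated inequality.

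\textbf{Main obstacle.} Nothing here is conceptually subtle; the only care needed is (i) picking $\delta'_n$ so that $\sum_n \delta'_n\leq \delta$ while still keeping $\ln(1/\delta'_n)\leq \ln(2n/\delta)$ up to the absorbed constant, and (ii) tracking constants through the quadratic inversion in Step 3 so that the leading square-root coefficient remains $2$ and the lower-order coefficient is no worse than $7$. Alternatively, one can avoid the logarithmic slackness by a peeling argument over dyadic blocks $n\in[2^k,2^{k+1})$ with $\delta'_k=\delta/2^{k+1}$, which gives the same bound up to constants; I would pick whichever variant matches the constants in \citet{cohen2020near} most cleanly.
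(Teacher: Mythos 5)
First, note that the paper does not actually prove this lemma: it is imported verbatim as Theorem D.3 of \citet{cohen2020near}, so the only "proof" in the paper is the citation. Your route (fixed-$n$ Bernstein, union bound over $n$ with $\delta'_n=\delta/(n(n+1))$, then a quadratic inversion to replace $n\mu$ by the empirical sum) is the standard way such anytime bounds are established, and Steps 1--2 are fine: $\ln(2/\delta'_n)\le 2\ln(2n/\delta)$ for $\delta\le 1$, so the fixed-$n$ bound $\sqrt{2Bn\mu\ln(2/\delta'_n)}+\tfrac{B}{3}\ln(2/\delta'_n)$ becomes exactly $2\sqrt{B\mu n\ln(2n/\delta)}+\tfrac{2}{3}B\ln(2n/\delta)$, which gives the first inequality.

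Step 3, however, does not deliver the second inequality as stated. Writing $a=2\sqrt{B\ln(2n/\delta)}$ and $b=S_n+B\ln(2n/\delta)$, you weaken $n\mu\le(a+\sqrt{b})^2$ to $n\mu\le 2a^2+2b=2S_n+10B\ln(2n/\delta)$ and then substitute this \emph{inside} the square root of the first inequality. That produces a leading term $2\sqrt{B\,(2S_n)\ln(2n/\delta)}=2\sqrt{2}\,\sqrt{BS_n\ln(2n/\delta)}$ and an additive term of roughly $(2\sqrt{10}+1)B\ln(2n/\delta)$; the factor $\sqrt{2}$ multiplies the dominant $\sqrt{S_n}$ term and cannot be ``collected into the coefficient $7$'' as you claim, so you prove a strictly weaker statement than the lemma (whose constants $2$ and $7$ are the ones actually hard-wired into \pref{eq:optimistic_cost} and \pref{lem:cost-bound}). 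The fix stays within your own toolkit: from $n\mu\le a\sqrt{n\mu}+b$ conclude $\sqrt{n\mu}\le a+\sqrt{b}\le \sqrt{S_n}+3\sqrt{B\ln(2n/\delta)}$, and substitute this into the first bound, which is \emph{linear} in $\sqrt{n\mu}$: this gives $2\sqrt{B\ln(2n/\delta)}\bigl(\sqrt{S_n}+3\sqrt{B\ln(2n/\delta)}\bigr)+B\ln(2n/\delta)=2\sqrt{BS_n\ln(2n/\delta)}+7B\ln(2n/\delta)$, exactly as stated. With that one-line correction your argument is complete.
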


\begin{lemma}{(Strengthened Freedman's inequality from~\citep[Theorem 2.2]{lee2020bias})}
	\label{lem:extended-freedman}
	Let $X_{1:n}$ be a martingale difference sequence with respect to a filtration $\calF_1 \subseteq \cdots \subseteq \calF_n$ such that $\E[X_i |\calF_i] = 0$.
	Suppose $B_i \in [1,b]$ for a fixed constant $b$ is $\calF_i$-measurable and such that $X_i \leq B_i$ holds almost surely.
	Then with probability at least $1 -\delta$ we have 
	\[
	    \sum_{i=1}^n  X_i \leq  C\big(\sqrt{8V\ln\left(C/\delta\right)} + 2B^\star  \ln\left(C/\delta\right)\big),
	\]
	where $V = \max\big\{1, \sum_{i=1}^n \E[X_i^2 | \calF_i]\big\}$, $B^\star =\max_{i\in[n]} B_i$, and 
	$C = \ceil{\log_2(b)}\ceil{\log_2(nb^2)}$.
\end{lemma}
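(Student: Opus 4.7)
The plan is to prove this strengthened form of Freedman's inequality by a peeling (stratification) argument over dyadic scales of the two data-dependent quantities $B^\star$ and $V$, reducing each stratum to a single invocation of the classical Freedman bound (Lemma~\ref{lem:freedman}) and then taking a union bound. The technical subtlety is that $B^\star$ and $V$ are only revealed a posteriori, so one cannot invoke Freedman directly; but because each $B_i$ is $\calF_i$-measurable, we can legally truncate $X_i$ at any deterministic threshold without destroying the martingale structure.

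First, I would set up the grid. Since $B_i\in[1,b]$, the random quantity $B^\star$ lies in $[1,b]$ and is covered by $J:=\ceil{\log_2 b}$ dyadic intervals indexed by $j$ (interval $j$: $B^\star\in(2^{j-1},2^j]$). Similarly $V\in[1,nb^2]$ (the upper bound follows from $X_i^2\le B_i^2\le b^2$) is covered by $K':=\ceil{\log_2(nb^2)}$ intervals indexed by $k$. The total number of strata is $C=JK'$. For each $j$ I introduce the truncated and centered sequence
\[
Z_i^{(j)} \;:=\; X_i\,\Ind\{B_i\le 2^j\} \;-\; \E\!\big[X_i\,\Ind\{B_i\le 2^j\}\,\big|\,\calF_i\big].
\]
Since $B_i\in\calF_i$, the indicator is predictable, so $(Z_i^{(j)})$ is a genuine martingale difference sequence with $Z_i^{(j)}\le 2\cdot 2^j$ almost surely and $\E[(Z_i^{(j)})^2|\calF_i]\le \E[X_i^2|\calF_i]$. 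In particular $V^{(j)}:=\sum_i\E[(Z_i^{(j)})^2|\calF_i]\le V$.

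Next, for each pair $(j,k)$ I would apply Lemma~\ref{lem:freedman} to $(Z_i^{(j)})$ with the choice $\lambda_{j,k}:=\min\{(2\cdot 2^j)^{-1},\,\sqrt{\ln(C/\delta)/2^k}\}$, obtaining with probability at least $1-\delta/C$ the one-sided bound $\sum_i Z_i^{(j)}\le \lambda_{j,k}V^{(j)}+\ln(C/\delta)/\lambda_{j,k}$. Restricted to the event $\{V^{(j)}\le 2^k\}$, the right-hand side is at most $2\sqrt{2^k\ln(C/\delta)}+4\cdot 2^j\ln(C/\delta)$. A union bound over the $C$ strata then makes every such inequality simultaneous with probability at least $1-\delta$.

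To finish, let $j^\star,k^\star$ be the minimal indices with $B^\star\le 2^{j^\star}$ and $V\le 2^{k^\star}$, so that $2^{j^\star}\le 2B^\star$ and $2^{k^\star}\le 2V$. On this event every $B_i\le 2^{j^\star}$, so each indicator equals $1$ and the centering term vanishes using $\E[X_i|\calF_i]=0$; hence $\sum_i X_i=\sum_i Z_i^{(j^\star)}$. Substituting $(j^\star,k^\star)$ into the bound above yields $\sum_i X_i \le 2\sqrt{2V\ln(C/\delta)}+8B^\star\ln(C/\delta)$, which matches the claimed form up to absolute constants (and can be reabsorbed into the stated $C(\sqrt{8V\ln(C/\delta)}+2B^\star\ln(C/\delta))$ expression by conservative bounding).

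The main obstacle—and the point where the hypothesis that each $B_i$ is $\calF_i$-measurable is indispensable—is Step~2: one must verify that the truncated-and-centered sequence is still a martingale difference sequence, so that classical Freedman legitimately applies. Without predictability of $B_i$, the indicator $\Ind\{B_i\le 2^j\}$ would not be $\calF_i$-measurable, conditional centering would fail to zero out $\E[Z_i^{(j)}|\calF_i]$, and the peeling scheme would collapse. Everything else (choosing the grid, optimizing $\lambda_{j,k}$, and the union bound) is routine once the reduction is in place.
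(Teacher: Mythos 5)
You should first note that the paper itself does not prove this lemma at all: it is imported verbatim from \citet{lee2020bias}, so there is no internal proof to compare against. Your reconstruction is the standard (and, up to presentation, the intended) route: truncate by the \emph{predictable} thresholds $\Ind\{B_i\le 2^j\}$ (this is exactly where $\calF_i$-measurability of $B_i$ is used, and since $\E[X_i|\calF_i]=0$ your centering term is in fact identically zero, so $Z_i^{(j)}=X_i\Ind\{B_i\le 2^j\}$ is already a martingale difference sequence bounded by $2^j$ with conditional second moments dominated by those of $X_i$), apply the basic Freedman bound (\pref{lem:freedman}) once per dyadic stratum with $\lambda_{j,k}$ tuned to that stratum, union bound over the $\lceil\log_2 b\rceil\cdot\lceil\log_2(nb^2)\rceil=C$ strata, and finally select the stratum containing the realized $(B^\star,V)$; the grid sizes you choose reproduce the constant $C$ in the statement, and the constant slop you incur is absorbed by the prefactor $C$.

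There is, however, one concrete unjustified step: you assert $V\le nb^2$ ``since $X_i^2\le B_i^2\le b^2$,'' but the hypothesis is only the \emph{one-sided} bound $X_i\le B_i$; nothing bounds $X_i$ from below, so $\E[X_i^2|\calF_i]$ can exceed $b^2$ (e.g.\ $X_i=b$ with probability $1-p$ and $X_i=-b(1-p)/p$ with probability $p$, $p$ small) and $V$ can exceed $nb^2$. This matters because your grid over $V$ has only $\lceil\log_2(nb^2)\rceil$ cells: if $V>nb^2$, the minimal $k^\star$ with $V\le 2^{k^\star}$ falls outside the grid, no prepared stratum covers the realized pair $(j^\star,k^\star)$, and the argument as written yields nothing (one also cannot simply fall back on the top cell $k=\lceil\log_2(nb^2)\rceil$, since $\lambda_{j,k_{\max}}V$ is then not $O(\sqrt{V\ln(C/\delta)})$, nor is the claimed bound trivially true from $\sum_i X_i\le nb$ in that regime). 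So either you need the additional assumption $|X_i|\le B_i$ (or, more generally, $\E[X_i^2|\calF_i]\le b^2$, which does hold in every application of \pref{lem:extended-freedman} in this paper, where the estimators are also bounded below), or the overflow case $V>nb^2$ must be handled by a separate argument; as a proof of the lemma exactly as stated, this coverage step is the genuine gap, while everything else in your peeling scheme is sound.
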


\begin{lemma}
	\label{lem:e2r} 
	Let $\{X_i\}_{i=1}^\infty$ be a sequence of random variables adapted to the filtration $\{\calF_i\}_{i=1}^{\infty}$, and $0\leq X_i\leq B$ almost surely.
	Then with probability at least $1-\delta$, for all $n\geq 1$ simultaneously:
	\begin{align*}
		\sum_{i=1}^n\E_i[X_i] \leq 2\sum_{i=1}^nX_i + 4B\ln\frac{2n}{\delta}.
	\end{align*}
\end{lemma}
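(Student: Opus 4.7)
The plan is to apply Freedman's inequality (\pref{lem:freedman}) to the martingale difference sequence $Y_i \defeq \E_i[X_i] - X_i$. Since $X_i \in [0,B]$, we have $Y_i \leq \E_i[X_i] \leq B$ almost surely, so the boundedness hypothesis of Freedman holds with constant $B$. The key observation that makes the bound self-bounding is that the conditional second moment can be controlled by the conditional mean itself:
\[
\E_i[Y_i^2] \;=\; \E_i[X_i^2] - (\E_i[X_i])^2 \;\leq\; \E_i[X_i^2] \;\leq\; B\,\E_i[X_i],
\]
using $X_i \leq B$ for the last step. Thus $\sum_{i=1}^n \E_i[Y_i^2] \leq B \sum_{i=1}^n \E_i[X_i]$.

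For a fixed $n$, I would then invoke \pref{lem:freedman} with the choice $\lambda = 1/(2B) \in [0, 1/B]$: with probability at least $1-\delta'$,
\[
\sum_{i=1}^n Y_i \;\leq\; \frac{1}{2B}\sum_{i=1}^n \E_i[Y_i^2] + 2B\ln(1/\delta') \;\leq\; \tfrac{1}{2}\sum_{i=1}^n \E_i[X_i] + 2B\ln(1/\delta').
\]
Substituting the definition of $Y_i$ and rearranging moves half of $\sum \E_i[X_i]$ to the left, yielding $\sum_{i=1}^n \E_i[X_i] \leq 2\sum_{i=1}^n X_i + 4B\ln(1/\delta')$, which is the desired inequality at the fixed horizon $n$.

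To promote this to a statement that holds for all $n \geq 1$ simultaneously, I would apply the fixed-$n$ bound with confidence parameter $\delta_n = \delta/(2n^2)$ and take a union bound over $n$; since $\sum_{n\geq 1} \delta_n \leq \delta$, the total failure probability is at most $\delta$. With this choice, $\ln(1/\delta_n) = \ln(2n^2/\delta) \leq 2\ln(2n/\delta)$ (since $\delta \leq 1$), so the extra logarithmic factor is absorbed into the stated $4B\ln(2n/\delta)$ bound up to a harmless constant that can be hidden in the already generous numerical constant.

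The argument is essentially routine once the self-bounding variance estimate is in place; the only mild subtlety is the anytime union bound, and the main obstacle (if any) is merely verifying that the constants in the $\ln(2n/\delta)$ term are large enough after the $\delta_n \propto 1/n^2$ weighting. An alternative would be to invoke a time-uniform (anytime) version of Freedman directly, but the weighted union bound over $n$ keeps the proof self-contained within the lemmas already introduced in \pref{app:concentration}.
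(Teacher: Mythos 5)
Your route — apply Freedman's inequality (\pref{lem:freedman}) to $Y_i=\E_i[X_i]-X_i$, use the self-bounding variance estimate $\E_i[Y_i^2]\leq B\,\E_i[X_i]$, take $\lambda=1/(2B)$, rearrange, and union bound over $n$ — is the natural argument here, and the paper itself states \pref{lem:e2r} without proof among its standard concentration tools, so there is nothing to contrast it with; the fixed-$n$ part of your proof is correct as written.

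The one genuine issue is the final constant. With $\delta_n=\delta/(2n^2)$ the fixed-$n$ bound gives $\sum_{i=1}^n\E_i[X_i]\leq 2\sum_{i=1}^n X_i+4B\ln(2n^2/\delta)$, and your claim that the extra $\ln n$ can be "hidden in the already generous numerical constant" does not hold: the choice $\lambda=1/(2B)$ produces the coefficient $4B$ exactly, with no slack, so what you actually prove is $\sum_i\E_i[X_i]\leq 2\sum_i X_i+4B\ln(2n/\delta)+4B\ln n$, equivalently the lemma with $8B\ln(2n/\delta)$ in place of $4B\ln(2n/\delta)$. This weaker form suffices for every use of \pref{lem:e2r} in the paper (it only ever enters $\tilO{\cdot}$ bounds), but it is not literally the stated inequality. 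If you want the statement as written (or better), replace the union bound by a time-uniform argument: since $X_i/B\in[0,1]$, one has $\E_i\bigl[\exp\bigl(\tfrac{1}{2B}\E_i[X_i]-\tfrac{1}{B}X_i\bigr)\bigr]\leq 1$, so $M_n=\exp\bigl(\tfrac{1}{2B}\sum_{i\leq n}\E_i[X_i]-\tfrac{1}{B}\sum_{i\leq n}X_i\bigr)$ is a nonnegative supermartingale, and Ville's maximal inequality yields $\sum_{i\leq n}\E_i[X_i]\leq 2\sum_{i\leq n}X_i+2B\ln(1/\delta)$ for all $n$ simultaneously, which is strictly stronger than the lemma. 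Either patch (accepting the constant $8$, or the supermartingale argument) closes the gap.
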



\end{document}